\newcommand{\abs}[1]{\left| #1\right|}
\providecommand{\norm}[1]{\left\lVert#1\right\rVert}
\newcommand{\br}[1]{\left\{#1\right\}}
\newcommand{\REAL}{\ensuremath{\mathbb{R}}}
\newcommand{\eps}{\varepsilon}
\newcommand{\dist}{\mathrm{dist}}
\newcommand{\range}{\mathrm{range}}
\newcommand{\ranges}{\mathrm{ranges}}
\newcommand{\lip}{\mathrm{lip}}
\newcommand{\cost}{\mathrm{cost}}
\newcommand{\proj}{\mathrm{proj}}
\newcommand{\algname}{\textsc{Align}}
\newcommand{\matchingalgname}{\textsc{Align+Match}}
\newcommand{\secondalgname}{\textsc{Z-Configs}}
\newcommand{\fastalgname}{\textsc{Fast-Approx-Alignment }}
\newcommand{\LMSalg}{\textsc{LMS }}
\newcommand{\ransacalg}{\textsc{RANSAC }}
\newcommand{\alignments}{\textsc{Alignments}}
\newcommand{\perms}{\mathrm{Perms}}
\renewcommand{\paragraph}[1]{\medskip\noindent\textbf{{#1} }}
\newcommand{\vertiii}[1]{{\left\vert\kern-0.25ex\left\vert\kern-0.25ex\left\vert #1
    \right\vert\kern-0.25ex\right\vert\kern-0.25ex\right\vert}}
\newtheorem{theorem}{Theorem}
\newtheorem{lemma}[theorem]{Lemma}
\newtheorem{definition}[theorem]{Definition}
\newtheorem{corollary}[theorem]{Corollary}
\newtheorem{observation}[theorem]{Observation}
\newtheorem{claim}[theorem]{Claim}
\DeclareMathOperator*{\argmin}{arg\,min}
\DeclareMathOperator*{\arginf}{arg\,inf}
\begin{document}
%
\title{Aligning Points to Lines:\\Provable Approximations }
%
%
%
%

\author{Ibrahim Jubran,
        Dan Feldman
\IEEEcompsocitemizethanks{\IEEEcompsocthanksitem I. Jubran and D. Feldman are with the Robotics \& Big Data Lab, Computer Science Department, University of Haifa, Israel.\protect\\}
}

\IEEEtitleabstractindextext{%
\begin{abstract}
We suggest a new optimization technique for minimizing the sum $\sum_{i=1}^n f_i(x)$ of $n$ non-convex real functions that satisfy a property that we call piecewise log-Lipschitz. This is by forging links between techniques in computational geometry, combinatorics and convex optimization. As an example application, we provide the first constant-factor approximation algorithms whose running-time is polynomial in $n$ for the fundamental problem of \emph{Points-to-Lines alignment}: Given $n$ points $p_1,\cdots,p_n$ and $n$ lines $\ell_1,\cdots,\ell_n$ on the plane and $z>0$, compute the matching $\pi:[n]\to[n]$ and alignment (rotation matrix $R$ and a translation vector $t$) that minimize the sum of Euclidean distances $\sum_{i=1}^n \mathrm{dist}(Rp_i-t,\ell_{\pi(i)})^z$ between each point to its corresponding line.

This problem is non-trivial even if $z=1$ and the matching $\pi$ is given. If $\pi$ is given, the running time of our algorithms is $O(n^3)$, and even near-linear in $n$ using core-sets that support: streaming, dynamic, and distributed parallel computations in poly-logarithmic update time. Generalizations for handling e.g. outliers or pseudo-distances such as $M$-estimators for the problem are also provided.

Experimental results and open source code show that our provable algorithms improve existing heuristics also in practice.
A companion demonstration video in the context of Augmented Reality shows how such algorithms may be used in real-time systems.
\end{abstract}

\begin{IEEEkeywords}
Approximation Algorithms, Non-Convex Optimization, Visual Tracking, Points-to-Lines Alignment, Coresets
\end{IEEEkeywords}}

\maketitle

\section{Introduction\label{sec:probState}}

We define below the general problem of minimizing sum of piecewise log-Lipschitz functions, and then suggest an example application.

\paragraph{Minimizing sum of piecewise log-Lipschitz functions. }
We consider the problem of minimizing the sum $\sum_{g\in G}g(x)$ over $x\in\REAL^d$ of a set $G$ of $n$ real non-negative functions that may not be convex but satisfy a property called the piecewise log-Lipschitz property as in Definition~\ref{def:PieceLip}.

More generally, we wish to minimize the cost function $f(g_1(x),\cdots,g_n(x))$ where $f:\REAL^n\to [0,\infty)$ is a log-Lipschitz function as in Definition~\ref{def:lip}, and $\br{g_1,\cdots,g_n}$ is the set of piecewise log-Lipschitz functions in $G$.

As an application, we reduce the following problem to minimizing such a set $G$ of functions.

\begin{figure}[ht]
	\centering
    \includegraphics[width=0.5\textwidth,scale=0.5]{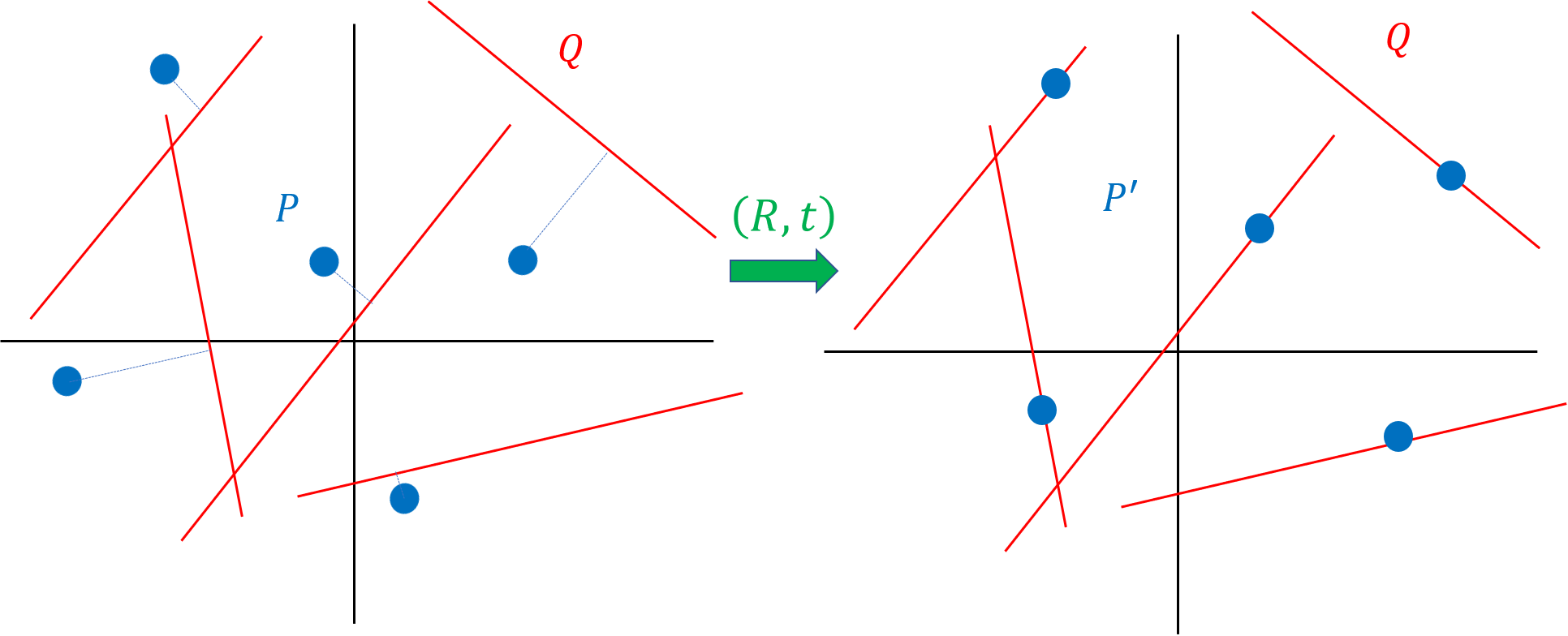}
    \caption{\textbf{Alignment of points to lines.} Finding the optimal alignment $(R,t)$ that aligns the set of points $P$ to the set $L$ of lines. The set $P'$ is the result of applying the alignment $(R,t)$ to the initial set $P$.}
    \label{fig:AligningPointsToLines}
\end{figure}

\paragraph{Points-to-Lines alignment } is a fundamental problem in computational geometry, and is a direct generalization of the widely known problem of point set registration, where one is required to align to sets of points; see~\cite{tam2013registration} and references therein. The alignment of points to lines is also known as the Perspective-n-Point (PnP) problem from computer vision, which was first introduced in~\cite{fischler1981random}, which aims to compute the position of an object (formally, a rigid body) based on its position as detected in a 2D-camera. Here, we assume that the structure of the object (its 3D model) is known. This problem is equivalent to the problem of estimating the position of a moving camera, based on a captured 2D image of a known object, which is strongly related to the very common procedure of ``camera calibration" that is used to estimate the external parameters of a camera using a chessboard.
More example applications include object tracking and localization, localization using sky patterns and many more; see Fig.~\ref{fig:AligningPointsToLines},~\ref{fig:AligningFrame} and~\ref{fig:UrsaMajor}

Formally, the input to these problems is an ordered set $L=\br{\ell_1,\cdots,\ell_n}$ of $n$ lines that intersect at the origin and an ordered set $P=\br{p_1,\cdots,p_n}$ of $n$ points, both in $\REAL^3$. Each line represents a point in the 2D image.
The output is a pair $(R,t)$ of rotation matrix $R\in \REAL^{3\times 3}$ and translation vector $t\in \REAL^3$ that minimizes the sum of Euclidean distances over each point (column vector) and its corresponding line, i.e.,
\begin{equation}\label{min}
\min_{(R,t)}\sum_{i=1}^n \dist(Rp_i-t,\ell_i),
\end{equation}
where the minimum is over every rotation matrix $R\in\REAL^{3\times 3}$ and translation vector $t\in\REAL^3$.
Here, $\dist(x,y)=\norm{x-y}_2$ is the Euclidean distance but in practice we may wish to use non-Euclidean distances, such as distances from a point to the intersection of its corresponding line with the camera's image plane.

While dozens of heuristics were suggested over the recent decades, to our knowledge, this problem is open even when the points and lines are on the plane, e.g. when we wish to align a set of GPS points to a map of lines (say, highways).

We tackle a variant of this problem, when both the points and lines are in $\REAL^2$, and the lines do not necessarily intersect at the origin.

\begin{figure*}[h!]
	\centering
    \includegraphics[width=0.9\textwidth]{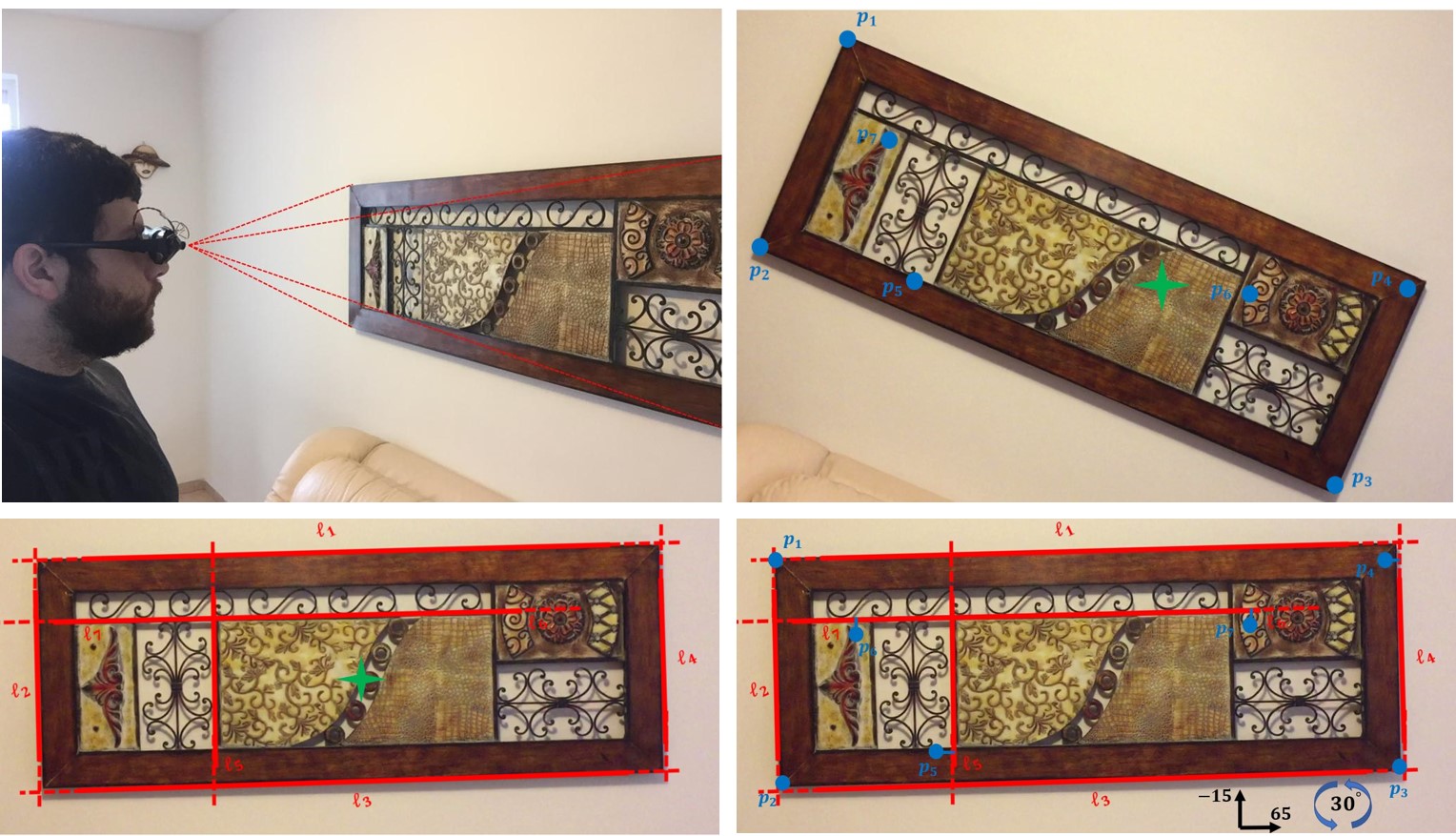}
    \caption{(Top left) A potential AR application. A user observing a scene through AR glasses. The goal is to present virtual objects to the user by placing them on top of the currently observed frame.
    Object tracking and localization for the AR glasses can be done by aligning the points in the observed image (top right) to the predefined lines of the object (bottom left). The aligned sets are presented in the bottom right image. The images were taken from our companion supplementary material video~\cite{DemoVideo}.}
    \label{fig:AligningFrame}
\end{figure*}

\subsection{Generalizations}\label{gen}
We consider more generalizations of the above problem as follows.

\paragraph{Unknown matching }is the case where the matching $\pi:[n]\to[n]$ between the $i$th point $p_i$ to its line $\ell_i$ in the PnP problem~\eqref{min} is not given for every $i\in\br{1,\cdots,n}=[n]$. E.g., in the PnP problem there are usually no labels in the observed images. Instead, the minimization is over every pair of rotation and translation $(R,t)$ \emph{and} matching (bijective function) $\pi:[n]\to [n]$,
\begin{equation}\label{min2}
\min_{(R,t), \pi} \dist(Rp_i-t,\ell_{\pi(i)}),
\end{equation}
where $\pi(i)$ is the index of the line corresponding to the $i$th point $p_i$.

\paragraph{Non-distance functions }where for an error vector $v=(v_1,\cdots,v_n)$ that corresponds to the $n$ input point-line pairs given some candidate solution, the cost that we wish to minimize over all these solutions is $f(v)$ where $f:\REAL^n\to [0,\infty)$ is a log-Lipschitz function as in Definition~\ref{def:lip}. Special cases include $f(v)=\norm{v}_\infty$ for ``worst case" error, $f(v)=\norm{v}_{1/2}$ which is more robust to noisy data, or $f(v)=\norm{v}^2$ of squared distances for maximizing likelihood in the case of Gaussian noise. As in the latter two cases, the function $f$ may not be a distance function.

\paragraph{Robustness to outliers }may be obtained by defining $f$ to be a function that ignores or at least puts less weight on very large distances, maybe based on some given threshold. Such a function $f$ is called an \emph{M-estimator} and is usually a non-convex function. M-estimators are in general robust statistics functions, such as the well known Huber loss function; see~\cite{huber2011robust}.

\paragraph{Coreset }in this paper refers to a small representation of the paired input point-line sets $P$ and $L$ by a weighted (scaled) subset. The approximation is $(1+\eps)$ multiplicative factor, with respect to the cost of any item (query). E.g. in~\eqref{min} the item (query) is a pair $(R,t)$ of rotation matrix $R$ and translation vector $t$. \emph{Composable coresets} for this problem have the property that they can be merged and re-reduced; see e.g.~\cite{AHV04,feldman2015more}.

Our main motivation for using existings coresets is (i) to reduce the running time of our algorithms from polynomial to near-linear in $n$, and (ii)  handle big data computation models as follows, which is straightforward using composable coresets.

\paragraph{Handling big data } in our paper refers to the following computation models:
 (i) \textbf{Streaming }support for a single pass over possibly unbounded stream of items in $A$ using memory and update time that is sub-linear (usually poly-logarithmic) in $n$. (ii) \textbf{Parallel computations }on distributed data that is streamed to $M$ machines where the running time is reduced by $M$ and the communication between the machines to the server should be also sub-linear in its input size $n/M$. (iii) \textbf{Dynamic data }which includes deletion of pairs. Here $O(n)$ memory is necessary, but we still desire sub-linear update time.

\begin{figure*}[hbtp]
\centering
	\begin{subfigure}[t]{0.3\textwidth}
    \centering
    \includegraphics[width=0.99\textwidth]{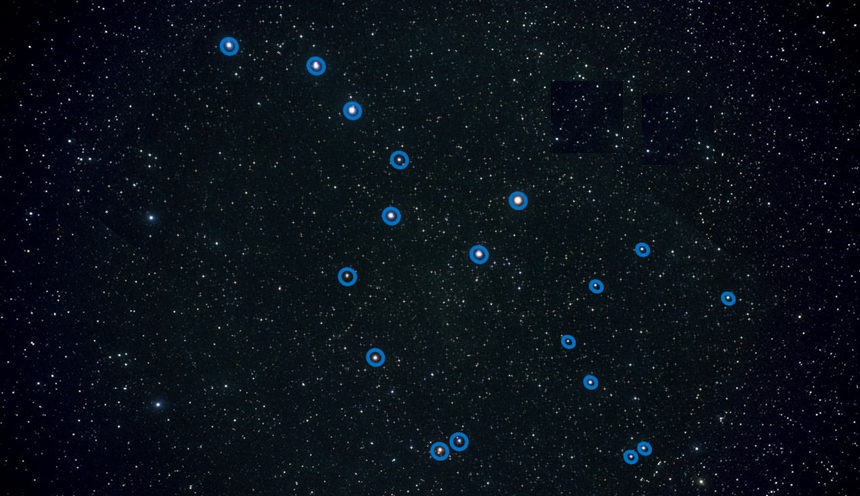}
    \caption{An observed image of the stars in Ursa Major. Each star represents a point and is marked with a blue circle.}
    \vspace{0.2cm}
    \label{fig:UrsaMajorPoints}
    \end{subfigure}
    \hfill
    \begin{subfigure}[t]{0.3\textwidth}
    \centering
    \includegraphics[width=0.99\textwidth]{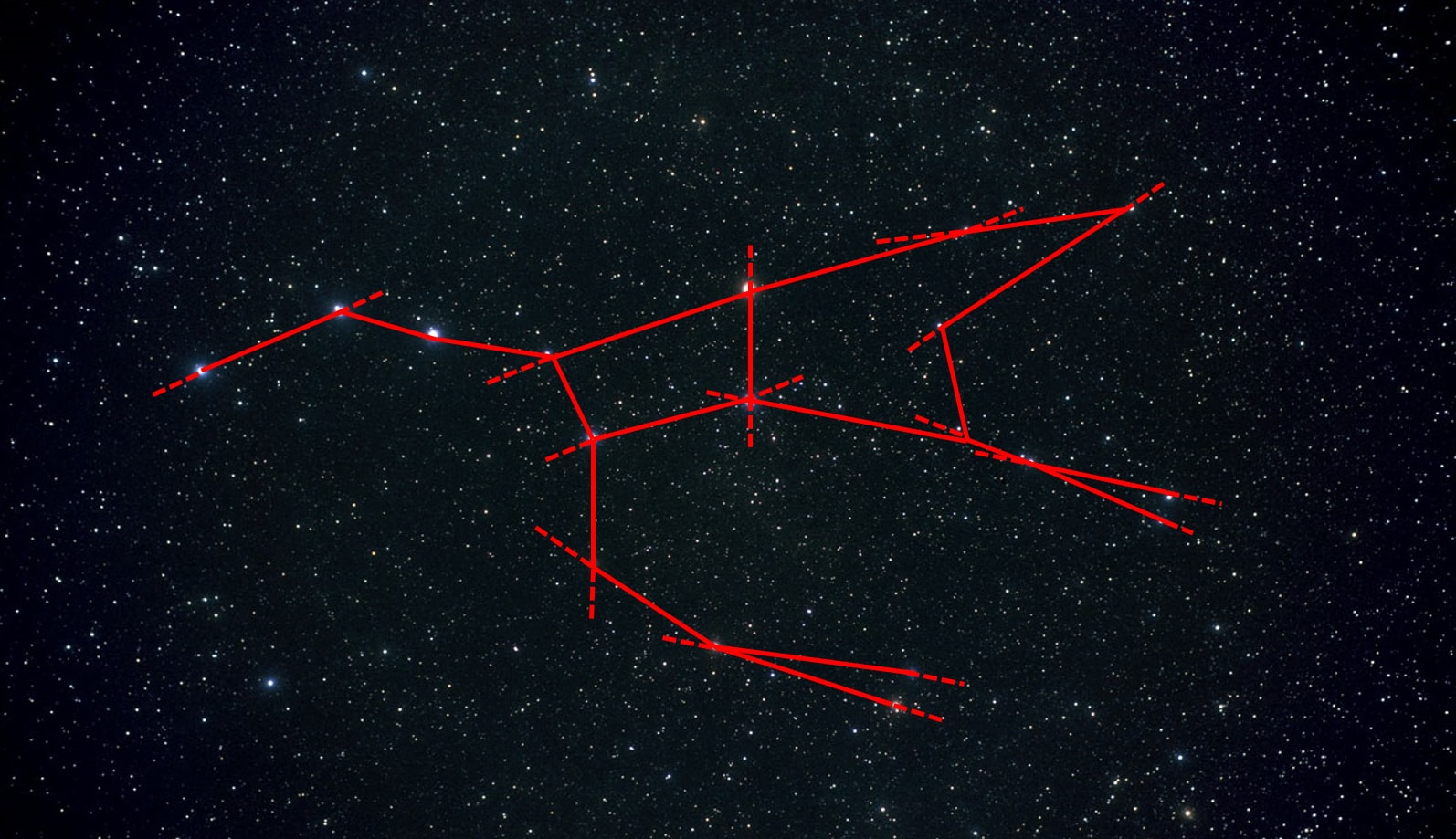}
    \caption{A known (initial) model, represented as a set of lines.}
    \label{fig:UrsaMajorLines}
    \end{subfigure}
    \hfill
    \begin{subfigure}[t]{0.3\textwidth}
    \centering
    \includegraphics[width=0.997\textwidth]{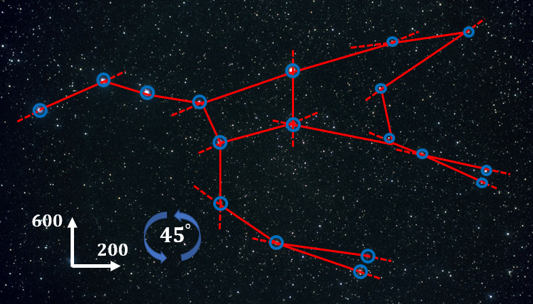}
    \caption{The alignment of the sets of points to the set of lines.}
    \label{fig:UrsaMajorLines}
    \end{subfigure}
    \vspace{-0.5cm}
    \caption{\textbf{Localization using sky patterns.}} 
    \label{fig:UrsaMajor}
\end{figure*}

\subsection{Our contribution} \label{sec:ourContrib}
The contribution of this paper is summarized as follows.

\paragraph{Optimization of piecewise log-Lipschitz functions.} Given $x^*\in\REAL$ and piecewise $O(1)$-log-Lipschitz functions $g_1,\cdots,g_n$, we prove that one of their minima $x'\in\REAL$ approximates their value $g_i(x)$ simultaneously (for every $i\in [n]$), up to a constant factor. See Theorem~\ref{Lem:piecewiseApprox}. This yields a finite set of candidate solutions (called \emph{centroid set}) that contains an approximated solution, without knowing $x^*$.

\paragraph{Generic framework }for defining a cost function $\cost(A,q)$ for any finite \emph{input subset} $A=\br{a_1,\cdots,a_n}$ of a set $X$ called \emph{ground set}, and an item $q$ (called \emph{query}) from a (usually infinite) set $Q$. We show that this framework enables handling the generalizations in Section~\ref{gen} such as outliers, M-estimators and non-distance functions. Formally, we define
\begin{equation}\label{cost}
\cost(A,q):=f\left(\lip\left(D(a_1,q)\right),\cdots,\lip\left(D(a_n,q)\right)\right),
\end{equation}
where $f:[0,\infty)^n \to [0,\infty)$ and $\lip:[0,\infty)\to[0,\infty)$ are $O(1)$-log-Lipschitz functions, and $D:X\times Q \to [0,\infty)$; See Definition~\ref{def:cost}.

We use this framework along with the first result to compute $q'\in Q$ that approximates $D(a_i,q^*)$ simultaneously  (for every $i\in[n]$), where $q^*$ is the query that minimizes $\cost(A,q)$ over every $q\in Q$. Observation~\ref{obs:distToCost} proves that $\cost(A,q')$ is the desired approximation for the optimal solution $\cost(A,q^*)$ in our framework.

\paragraph{Simultaneous optimization and matching }may be required for the special case that $A=\br{(a_i,b_i)}_{i=1}^n$ is a set of pairs, and we wish to compute the permutation $\pi^*$ and query $q^*$ that minimize $\cost(A_{\pi},q)$ over every $\pi^*:[n]\to[n]$ and $q^*\in Q$, where $A_{\pi}=\br{(a_i,b_{\pi(i)})}_{i=1}^n$ is the corresponding permutation of the pairs. We provide constant factor approximations for the case $f=\norm{\cdot}_1$ in~\eqref{cost}; See Theorem~\ref{theorem:pointsToLinesNoMatching}.

\paragraph{Approximated Points-to-Lines alignment }as defined in Section~\ref{sec:probState} is the main example application for our framework,, where $A=\br{(p_i,\ell_i)}_{i=1}^n$ is a set of point-line pairs $(p_i,\ell_i)$, both on the plane, $Q=\alignments$ is the union over every rotation matrix $R$ and a translation vector $t$, $D\left((p_i,\ell_i),(R,t)\right)= \min_{x\in \ell_i} \norm{Rp_i-t-x}$ for every $(R,t)\in Q$ and $i\in [n]$. We provide the first constant factor approximation for the optimal solution $\min_{(R,t)\in Q}\cost(A,(R,t))$ that takes time polynomial in $n$; see Theorem~\ref{theorem:costPointsToLines}. Using existing coresets we reduce this running time to near linear in $n$; see Theorem~\ref{theorem:coreset}. Such a solution can be computed for every cost function as defined in~\eqref{cost}. We also solve this problem for the simultaneous optimization and matching scenario for $f=\norm{\cdot}_1$ in time polynomial in $n$; See Theorem~\ref{theorem:pointsToLinesNoMatching}.

\begin{table*}[h!]
\begin{adjustbox}{width=1\textwidth}
\begin{tabular}{| *{9}{c|} }
\hline
Function Name & $f(v):=$ & $\lip(x):=$ & $z$ & $r$ & \makecell{Time} & \makecell{Approx. factor} & \makecell{Theorem} & Related Work\\
\hline
Squared Euclidean distances & $\norm{v}_1$ & $x^2$ & $2$ & $2$ & $O(n^3)$ & $16^2$ & \ref{theorem:costPointsToLines} & See Section~\ref{sec:RelatedWork} \\
\hline
\makecell{Euclidean distances\\without using coreset} & $\norm{v}_1$ & $x$ & $2$ & $1$ & $O(n^3)$ & $16$ & \ref{theorem:costPointsToLines} & None\\
\hline
\makecell{Euclidean distances\\using coreset} & $\norm{v}_1$ & $x$ & $2$ & $1$ & $O(n\log{n})$& $(1+\varepsilon)16$ & \ref{theorem:coreset} & None\\
\hline
$\ell_z^r$ distances & $\norm{v}_1$ & $x^r$ & $z \geq 1$ & $r \geq 1$ & $O(n^3)$ & $(\sqrt{2} \cdot 16)^r$ & \ref{theorem:costPointsToLines} & None\\
\hline
Euclidean distances, $k$ outliers & $\norm{v}_{1,n-k}$ & $x$ & $z \geq 1$ & $1$ & $O(n^3)$ & $16$ & \ref{theorem:costPointsToLines} & RANSAC~\cite{derpanis2010overview}\\
\hline
$\ell_2$ distance, unknown matching $\star$& $\norm{v}_1$ & $x$ & $2$ & $1$ & $O(n^9)$ & $16$ & \ref{theorem:pointsToLinesNoMatching} & ICP~\cite{besl1992method}\\
\hline
$\ell_z^r$ distances, unknown matching $\star$ & $\norm{v}_1$ & $x^r$ & $z\geq 1$ & $r \geq 1$ & $O(n^9)$ & $(\sqrt{2}\cdot 16)^r$ & \ref{theorem:pointsToLinesNoMatching} & None \\
\hline
\end{tabular}
\end{adjustbox}
\caption{\textbf{Main results of this paper for the problem of aligning points-to-lines.} Let $P$ be a set of $n$ points and $L$ be a set of $n$ lines, both in $\REAL^2$. Let $\pi:[n]\to [n]$ be a matching function and let $A_{\pi} = \br{(p_1,\ell_{\pi(1)}),\cdots,(p_n,\ell_{\pi(n)})}$. In this table we assume $x\in [0,\infty)$, $v \in [0,\infty)^n$ and $\varepsilon > 0$. For every point-line pair $(p,\ell) \in A$ and $(R,t)\in \alignments$, let $D_z\left((p,\ell),(R,t)\right) = \min_{x'\in \ell} \norm{Rp-t-x'}_z$ for $z\geq 1$. Let $\cost$ be as defined in Definition~\ref{def:cost}, for $D = D_z$, $A = A_{\pi}$ and $s=1$. The approximation factor is relative to the minimal value of the $\cost$ function.
Rows marked with a $\star$ have that the minimum of the function $\cost$ is computed over every $(R,t)\in \alignments$ and matching function $\pi$.
The computation time given is the time it takes to compute a set of candidate alignments. Hence, an additional simple exhaustive search might be needed.}
\label{table:ourContrib}
\end{table*}

\paragraph{Streaming and distributed }computation for the problem of aligning points-to-lines is easy by applying our algorithm on existing coresets, as
suggested in Corollary~\ref{cor:streamingCoreset}.


\paragraph{Experimental Results }show that our algorithms perform better also in practice, compared to both existing heuristics and provable algorithms for related problems. A system for head tracking in the context of augmented reality shows that our algorithms can be applied in real-time using sampling and coresets. Existing solutions are either too slow or provide unstable images, as is demonstrated in the companion video~\cite{DemoVideo}.

\section{Related Work} \label{sec:RelatedWork}
For the easier case of summing convex function, a similar framework was suggested in~\cite{FL11,feldman2007bi}. However, for the case of summing non-convex functions as in this paper, each with more than one local minima, these techniques do not hold. This is why we had to use more involved algorithms in Section~\ref{sec:pointsToLines}. Moreover, our generic framework such as handling outliers and matching can be applied also for the works in~\cite{FL11,feldman2007bi}.


Summing non-convex but polynomial or rational functions was suggested in~\cite{vigneron2014geometric}. This is by using tools from algebraic geometry such as semi-algebraic sets and their decompositions.
While there are many techniques and solvers that can solve sets of polynomials of constant rational degree $z$~\cite{chazelle1991singly,grigor1988solving,frank1956algorithm}, in this paper we show how to handle e.g. the case $z=\infty$ (max distance to the model), and non-polynomial error functions that are robust to outliers such as M-estimators and a wide range of functions, as defined in~\eqref{cost}. Our experimental results show that even for the case of constant $z$, our algorithm is much faster due to the large constants that are hidden in the $O()$ notation of those techniques compared to our running time.
For high degree polynomials, the known solvers may be used to compute the minima in Theorem~\ref{Lem:piecewiseApprox}. In this sense, piecewise log-Lipschitz functions can be considered as generalizations of such functions, and our framework may be used to extend them for the generalizations in Section~\ref{gen} (outliers, matching, etc.).

\paragraph{Aligning points to lines. }The problem for aligning a set of points to a set of lines in the plane is natural e.g. in the context of GPS points~\cite{DBLP:conf/iros/SungFR12,DBLP:conf/icra/PaulFRN14}, localizing and pose refinement based on sky patterns~\cite{needelman2006refinement} as in Fig.~\ref{fig:UrsaMajor}. Moreover, object tracking and localization can be solved by aligning pixels in a 2D image to an object that is pre-defined by linear segments~\cite{DBLP:conf/iros/MariottiniP07,hijikata2009simple}, as in augmented reality applications~\cite{geisner2015realistic}.


The only known solutions are for the case of sum of squared distances and $d=2$ dimensions, with no outliers, and when the matching between the points and lines is given. In this case, the Lagrange multipliers method can be applied in order to get a set of second order polynomials. If the matching is unknown, an Iterative Closest Point (ICP) method can be applied. The ICP is a known technique based on greedy nearest neighbours; see references in~\cite{besl1992method}.
For $d=3$, the problem is called PnP (Perspective-n-Points), where one wishes to compute the rotation and translation of a calibrated camera given 3D points and their corresponding 2D projection, which can be interpreted as 3D lines that intercept the origin. While many solutions were suggested for this problem over the years~\cite{li2012robust, lepetit2009epnp, zheng2013revisiting,wang2018simple,urban2016mlpnp}, those methods either: 1) do not have provable convergence guarantees, 2) do not provide guarantees for optimality, 3) minimize some algebraic error, which is easy to solve and somehow related to the original problem, but whose solution does not prove a bounded approximation to the original problem, or 4) assume zero reprojection / fitting error.
To handle outliers RANSAC~\cite{derpanis2010overview, fischler1981random} is heuristically used.

While this natural problem was intensively studied for the special case of $d=3$ where the set of lines intercept the origin, to the best of our knowledge, there are no known results so far  which have successfully and provably tackled this problem either in higher dimensions, in its generalized form where the lines do not necessarily intersect, with non-convex minimization functions such as M-estimators, or when the matching between the two sets is unknown.

\paragraph{Coresets }have many different definitions. In this paper we use the simplest one that is based on a weighted \emph{subset} of the input, which preserve properties such as sparsity of the input and numerical stability. Coresets for $\ell_p$ regression were suggested in~\cite{dasgupta2009sampling} using well-conditioned matrices that we cite in Theorem~\ref{theorem:coreset}. Similar coresets of size $d^{O(cp)}$ for $\ell_p$ regression were later suggested in~\cite{cohen2015p}, with a smaller constant $c>1$. Both~\cite{dasgupta2009sampling} and~\cite{cohen2015p} could have been used for coreset constructed in this paper.
We reduce the points-to-lines aligning problem to constrained $\ell_1$ optimization in the proof of Theorem~\ref{theorem:coreset}, which allows us to apply our algorithms on these coresets for the case of sum over point-line distances.

In most coresets papers, the main challenge is to compute the coreset. However, as we use existing coresets in this paper, the harder challenge was to extract the desired constrained solution from the coreset which approximate every possible alignment, with or without the constraints.

\section{Optimization Framework}
In what follows, for every pair of vectors $v=(v_1,\cdots,v_n)$ and $u=(u_1,\cdots,u_n)$ in $\REAL^n$ we denote $v\leq u$ if $v_i \leq u_i$ for every $i\in [n]$. Similarly, $f:\REAL^n \to [0,\infty)$ is non-decreasing if $f(v)\leq f(u)$ for every $v\leq u \in \REAL^d$. For a set $I \subseteq \REAL$ and $c \in \REAL$, we denote $\frac{I}{c} = \br{\frac{a}{c} \mid a \in I}$.

The following definition is a generalization of Definition 2.1 in~\cite{feldman2012data} from $n=1$ to $n>1$ dimensions, and from $\REAL$ to $I\subseteq\REAL^n$. If a function $h$ is a log-Lipschitz function, it roughly means that the functions slope does not change significantly if its input only changes slightly.
\begin{definition}[Log-Lipschitz function] \label{def:lip}
Let $r>0$ and $n\geq 1$ be an integer. Let $I$ be a subset of $\REAL^n$, and $h:I\to[0,\infty)$ be a non-decreasing function.
Then $h(x)$ is \emph{$r$-log-Lipschitz} over $x\in I$, if for every $c \geq 1$ and $x\in \displaystyle I \cap \frac{I}{c}$, we have $h(c x)\leq c^r h(x).$ The parameter $r$ is called the \emph{log-Lipschitz constant}.
\end{definition}

Unlike previous papers, the loss fitting (``distance'') function that we want to minimize in this paper is not a log-Lipshcitz function. However, it does satisfy the following property, which implies that we can partition the range of the function $g:\REAL^d\to[0,\infty)$ into $m$ subsets of $\REAL^d$ (sub-ranges), such that $g$ satisfies the log-Lipschitz condition on each of these sub-ranges; see Figure~\ref{figLip}. That is, $g$ has a single minimum in this sub-range, and increases in a bounded ratio around its local minimum. Note that $g$ might not be convex even in this sub-range. Formally, if the distance $x$ from the minimum in a sub-range is multiplied by $c>0$, then the value of the function increases by a factor of at most $c^r$ for some small (usually constant) $r>0$. For example, if we double the distance from the local minimum, then the value of the function increases by at most a constant factor of $2^r$.

\begin{definition}[Piecewise log-Lipschitz\label{def:PieceLip}]
Let $g:X\to[0,\infty)$ be a continuous function whose range is $X$, and
let $(X,\dist)$ be a metric space, i.e. $\dist:X^2\to[0,\infty)$ is a distance function. Let $r\geq 0$. The function $g$ is \emph{piecewise $r$-log-Lipschitz} if there is a partition of $X$ into $m$ subsets $X_1,\cdots, X_m$
such that for every $i\in[m]$:
\begin{enumerate}
\renewcommand{\labelenumi}{(\roman{enumi})}
\item $g$ has a unique infimum $x_i$ in $X_i$, i.e., $\br{x_i}= \argmin_{x\in X_i}g(x)$.
\item $h_i:[0,\max_{x\in X_i} \dist(x,x_i)]\to[0,\infty)$ is an $r$-log-Lipschitz function; see Definition~\ref{def:lip}.
\item $g(x)=h_i(\dist(x_i,x))$ for every $x\in X_i$.
\end{enumerate}

The union of minima is denoted by $M(g)=\br{x_1,\cdots,x_m}$.
\end{definition}

\begin{figure} [ht]
\centering
\includegraphics[width=0.3\textwidth]{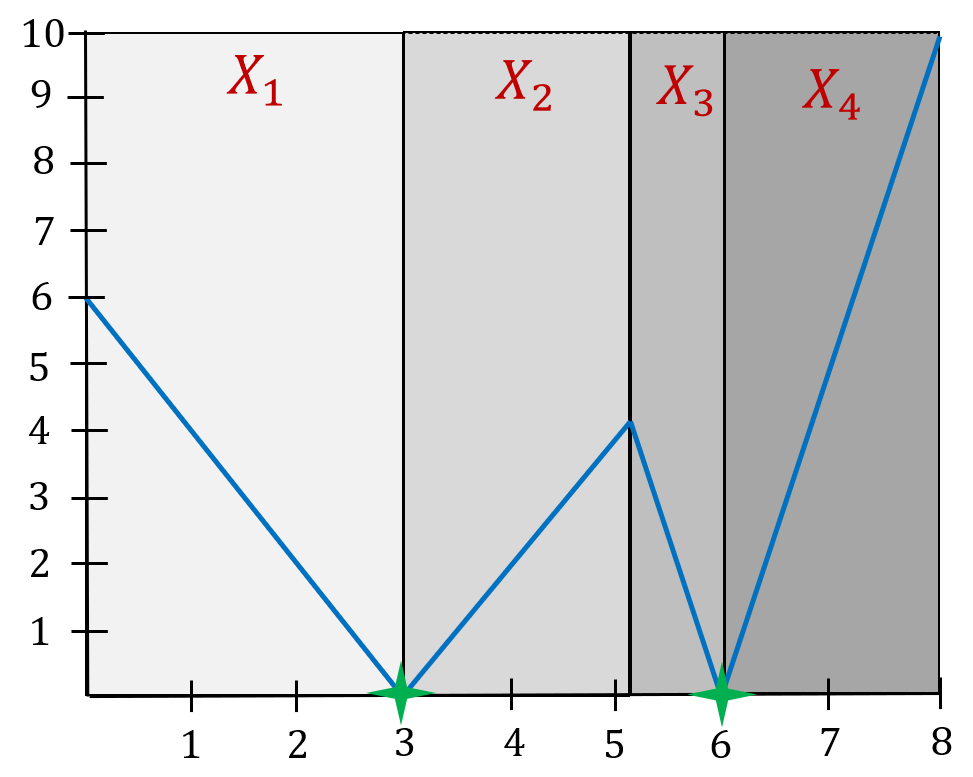}
\caption{\textbf{An example of a piecewise log-Lipschitz function: }
A function $g(x) = \min \br{2\cdot |x-3|, 5\cdot |x-6|}$ (blue graph) over the set $X = \REAL$. $X$ can be partitioned into $4$ subsets $X_1,\cdots,X_4$, where each subset has a unique infimum $x_1=3,x_2=3,x_3=6$ and $x_4=6$ respectively (green stars). There exist $4$ $1$-log-Lipschitz functions $h_1(x)=h_2(x)=2x$ and $h_3(x)=h_4(x)=5x$, such that $g(x) = h_i(|x_i-x|)$ for every $x\in X_i$.}
\label{figLip}
\end{figure}

Suppose that we have a set of piecewise $r$-log-Lipschitz functions, and consider the union $\bigcup M(g)$ over every function $g$ in this set. The following lemma states that, for every $x\in\REAL$, this union contains a value $x'$ such that $g(x')$ approximates $g(x)$ up to a multiplicative factor that depends on $r$.

\begin{theorem}[simultaneous approximation]\label{Lem:piecewiseApprox}
Let $g_1,\cdots,g_n$ be $n$ function, where $g_i:\REAL \to [0,\infty)$ is a piecewise $r$-log-Lipschitz function for every $i\in [n]$, and let $M(g_i)$ denote the minima of $g_i$ as in Definition~\ref{def:PieceLip}. Let $x\in \REAL$. Then there is $x'\in \bigcup_{i\in [n]}M(g_i)$ such that for every $i\in [n]$,
\begin{equation} \label{eqToProvePLip}
g_i(x')\leq 2^rg_i(x).
\end{equation}
\end{theorem}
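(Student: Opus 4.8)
The plan is to exploit the fact that each $g_i$ is flat-to-within-$c^r$ around its local minima, and to pick $x'$ to be the minimum (of some $g_j$) that is ``closest'' to the given point $x$ in a suitable sense. More precisely, first I would fix $x$ and, for each $i\in[n]$, locate the piece $X_k^{(i)}$ of the partition of $\REAL$ associated with $g_i$ that contains $x$; let $x_i^\ast$ be the unique infimum of $g_i$ on that piece, so that by property (iii) of Definition~\ref{def:PieceLip} we have $g_i(x)=h_k^{(i)}(|x-x_i^\ast|)$, where $h_k^{(i)}$ is $r$-log-Lipschitz. The point is that $x_i^\ast\in M(g_i)\subseteq\bigcup_{i\in[n]}M(g_i)$, so the candidate set $\bigcup_i M(g_i)$ at least contains, for every single index $i$ separately, a point that is a \emph{global} choice witnessing a good value for that $i$. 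The subtlety is that we must choose one $x'$ that works for \emph{all} $i$ simultaneously.

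Next I would choose $x'$ to be the element of $\bigcup_{i\in[n]}M(g_i)$ minimizing the distance $|x'-x|$ to $x$; equivalently, among all the $x_i^\ast$ defined above, pick the one closest to $x$, say $x'=x_{j}^\ast$ for some $j$, and note $|x'-x|\le|x_i^\ast-x|$ for every $i$. Now fix an arbitrary $i\in[n]$. Since $g_i$ is continuous and $x'$ lies in some piece $X_{k'}^{(i)}$ of $g_i$'s partition with infimum $y_i:=x_i^\ast$ — wait, more carefully: $x'$ lies in \emph{some} piece of $g_i$'s partition, whose infimum is some point $\tilde y_i\in M(g_i)$, and by the choice of $x'$ as the closest minimum to $x$ we still have $|x'-x|\le|\tilde y_i - x|$ is not automatic, so instead I use: $x'$ is in the piece of $g_i$ containing $x'$; let $\tilde h_i$ be the $r$-log-Lipschitz function for that piece and $\tilde y_i$ its infimum. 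The key geometric claim is that $|x'-\tilde y_i|\le 2|x-x_i^\ast|$, which follows because (a) $x_i^\ast\in M(g_i)$, so $|x'-x|\le|x_i^\ast-x|$ by minimality of $x'$; and (b) whichever piece of $g_i$ contains $x'$, its infimum $\tilde y_i$ satisfies $|x'-\tilde y_i|\le \dist(x', x_i^\ast)$ is again not obvious. I expect the clean argument to run instead through monotonicity within the single piece containing both $x$ and $x_i^\ast$ together with a triangle-inequality bound $|x'-x_i^\ast|\le|x'-x|+|x-x_i^\ast|\le 2|x-x_i^\ast|$, and the observation that since $g_i(x)=h_k^{(i)}(|x-x_i^\ast|)$ and $h_k^{(i)}$ is non-decreasing and $r$-log-Lipschitz, $g_i(x')\le h_k^{(i)}(|x'-x_i^\ast|)\le h_k^{(i)}(2|x-x_i^\ast|)\le 2^r h_k^{(i)}(|x-x_i^\ast|) = 2^r g_i(x)$, provided $x'$ lies in the \emph{same} piece $X_k^{(i)}$ as $x$.

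The hard part will be exactly this last provision: there is no reason $x'=x_j^\ast$ should lie in the same piece of $g_i$'s partition as $x$. I would handle this by a careful case analysis: either $x'$ is on the same side of $x_i^\ast$ as $x$ within $X_k^{(i)}$ and the monotone bound above applies directly; or $x'$ has ``crossed over'' into a neighboring piece, in which case I use that $g_i$ attains its infimum $g_i(x_i^\ast)$ and is continuous, so near the boundary the value is controlled — in fact if $x'$ lies between $x_i^\ast$ and $x$ then by monotonicity of $h_k^{(i)}$ we get $g_i(x')\le g_i(x)$ outright, and if $x'$ lies outside the segment, $|x'-x_i^\ast|\le|x'-x|+|x-x_i^\ast|\le 2|x-x_i^\ast|$ still holds. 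The genuinely delicate point is that $|x'-x|\le|x-x_i^\ast|$ requires $x'$ to be the nearest point of $\bigcup_k M(g_k)$ to $x$, and $x_i^\ast$ must itself belong to $M(g_i)$ — which it does by construction — so the minimality is legitimate. Assembling these pieces, since $i$ was arbitrary, \eqref{eqToProvePLip} holds for all $i\in[n]$ with the single choice $x'$, completing the proof.
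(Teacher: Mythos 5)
Your choice of $x'$ (the nearest point of $\bigcup_{i}M(g_i)$ to $x$) and your treatment of the case where $x'$ lies in the \emph{same} piece $X_k^{(i)}$ as $x$ are exactly the paper's argument: minimality gives $|x'-x|\leq |x-x_i^\ast|$, the triangle inequality gives $|x'-x_i^\ast|\leq 2|x-x_i^\ast|$, and monotonicity plus the $r$-log-Lipschitz property of $h_k^{(i)}$ give $g_i(x')\leq 2^r g_i(x)$. However, the cross-over case --- where $x'$ lands in a piece of $g_i$'s partition \emph{adjacent} to the one containing $x$ --- is not actually proved, and this is the genuinely delicate step you yourself flag. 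Your fallback, ``if $x'$ lies outside the segment, $|x'-x_i^\ast|\leq|x'-x|+|x-x_i^\ast|\leq 2|x-x_i^\ast|$ still holds,'' is true but does not yield the conclusion: the identity $g_i(\cdot)=h_k^{(i)}\left(\dist(\cdot,x_i^\ast)\right)$ is only valid \emph{inside} $X_k^{(i)}$ (property (iii) of Definition~\ref{def:PieceLip}), so once $x'$ has crossed into the neighboring piece you may not evaluate $g_i(x')$ as $h_k^{(i)}(|x'-x_i^\ast|)$, and the log-Lipschitz bound on $h_k^{(i)}$ says nothing about $g_i$ there. (Your other sub-case, ``$x'$ between $x_i^\ast$ and $x$,'' never leaves the piece when the pieces are intervals, so it is not the problematic configuration.)

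The missing idea, which is how the paper closes this case, is a continuity-and-monotonicity argument at the piece boundary. Writing $x\in X_j$ with infimum $x_j$ and supposing $x'\in X_{j+1}$: the minimality of $x'$ forces $x'\in(x_j,x_{j+1})$ (it cannot be two pieces away, since it would then be farther from $x$ than $x_{j+1}\in M(g_i)$). On $(x_j,x_{j+1})$ the function $g_i$ is non-decreasing over $(x_j,x_{j+1})\cap X_j$ and non-increasing over $(x_j,x_{j+1})\cap X_{j+1}$, so its supremum there is attained at the boundary point $y$ between the two pieces, and continuity of $g_i$ gives $\lim_{z\to y^-}g_i(z)=\lim_{z\to y^+}g_i(z)$. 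Since $x'$ lies on the $X_{j+1}$ side of $y$, monotonicity gives $g_i(x')\leq\lim_{z\to y^+}g_i(z)$, and the left limit is bounded by $2^r g_i(x)$ by applying your same-piece argument to points $z\in X_j$ approaching $y$, which satisfy $\dist(x,z)\leq\dist(x,x')\leq\dist(x,x_j)$. Without this step (or an equivalent one), the proof covers only the indices $i$ for which $x'$ happens to fall in the same piece as $x$, and the theorem's simultaneity claim is not established.
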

\begin{proof}
Let $x'\in\bigcup_{i\in [n]}M(g_i)$ be the closest item to $x$, i.e., that minimizes $\dist(x',x)$ (Ties are broken arbitrarily).
Put $i\in [n]$. We have that $g_i$ is a piecewise $r$-log-Lipschitz function. Identify $M(g_i)=\br{x_1,\cdots,x_m}$, let $X_1,\cdots,X_m$ and $h_1,\cdots,h_m$ be a partition of $\REAL$ and a set of functions that satisfy Properties (i)--(iii) in Definition~\ref{def:PieceLip} for $g_i$. Let $j\in[m]$ such that $x\in X_j$.
The rest of the proof holds by the following case analysis: \textbf{(i)} $x'\in X_j$,  \textbf{(ii)} $x'\in X_{j+1}$ and $j \leq m-1$, and \textbf{(iii)} $x'\in X_{j-1}$ and $j \geq 2$. There are no other cases since if $x'\in X_{j-2}$ or $x' \in X_{j+2}$, it will imply that $\dist(x_{j-2},x) < \dist(x',x)$ or $\dist(x_{j+2},x) < \dist(x',x)$ respectively, which contradicts the definition of $x'$.

\noindent\textbf{Case (i) $x' \in X_j$. }We first prove that~\eqref{eqToProvePLip} holds for every $y \in X_j$ that satisfies $\dist(x,y) \leq \dist(x,x_j)$. If we do so, then ~\eqref{eqToProvePLip} trivially holds for Case (i) by substituting $y=x'$ since $\dist(x,x') \leq \dist(x,x_j)$.

Let $y \in X_j$ such that $\dist(x,y) \leq \dist(x,x_j)$. Then we have
\begin{equation}\label{ddr}
\dist(x_j,y) \leq \dist(x_j,x)+\dist(x,y)\leq 2\dist(x_j,x)
\end{equation}
by the definition of $y$ and the triangle inequality. This proves~\eqref{eqToProvePLip} as
\[
\begin{split}
g_i(y)& = h_j(\dist(x_j,y)) \leq h_j(2\dist(x_j,x))\\ & \leq 2^r h_j(\dist(x_j,x)) = 2^r g_i(x),
\end{split}
\]
where the first and last equalities hold by the definition of $h_j$, the first inequality holds since $h_j$ is a monotonic non-decreasing function and by~\eqref{ddr}, and the second inequality holds since $h_j$ is $r$-log-Lipschitz in $X_j$ by Property (ii) of Definition~\ref{def:lip}.

\noindent\textbf{Case (ii) $x'\in X_{j+1}$ and $j \leq m-1$. }In this case $x'< x_{j+1}$ by its definition and the fact that $x\in X_j$.
Hence, $x'\in(x_{j},x_{j+1})$. Let $y\in(x_{j},x_{j+1})$ such that $g_i(y) = \sup_{y'\in (x_j,x_{j+1})} g_i(y')$.
Combining the definition of $y$ with the assumption that $g_i$ is continuous and non-decreasing in $(x_j,x_{j+1})\cap X_j$, and is non-increasing in $(x_j,x_{j+1})\cap X_{j+1}$, we have that
\begin{equation} \label{eq:gilim}
\lim_{z\to y^+} g_i(z) = \lim_{z\to y^-} g_i(z).
\end{equation}

Hence,
\[
g_i(x') \leq \lim_{z\to y^+} g_i(z) = \lim_{z\to y^-} g_i(z) \leq 2^r g_i(x),
\]
where the first derivation holds by combing that $g_i$ is non increasing in $(x_j,x_{j+1})\cap X_{j+1}$ and the definition of $y$, the second derivation is by~\eqref{eq:gilim}, and since $\dist(x,z) \leq \dist(x,x') \leq \dist(x,x_j)$ the last derivation holds by substituting $y = z$ in Case (i).

\noindent\textbf{Case (iii) $x'\in X_{j-1}$ and $j \geq 2$. }The proof for this case is symmetric to Case (ii).

Hence, in any case, there is $x'\in \bigcup_{i\in [n]}M(g_i)$ such that for every $i\in [n], g_i(x')\leq 2^rg_i(x)$.
\end{proof}

In what follows we define a framework for approximating general sum of functions.
The goal is to minimize some loss function $\cost(A,q)$ over a query set $Q$. This function is a function $f$ of $n$ pseudo distances, e.g. sum, max or sum of squares. A pseudo distance between an input point $a\in A$ and a query $q\in Q$ is a Lipshitz function $\lip$ of their distance $D(a,q)$.

In the rest of the paper, for every minimization problem that is presented, we shall define such a cost function by specifying the functions $D$, $\lip$ and $f$, and show that we can minimize it up to some constant factor.

For example, in Section~\ref{sec:pointsToLines} we present the problem of aligning points to lines in 2 dimensional space, where the input set $A$ is a set of $n$ point-line pairs. The goal is to compute a 2D rotation matrix $R\in\REAL^{2\times 2}$ and a 2D translation vector $t\in \REAL^2$ that, when applied to the set of points, will minimize some cost function, for example, the sum of distances to the power of $3$, between each point-line pair. Here, the query set $Q$ is the set of all pairs of 2D rotation matrices and translation vectors $(R,t)$, the function $D((p,\ell),(R,t))$ is the Euclidean distance between a given point-line pair $(p,\ell)$ after applying the transformation $(R,t)$, i.e., $D((p,\ell),(R,t)) = \min_{q\in \ell}\norm{Rp-t-q}$, the function $\lip$ is $\lip(x) = x^3$, and the function $f$ simply sums those distances over all the input point-line pairs.

\begin{definition} [Optimization framework]\label{def:cost}
Let $X$ be a set called \emph{ground set}, $A = \br{a_1,\cdots,a_n} \subset X$ be a finite input set and let $Q$ be a set called a \emph{query set}.
Let $D:X\times Q\to [0,\infty)$ be a function.
Let $\lip:[0,\infty)\to [0,\infty)$ be an $r$-log-Lipschitz function and $f:[0,\infty)^n\to [0,\infty)$ be an $s$-log-Lipschitz function.
For every $q\in Q$ we define
\[
\cost (A,q)=f\left( \lip \left(D\left(a_1,q\right)\right),\cdots, \lip\left(D\left(a_n,q\right)\right)\right).
\]
\end{definition}

The following observation states that if a query $q'\in Q$ approximates, up to a constant factor, the function $D$ for every input element for some other query $q^* \in Q$, then $q'$ also approximates up to a constant, the cost of a more involved function $\cost$, relative to $q^*$, where $\cost$ is as defined in Definition~\ref{def:cost}.
\begin{observation} \label{obs:distToCost}
Let
\[
\cost (A,q)=f\left( \lip \left(D\left(a_1,q\right)\right),\cdots, \lip\left(D\left(a_n,q\right)\right)\right)
\]
be defined as in Definition~\ref{def:cost}.
Let $q^*,q' \in Q$ and let $c\geq 1$. If $D\left(a_i,q'\right)\leq c\cdot D\left(a_i,q^*\right)$ for every $i\in [n]$, then
\[
\cost\left(A,q'\right)\leq c^{rs} \cdot \cost\left(A,q^*\right).
\]
\end{observation}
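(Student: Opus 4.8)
The plan is to peel the composition defining $\cost$ apart one layer at a time, pushing the factor $c$ outward first through $\lip$ coordinate-wise and then through $f$; at each step I invoke monotonicity to carry an inequality inside a function and the log-Lipschitz bound to carry a scalar outside.

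First I would fix $i\in[n]$ and bound the $i$th pseudo-distance. Since $D(a_i,q')\leq c\cdot D(a_i,q^*)$ and $\lip$ is non-decreasing (being $r$-log-Lipschitz, it is non-decreasing by Definition~\ref{def:lip}), we get $\lip(D(a_i,q'))\leq \lip(c\cdot D(a_i,q^*))$. Now the $r$-log-Lipschitz property of $\lip$ applies with the scalar $c\geq 1$, and there is no domain obstruction because $\lip$ is defined on all of $[0,\infty)$, so that $[0,\infty)\cap [0,\infty)/c=[0,\infty)$; this gives $\lip(c\cdot D(a_i,q^*))\leq c^r\lip(D(a_i,q^*))$. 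Chaining the two, $\lip(D(a_i,q'))\leq c^r\lip(D(a_i,q^*))$ for every $i\in[n]$.

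Next I would lift this to the outer function $f$. Writing $v=\left(\lip(D(a_1,q^*)),\ldots,\lip(D(a_n,q^*))\right)\in[0,\infty)^n$, the coordinate-wise bound just proved says the vector with entries $\lip(D(a_i,q'))$ is $\leq c^r v$ entrywise. Since $f$ is non-decreasing, $\cost(A,q')=f\left(\lip(D(a_1,q')),\ldots,\lip(D(a_n,q'))\right)\leq f(c^r v)$. Applying the $s$-log-Lipschitz property of $f$ with the scalar $c^r\geq 1$ (again no domain issue on $[0,\infty)^n$) yields $f(c^r v)\leq (c^r)^s f(v)=c^{rs}\cost(A,q^*)$, which is exactly the assertion.

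There is no genuine obstacle here; the only point needing a moment's care is keeping track of the two roles each log-Lipschitz hypothesis plays — monotonicity to move an inequality inside the function, and the defining inequality to move a scalar outside — and checking that the scalars used ($c$ for $\lip$, then $c^r$ for $f$) are $\geq 1$ so that Definition~\ref{def:lip} is applicable, which holds since $c\geq 1$ and $r\geq 0$. (If one wanted the analogous statement for a function defined on a proper subset $I$ rather than on all of $[0,\infty)$, one would additionally have to verify membership in $I\cap I/c$, but under the hypotheses as stated this is automatic.)
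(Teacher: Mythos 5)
Your proposal is correct and follows essentially the same route as the paper's proof: both push the factor $c$ outward first through $\lip$ (monotonicity plus the $r$-log-Lipschitz bound, giving $c^r$) and then through $f$ (monotonicity plus the $s$-log-Lipschitz bound with scalar $c^r$, giving $c^{rs}$). Your added remarks on domain applicability of Definition~\ref{def:lip} are a minor extra care the paper leaves implicit, but the argument is the same.
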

\begin{proof}
We have that
\begin{align}
& \cost\left(A,q'\right) = f\left( \lip \left(D\left(a_1,q'\right)\right),\cdots, \lip\left(D\left(a_n,q'\right)\right)\right) \label{obsEq1}\\
& \leq f\left( \lip \left(c\cdot D\left(a_1,q^*\right)\right),\cdots, \lip\left(c \cdot D\left(a_n,q^*\right)\right)\right)\label{obsEq2}\\
& \leq f\left(c^r\cdot \lip \left(D\left(a_1,q^*\right)\right),\cdots, c^r\cdot \lip\left(D\left(a_n,q^*\right)\right)\right)\label{obsEq3}\\
& \leq c^{rs} \cdot f\left(\lip \left(D\left(a_1,q^*\right)\right),\cdots,\lip\left(D\left(a_n,q^*\right)\right)\right)\label{obsEq4}\\
&= c^{rs}\cdot \cost\left(A,q^*\right)\label{obsEq5},
\end{align}
where~\eqref{obsEq1} holds by the definition of $\cost$, \eqref{obsEq2} holds by the assumption of Observation~\ref{obs:distToCost}, \eqref{obsEq3} holds since $\lip$ is $r$-log-Lipschitz, \eqref{obsEq4} holds since $f$ is $s$-log-Lipschitz, and~\eqref{obsEq5} holds by the definition of $\cost$.
\end{proof}
Observation~\ref{obs:distToCost} implies that in order to approximate a non-trivial cost function as in Definition~\ref{def:cost}, it suffices to approximate a much simpler function, namely the function $D$, for every input element.

\section{Algorithms for Aligning Points to Lines} \label{sec:pointsToLines}

In this section, we introduce our notations, describe our algorithms, and give both an overview and intuition for each algorithm.
See Table~\ref{table:ourContrib} for example cost functions that our algorithms provably approximate.
See Sections~\ref{calcZIntuition},~\ref{mainAlgIntuition} and~\ref{noMatchingAlgIntuition} for an intuition of the algorithms presented in this section.

\paragraph{Notation for the rest of the paper.} \label{ch:notation}
Let $\REAL^{n\times d}$ be the set of all $n\times d$ real matrices. We denote by $\norm{p}=\norm{p}_2=\sqrt{p_1^2+\ldots+p_d^2}$ the length of a point $p=(p_1,\cdots,p_d) \in \REAL^d$, by $\dist(p,\ell) = \min_{x \in \ell} \norm{p-x}_2$ the Euclidean distance from $p$ to an a line $\ell$ in $\REAL^d$, by $\proj(p,X)$ its projection on a set $X$, i.e, $\proj(p,X) \in \arginf_{x \in X} \dist(p,x)$, and by $\mathrm{sp}\br{p}=\br{kp\mid k\in\REAL}$ we denote the linear span of $p$. For a matrix $V\in \REAL^{d\times m}$ whose columns are mutually orthonormal, i.e., $V^TV = I$, we denote by $V^{\bot} \in \REAL^{d\times (d-m)}$ an arbitrary matrix whose columns are mutually orthogonal unit vectors, and also orthogonal to every vector in $V$. Hence, $[V \mid V^\bot]$ is an orthogonal matrix. If $p\in \REAL^2$, we define $p^\bot=(p^\bot_1,\cdots,p^\bot_d)\in\REAL^d$ to be the point such hat $p^Tp^\bot=0$ and $p^\bot_1\geq 0$.
We denote $[n]=\br{1,\cdots,n}$ for every integer $n \geq 1$.

For the rest of this paper, every vector is a column vector, unless stated otherwise.
A matrix $R\in\REAL^{2\times 2}$ is called a \emph{rotation matrix} if it is orthogonal and its determinant is $1$, i.e., $R^TR=I$ and $\mathrm{det}(R)=1$. For $t\in\REAL^2$ that is called a \emph{translation} vector, the pair $(R,t)$ is called an \emph{alignment}. We define $\alignments$ to be the union of all possible alignments in $2$-dimensional space. The $x$-axis and $y$-axis are defined as the sets $\br{(z,0) \mid z \in \REAL}$ and $\br{(0,z) \mid z \in \REAL}$ respectively.

For a bijection (permutation) function $\pi:[n]\to [n]$ and a set $A = \br{(a_1,b_1),\cdots,(a_n,b_n)}$ of $n$ pairs of elements, $A_{\pi}$ is defined as $A_{\pi} = \br{(a_1,b_{\pi(1)}),\cdots,(a_n,b_{\pi(n)})}$.

We use the original definition of $O(\cdot)$ as a set of functions and write e.g. $t\in O(n)$ and not $t=O(n)$ to avoid dis-ambiguity as in $1+1=O(1)=3$; see discussion in~\cite{graham1989concrete}.

\paragraph{Algorithms.}
We now present algorithms that compute a constant factor approximation for the problem of aligning points to lines, when the matching is either known or unknown. Algorithm~\ref{Alg:slowApprox} handles the case where the matching between the points and lines is given, i.e. given an ordered set $P = \br{p_1,\cdots,p_n}$ of $n$ points, and a corresponding ordered set $L = \br{\ell_1,\cdots,\ell_n}$ of $n$ lines, both in $\REAL^2$, we wish to find an alignment that minimizes, for example, the sum of distances between each point in $P$ and its corresponding line in $L$.

Formally, let $A=\br{(p_i,\ell_i)}_{i=1}^n$ be a set of $n \geq 3$ point-line pairs, $z \geq 1$, and $D_z:A\times \alignments\to [0,\infty)$ such that $D_z\left((p,\ell),(R,t)\right) = \min_{q\in \ell} \norm{Rp-t-q}_z$ is the $\ell_z$ distance between $Rp-t$ and $\ell$ for every $(p,\ell)\in A$ and $(R,t) \in \alignments$. Let $\cost, s, r$ be as defined in Definition~\ref{def:cost} for $D=D_z$. Then Algorithm~\ref{Alg:slowApprox} outputs a set of alignments that is guaranteed to contain an alignment which approximates $\displaystyle\min_{(R,t)\in \alignments}\cost(A,(R,t))$ up to a constant factor; See Theorem~\ref{theorem:costPointsToLines}.

Algorithm~\ref{Alg:slowApproxNoMatching} handles the case when the matching is unknown, i.e. given unordered sets $P$ and $L$ consisting of $n$ points and $n$ lines respectively, we wish to find a matching function $\pi:[n]\to [n]$ and an alignment $(R,t)$ that minimize, for examples, the sum of distances between each point $p_i \in P$ and its corresponding line $\ell_{\pi(i)}\in L$.

Formally, let $\cost$ be as defined above but with $f = \norm{\cdot}_1$. Then Algorithm~\ref{Alg:slowApproxNoMatching} outputs a set of alignments that is guaranteed to contain an alignment which approximates $\displaystyle \min_{(R,t,\pi)}\cost(A_{\pi},(R,t))$ up to a constant factor, where the minimum is over every alignment $(R,t)$ and matching function $\pi$; See Theorem~\ref{theorem:pointsToLinesNoMatching}.

\subsection{Algorithm~\ref{Alg:calcz}: \secondalgname} \label{sec:secondAlg}
In this section we present a sub-routine called \secondalgname{} that is called from our main algorithm; see Algorithm~\ref{Alg:calcz} and our main algorithm in Algorithm~\ref{Alg:slowApprox}.

\subsubsection{Overview of Algorithm~\ref{Alg:calcz}}
The algorithm takes as input a triangle, and a line $\ell$ that intersects the origin. The triangle is defined by its three vertices $p,q,z\in\REAL^2$ and denoted by $\Delta(p,q,z)$. The line is defined by its direction (unit vector) $v$.

The usage of this algorithm in the main algorithm (Algorithm~\ref{Alg:slowApprox}) is to compute the union over every \emph{feasible configuration} $\Delta(p',q',z')$, which is a rotation and a translation of $\Delta(p,q,z)$ such that $p'$ is on the $x$-axis, and $q'$ is on the input line (simultaneously).

To this end, the output of the Algorithm~\ref{Alg:calcz} is a tuple of three $2\times 2$ matrices $P,Q$ and $Z$ such that the union of $(Px,Qx,Zx)$ over every unit vector $x$ is the desired set. That is, for every feasible configuration $\Delta(p',q',z')$ there is a unit vector $x\in\REAL^2$ such that $(p',q',z')=(Px,Qx,Zx)$, and vice versa. See illustration in Fig.~\ref{fig:ZxEllipse}.

\subsubsection{Intuition behind Algorithm~\ref{Alg:calcz}} \label{calcZIntuition}
For every matrix $Z\in\REAL^{2\times 2}$, the set $\{Zx\mid x\in\REAL^2,\norm{x}=1\}$ defines the boundary of an ellipse in $\REAL^2$. Hence, the shape formed by all possible locations of vertex $z$ in $\REAL^2$, assuming $p$ is on the $x$-axis and $q\in\ell$, is an ellipse. Furthermore, this ellipse is centered around the intersection point of the $x$-axis and $\ell$ (the origin, in this case). See Fig.~\ref{fig:ZxEllipse}.

\begin{algorithm} [ht]
\SetAlgoLined
\DontPrintSemicolon
{\begin{tabbing}
\textbf{Input:\quad} \= A unit vector $v =(v_x,v_y)^T$ such that\\\> $v_y\neq 0$,
and $p,q,z\in\REAL^2$ such that $p\neq q$.\\
\textbf{Output:} \> A tuple of matrices $P,Q,Z \in \REAL^{2 \times 2}$ \\\>that satisfy Lemma~\ref{Lemma:PQZx}.
\end{tabbing}}
\vspace{-0.1cm}
	Set $r_1\gets\norm{p-q},r_2\gets\norm{p-z},r_3\gets\norm{q-z}$ \label{Alg1line:rij}\\
	Set $d_1 \gets \displaystyle \frac{r_1^2+r_2^2-r_3^2}{2r_1}$;  $d_2 \gets \sqrt{|r_2^2-d_1^2|}$\\
	Set $R \gets \begin{pmatrix}
		0 & -1 \\
		1 & 0 \end{pmatrix}$ \label{Alg1line:R}
		\tcp{A rotation matrix that rotates the coordinates system by $\pi/2$ radians counter clockwise around the origin.}
	Set $P \gets r_1\left( \begin{array}{ccc}
		\frac{v_x}{v_y} & 1 \\
		0 & 0 \end{array} \right)$ \label{Alg1line:PDef}\\
	Set $Q \gets P^T$ \label{Alg1line:QDef}\\
	Set $b \gets \begin{cases} 1, & (z-p)^T(q-p)^\bot > 0\\ 0, & \text{otherwise} \end{cases}$\label{lline5}\\
	\tcp{$b=1$ if $z$ is in the halfplane to the right of the vector $q-p$, and $b=0$ otherwise.}
	Set $Z \gets \displaystyle P + \frac{d_1}{r_1}(Q-P) + b\cdot\frac{d_2}{r_1}R(Q-P)$ \label{Alg1line:ZDef}\\
\Return $(P,Q,Z)$
\caption{$\secondalgname(v,p,q,z)$}
\label{Alg:calcz}
\end{algorithm}

\begin{figure} [ht]
\centering
\includegraphics[width=0.4\textwidth]{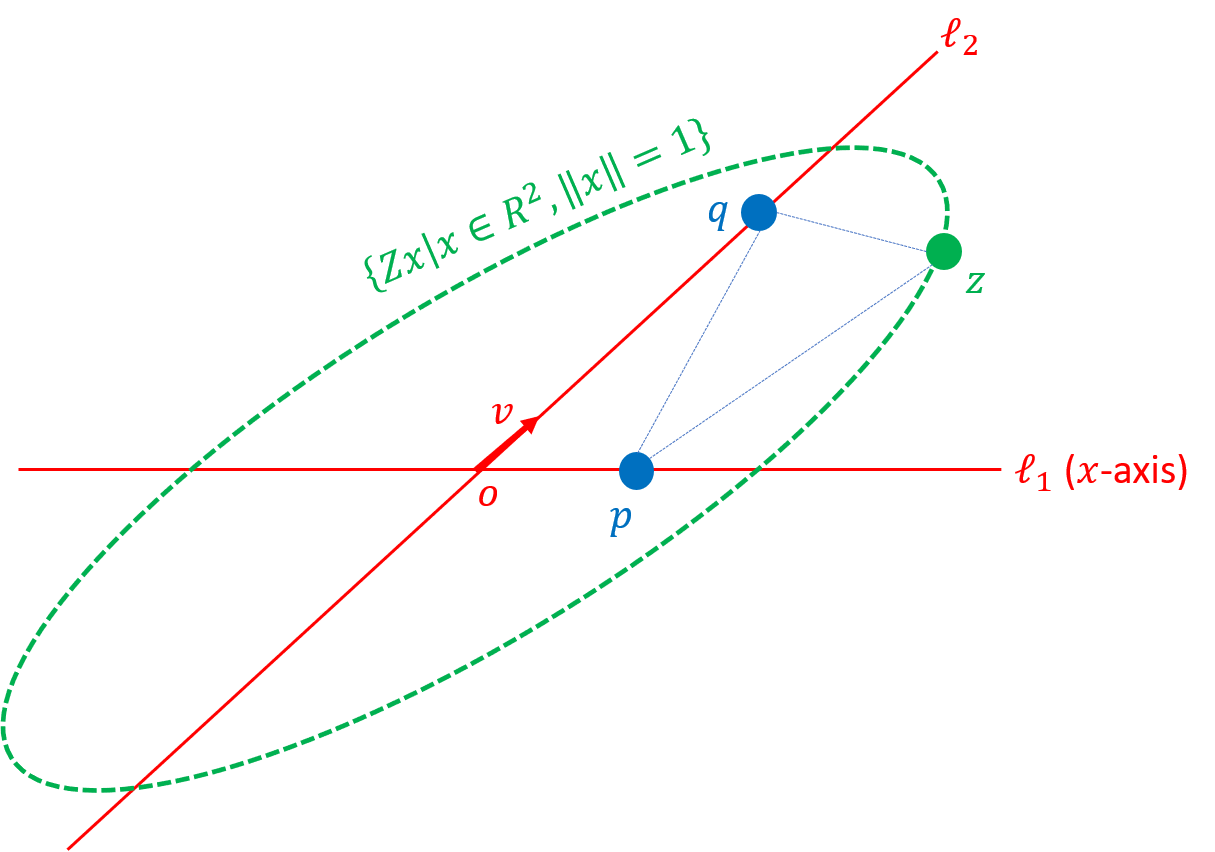}
\caption{\textbf{Illustration for Algorithm~\ref{Alg:calcz}. }A triangle $\Delta(p,q,z)$ whose vertex $p$ intersects the $x$-axis, and its vertex $q$ is on the line $\mathrm{sp}\br{v}$. There are infinitely many such triangles, and the union over every possible solution $z$ are the points on the green ellipse. If $(P,Q,Z)$ is the output of Algorithm~\ref{Alg:calcz}, then $(p,q,z)=(Px,Qx,Zx)$ for some unit vector $x$.}
\label{fig:ZxEllipse}
\end{figure}

\subsection{Algorithm ~\ref{Alg:slowApprox}: \algname} \label{sec:mainAlg}
In this section we present our main algorithm, called \algname; See Algorithm~\ref{Alg:slowApprox}.

\begin{algorithm}
\SetAlgoLined
\DontPrintSemicolon
{\begin{tabbing}
\textbf{Input:\quad} \= A set $A=\br{(p_1,\ell_1),\cdots,(p_n,\ell_n)}$ of $n$ pairs,\\ \> where for every $i\in[n]$, we have that $p_i$ is a point\\ \>and $\ell_i$ is a line, both on the plane.\\
\textbf{Output:} \>A set $C \subset \alignments$ of alignments that\\ \>satisfies Lemma~\ref{theorem:2DPointsToLines}.\\
\end{tabbing}}
\vspace{-0.5cm}
Set $C \gets \emptyset$.\\
Set $v_i\in\REAL^2$ and $b_i\geq 0$ such that $\ell_i = \br{q\in \REAL^2 \mid v_i^T q =b_i}$ for every $i\in[n]$. \label{Alg2line:idL}\\
\For {\textbf{every} $j,k,l\in[n]$ \textbf{such that} $j\neq k$}{ \label{Alg2line:ES}
Set $C_1,C_2 \gets \emptyset$. \label{Alg2line:setC2Empty}\\
\uIf{$|v_j^Tv_k| \neq 1$}{ \tcc{$\ell_j$ and $\ell_k$ are not parallel.}
	Set $v_j^\bot \gets$ a unit vector in $\REAL^2$ that is orthogonal to $v_j$. \label{Alg2line:vjOrtho}\\
Set $R_{v_j} \gets \left( \begin{array}{ccc}
		-v_j^T- \\
		-{v_j^\bot}^T- \end{array} \right)$. \label{Alg2line:Rvj}\\\tcp{$R_{v_j}$ aligns $v_j$ with the $x$-axis.}
Set $(P',Q',Z') \gets \secondalgname(R_{v_j}v_k,p_j,p_k,p_l)$. \label{Alg2line:calcZ}\\ \tcp{See Algorithm~\ref{Alg:calcz}}
Set $P\gets R_{v_j}^T P', Q\gets R_{v_j}^T Q', Z\gets R_{v_j}^T Z'$. \label{Alg2line:RjZ}\\
Set $s \gets \ell_j \cap \ell_k$. \label{Alg2line:interLjLk} \\ \tcp{$\ell_j \cap \ell_k$ contains one point since $\ell_j$ and $\ell_k$ are not parallel.}
Set $c_l \gets \dist(s,\ell_l)$. \label{Alg2line:changeBl}\\
Set $X \gets \displaystyle \argmin_{x \in \REAL^2:\norm{x}=1} |v_l^TZ x -c_l|$. \label{Alg2line:compX} \tcp{The set of unit vectors that minimize the distance between $p_l$ and $\ell_l$ while maintaining $p_j\in\ell_j$ and $p_k\in\ell_k$.}
Set $C_1 \gets \{(R,t)\in \alignments \mid Rp_j-t=Px \text{ and } Rp_k-t=Qx \text{ and } Rp_l-t=Zx \text{ for every } x\in X\}$. \label{Alg2line:compC1} \tcc{The set of alignments that align the points $(p_j,p_k,p_l)$ with points $(Px,Qx,Zx)$ for every $x\in X$.}
}
\uElse{
Set $C_2 \gets \br{(R,t)}$ such that $(R,t) \in \alignments$, $Rp_j-t \in \ell_j$ and $(R,t)\in\argmin {\dist(Rp_k-t,\ell_k)}$. \label{Alg2line:compC2}
}

Set $C \gets C \cup C_1 \cup C_2$. \label{Alg2line:endES}
}
\Return $C$
\caption{\algname$(A)$}
\label{Alg:slowApprox}
\end{algorithm}

\subsubsection{Overview of Algorithm~\ref{Alg:slowApprox}}
The input for Algorithm~\ref{Alg:slowApprox} is a set of $n$ pairs, each consists of a point and a line on the plane. The algorithm runs exhaustive search on all the possible tuples and outputs candidate set $C$ of $O(n^3)$ alignments. Alignment consists of a rotation matrix $R$ and a translation vector $t$. Theorem~\ref{theorem:2DPointsToLines} proves that one of these alignments is the desired approximation.

Line~\ref{Alg2line:idL} identifies each line $\ell_i$ by its direction (unit vector) $v_i$ and distance $b_i>0$ from the origin. Lines~\ref{Alg2line:ES}--\ref{Alg2line:endES} iterates over every triple $(j,k,l)$ of input pairs such that $j\neq k$, and turns it into a constant number of alignments $C_1$. In Lines~\ref{Alg2line:vjOrtho}--\ref{Alg2line:compC1} we handle the case where the lines $\ell_j$ and $\ell_k$ are not parallel. In Line~\ref{Alg2line:compC2} we handle the case where $\ell_j$ and $\ell_k$ are parallel.

\textbf{The case where $\ell_j$ and $\ell_k$ are not parallel.} Lines~\ref{Alg2line:vjOrtho}--\ref{Alg2line:Rvj} compute a rotation matrix $R_{v_j}$ that rotates $v_j$ to the $x$-axis. Line~\ref{Alg2line:calcZ} calls the sub-procedure Algorithm~\ref{Alg:calcz} for computing three matrices $P',Q'$ and $Z'$. In Line~\ref{Alg2line:RjZ} we revert the effect of the rotation matrix $R_{v_j}$. Lines~\ref{Alg2line:interLjLk}--\ref{Alg2line:changeBl} compute the distance between $\ell_l$ and the intersection between $\ell_j$ and $\ell_k$ since we assumed this intersection point is the origin in Algorithm~\ref{Alg:calcz}. The matrix $Z$ and the line $\ell_l$ are used to compute a set $X$ of $O(1)$ unit vectors in Line~\ref{Alg2line:compX}. Every $x\in X$ defines a possible positioning for the triplet. In Line~\ref{Alg2line:compC1} we define an alignment $(R,t)$ for each $x\in X$.
The union of the alignments in $C_1$ is then added to the output set $C$ in Line~\ref{Alg2line:endES}.

\textbf{The case where $\ell_j$ and $\ell_k$ are parallel.} In this case, we place $p_j \in \ell_j$, and place $p_k$ as close as possible to $\ell_k$. If there are more than one alignment that satisfies these conditions, then we pick an arbitrary one. This is done in Line~\ref{Alg2line:compC2}.

\subsubsection{Intuition behind Algorithm~\ref{Alg:slowApprox}} \label{mainAlgIntuition}
The idea behind the algorithm consists of three steps. At each step we reduce the set of feasible alignments by adding another constraint. Each constraint typically increases our approximation factor by another constant.

Consider any alignment of the input points to lines, and suppose that $(p,\ell_1)$ is the closest pair after applying this alignment. By the triangle inequality, translating the set $P$ of points so that $p$ intersects $\ell_1$ will increase the distance between every other pair by a factor of at most $2$. Hence, minimizing~\eqref{min} under the constraint that $p\in\ell_1$ would yield a $2$-approximation to the original (non-constrained) problem.

Similarly, we can then translate $P$ in the direction of $\ell_1$ (while maintaining $p \in \ell_1$) until the closest pair, say $(q,\ell_2)$, intersects. The result is a $4$-approximation to the initial alignment by considering all the possible alignments of $P$ such that $p\in\ell_1$ and $q\in\ell_2$.
There are still infinite such alignments which satisfy the last constraint.

Hence, we add a third step. Let $(z,\ell_3)$ be the pair that requires the minimal rotation of the vector $q-p$ in order to minimize $\dist(z,\ell_3)$ under the constraints that $p \in \ell_1$ and $q \in \ell_2$. We now rotate the vector $q-p$ and translate the system to maintain the previous constraints until $\dist(z,\ell_3)$ is minimizes.

\textbf{The result } is a $16$-approximation to~\eqref{min} by considering all the possible alignments of $P$ such that: $z$ is closest to $\ell_3$ among all alignments that satisfy $p\in\ell_1$ and $q\in\ell_2$.
Unlike the previous steps, there are only a finite number of such alignments, namely $|C| = \binom{n}{3}$.

\subsection{Algorithm~\ref{Alg:slowApproxNoMatching}: Alignment and Matching}

\begin{algorithm}[ht]
\SetAlgoLined
\DontPrintSemicolon
{\begin{tabbing}
\textbf{Input:\quad} \= A set $A = \br{(p_1,\ell_1),\cdots,(p_n,\ell_n)}$ and a cost\\ \>function as in Theorem~\ref{theorem:pointsToLinesNoMatching}.\\
\textbf{Output:} \>An element $(\tilde{R},\tilde{t},\tilde{\pi})$ that satisfies Theorem~\ref{theorem:pointsToLinesNoMatching}.
\end{tabbing}}

Set $X \gets \emptyset$.\\
\For {\textbf{every} $i_1,i_2,i_3,j_1,j_2,j_3 \in[n]$ \label{Alg4:ES}}{
	Set $X' \gets \algname\left(\br{(p_{i_1},\ell_{j_1}),(p_{i_2},\ell_{j_2}),(p_{i_3},\ell_{j_3})} \right)$ \label{Alg4:callToAlg1}.\\ \tcp{See Algorithm~\ref{Alg:slowApprox}}
	Set $X \gets X \cup X'$. \label{Alg4:addToX}\\
}
Set $S \gets \br{\left(R,t,\hat{\pi}(A,(R,t),\cost)\right) \mid (R,t)\in X}$. \label{Alg4:mappingFunc}\\
 \tcc{see Definition~\ref{def:optimalMatching}}
Set $(\tilde{R},\tilde{t},\tilde{\pi}) \displaystyle \in \argmin_{(R',t',\pi')\in S} \cost\left(A_{\pi'},(R',t')\right)$. \label{Alg4:minS}

\Return $(\tilde{R},\tilde{t},\tilde{\pi})$
\caption{$\matchingalgname(A,\cost)$}
\label{Alg:slowApproxNoMatching}
\end{algorithm}


\subsubsection{Overview of Algorithm~\ref{Alg:slowApproxNoMatching}}
Algorithm~\ref{Alg:slowApproxNoMatching} takes as input a set of $n$ paired points and lines in $\REAL^2$, and a cost function as defined in Theorem~\ref{theorem:pointsToLinesNoMatching}. The algorithm computes an alignment $(\hat{R},\hat{t}) \in \alignments$ and a matching function $\hat{\pi}$ (which rearranges the given pairing of the pairs) that approximate the minimal value of the given cost function; See Theorem~\ref{theorem:pointsToLinesNoMatching}.

In Line~\ref{Alg4:ES} we iterate over every $i_1,i_2,i_3,j_1,j_2,j_3 \in [n]$. In Lines~\ref{Alg4:callToAlg1}-~\ref{Alg4:addToX} we match $p_{i_1}$ to $\ell_{j_1},p_{i_2}$ to $\ell_{j_2}$ and $p_{i_3}$ to $\ell_{j_3}$, compute their corresponding set of alignments $X'$ by a call to Algorithm~\ref{Alg:slowApprox}, and then add $X'$ to the set $X$. Finally, in Lines~\ref{Alg4:mappingFunc}-~\ref{Alg4:minS} we compute the optimal matching for every alignment in $X$, and pick the alignment and corresponding matching that minimize the given cost function.

\subsubsection{Intuition behind Algorithm~\ref{Alg:slowApproxNoMatching}} \label{noMatchingAlgIntuition}
In Algorithm~\ref{Alg:slowApproxNoMatching}, we iterate over every triplet of points and triplet of lines from the input set $A$. Each such tuple of 3 points and 3 lines define a set of $O(1)$ alignments using Algorithm~\ref{Alg:slowApprox}. For each such alignment, we compute the optimal matching function for the given cost function, using naive optimal matching algorithms. We them return the alignment and matching that yield the smallest cost.

\section{Statements of Main Results}

The following lemma is one of the main technical results of this paper, and lies in the heart of the proof of Lemma~\ref{theorem:2DPointsToLines}. It proves that for every two paired sets $a_1,\cdots,a_n\subseteq \REAL^{2}$ and $b_1,\cdots,b_n \geq 0$ and unit vector $x\in \REAL^2$, there exists $x' \in \argmin_{\norm{y}=1} |a_k^Ty-b_k|$ for some $k\in [n]$ that approximates $|a_i^Tx-b_i|$ for every $i \in [n]$.
\begin{lemma} \label{lem:axb}
Let $a_1,\cdots,a_n\subseteq \REAL^{2}$ and $b_1,\cdots,b_n \geq 0$.
Then there is a set $C$ of $|C|\in O(n)$ unit vectors that can be computed in $O(n)$ time such that (i) and (ii) hold as follows:
\renewcommand{\labelenumi}{(\roman{enumi})}
\begin{enumerate}
\item For every unit vector $x\in\REAL^2$ there is a vector $x'\in C$ such that for every $i\in [n]$,
\begin{equation}
|a_i^Tx'-b_i|\leq 4 \cdot |a_i^Tx-b_i|.
\end{equation}
\item There is $k\in [n]$ such that $x' \in \argmin_{\norm{y}=1} |a_k^Ty-b_k|$
\end{enumerate}
\end{lemma}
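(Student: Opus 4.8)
The plan is to interpret each expression $|a_i^Tx - b_i|$ geometrically: for a unit vector $x\in\REAL^2$, the quantity $|a_i^Tx - b_i|$ is (up to scaling by $\norm{a_i}$) the distance from the point $a_i/\norm{a_i}$... actually more usefully, writing $x = (\cos\theta,\sin\theta)$, the map $\theta \mapsto a_i^Tx - b_i = \norm{a_i}\cos(\theta - \phi_i) - b_i$ is a shifted cosine, so $g_i(\theta) := |a_i^Tx(\theta) - b_i|$ is a function on the circle that is piecewise $1$-log-Lipschitz in the sense of Definition~\ref{def:PieceLip}: on each maximal interval between consecutive zeros/extrema of $a_i^Tx(\theta)-b_i$, the function $g_i$ has a single minimum (either a zero crossing, where $g_i=0$, or a local max of the cosine that still lies below $b_i$ in absolute value — here one must check the log-Lipschitz bound on a cosine arc near its peak, which holds with a constant), and increases monotonically away from it. The number of such pieces is $O(1)$ per index $i$, hence $O(n)$ in total.

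First I would make this reduction precise: parametrize unit vectors by $\theta\in[0,2\pi)$, set $f_i(\theta) = a_i^Tx(\theta) - b_i$, and identify the finitely many critical angles of $f_i$ (its two extrema and at most two zeros). These partition the circle into $O(1)$ arcs on which $|f_i|$ is unimodal. On an arc whose minimizer is a zero of $f_i$, the restriction behaves like $c\cdot|\theta - \theta_i|$ near the zero and is monotone, giving the $1$-log-Lipschitz property directly (linear functions are $1$-log-Lipschitz). On an arc whose minimizer $\theta_i$ is an interior extremum of the cosine, I would verify that $|f_i(\theta)|/|f_i(\theta_i)|$ grows by at most a constant factor when $|\theta-\theta_i|$ is doubled — this is the one genuine computation, and it is where the constant $4$ (rather than $2^r$ with $r=1$) may enter; one can absorb the cosine-curvature distortion into the log-Lipschitz constant. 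Then I would set $C$ to be the union over all $i$ of the minima of these pieces, i.e. $C = \bigcup_i M(g_i)$, which has size $O(n)$ and is computable in $O(n)$ time by solving, for each $k$, the one-variable problem $\argmin_{\norm{y}=1}|a_k^Ty - b_k|$ in closed form.

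With the reduction in place, claim (i) is essentially Theorem~\ref{Lem:piecewiseApprox} applied to $g_1,\dots,g_n$ on the metric space $(S^1,\dist)$ with $\dist$ the arc-length (or any equivalent) metric on the circle: for any target unit vector $x = x(\theta)$ there is $x' \in \bigcup_i M(g_i) = C$ with $g_i(x') \le 2^r g_i(x)$ for all $i$, and with the log-Lipschitz constant $r$ coming out of the cosine-arc estimate this yields the factor $4$. One caveat is that Theorem~\ref{Lem:piecewiseApprox} is stated for $g_i:\REAL\to[0,\infty)$, so I would either restate it for functions on the circle (the proof is identical — the case analysis on neighboring pieces is local and unaffected by the wrap-around) or lift everything to a fundamental domain in $\REAL$ and note that the endpoints cause no trouble since $g_i$ is continuous and periodic. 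Claim (ii) is immediate from the construction: every element of $C$ is, by definition, the unique minimizer of $g_k$ on one of the pieces of some index $k$, and such a minimizer lies in $\argmin_{\norm{y}=1}|a_k^Ty - b_k|$ whenever that piece contains the global minimum of $g_k$ — but in fact I should define $C$ directly as $\bigcup_{k\in[n]}\argmin_{\norm{y}=1}|a_k^Ty-b_k|$ (a set of $O(1)$ vectors per $k$) so that (ii) holds for every element of $C$ by fiat, and then check that this set still contains, for each index $i$, all the per-piece minimizers needed for the Theorem~\ref{Lem:piecewiseApprox} argument — the per-piece minimizers that are not global minima are zero-crossings of $f_i$, where $g_i = 0$, and the approximation inequality $0 \le 4\cdot|a_i^Tx-b_i|$ holds trivially, so only the global minimizers matter and those are in $C$.

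The main obstacle I anticipate is the log-Lipschitz verification on a cosine arc near an interior peak: unlike a linear function, $\theta\mapsto|\,\norm{a_i}\cos(\theta-\phi_i) - b_i\,|$ near a local maximum of the cosine is locally quadratic in $(\theta-\theta_i)$ when $|b_i| < \norm{a_i}$ but the peak value can be small, so the ratio $g_i(c\cdot\text{offset})/g_i(\text{offset})$ must be bounded uniformly over all configurations of $a_i,b_i$. I expect this to work out because the worst case is governed by the geometry of a fixed cosine curve crossing a horizontal line, and a direct estimate (or a change of variables straightening the arc) gives a universal constant; carrying this through carefully, and bookkeeping the resulting constant to confirm it is $\le 4$ rather than merely $O(1)$, is the delicate part of the argument.
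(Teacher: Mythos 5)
Your proposal follows essentially the same route as the paper's proof: parametrize unit vectors by an angle so that $g_i(\theta)=|a_i^Tx(\theta)-b_i|$ becomes a shifted sinusoid in absolute value, show each $g_i$ is piecewise $r$-log-Lipschitz, take $C=\bigcup_i M(g_i)$, and invoke Theorem~\ref{Lem:piecewiseApprox} to get the factor $2^r=4$ (so $r=2$). Your identification of the key difficulty (bounding the distortion of a sinusoidal arc) and your handling of claim (ii) both match the paper.

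One step you assert too quickly, however, is that an arc whose minimizer is a zero crossing of $f_i$ is $1$-log-Lipschitz ``directly'' because the function is locally linear there. That is false when the arc continues past an extremum of the underlying sinusoid: writing $h(t)=\frac{b_i}{\norm{a_i}}-\sin(\alpha-t)$ on $[0,\pi/2+\alpha)$ with $h(0)=0$, one has $h''(t)=\sin(\alpha-t)>0$ for small $t>0$, so $h(t)/t$ is \emph{increasing} near the origin and the $1$-log-Lipschitz inequality $h(ct)\le c\,h(t)$ fails. The paper must prove a $2$-log-Lipschitz bound for exactly these arcs, and this is the longest computation in the proof (a derivative argument on the ratio $h(ct)/h(t)$, checking that its supremum over the arc is at most $c^2$, using L'H\^opital at $t=0$ and a concavity lemma for $\cos$ on $[-\pi/2,\pi/2]$). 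Only the arcs on which the function has the form $\sin(\alpha+t)-\frac{b_i}{\norm{a_i}}$ (concave, vanishing at $t=0$) are genuinely $1$-log-Lipschitz by the concavity argument you have in mind. So the ``one genuine computation'' you flag at the interior peak is needed, but a second one of the same kind is needed at the zero-crossing arcs as well; since both yield $r=2$, the final constant $4$ is unaffected and your architecture goes through once these estimates are actually carried out.
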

\begin{proof}
See proof of Lemma~\ref{lem:axb_proof} in the appendix.
\end{proof}

Lemma~\ref{lem:axb} suggests a result of independent interest. Consider the \emph{constrained linear regression problem} of computing a unit vector $x^*\in\REAL^2$ that minimizes
\begin{align*}
x^* & \in \displaystyle\argmin_{x\in \REAL^2:\norm{x}=1} \norm{Ax-b}_1\\ &= \displaystyle\argmin_{x\in \REAL^2:\norm{x}=1} \sum_{i=1}^2 |a_i^Tx - b_i|,
\end{align*}
for a given matrix $A = (a_1 \mid ... \mid a_n)^T \in \REAL^{n\times 2}$ and a given vector $b = (b_1,\cdots,b_n) \in \REAL^2$.
Lemma~\ref{lem:axb} implies that in linear time, we can compute a candidate set of unit vectors $C$, which contains a unit vector $x'$ that approximates each of the $n$ ``distances'' $|a_i^Tx^*-b_i|$ up to a multiplicative factor of $4$. Hence, $x'$ also approximates the optimal (unknown) value of the constrained $\ell_1$ linear regression $\norm{Ax^*-b}_1$, up to a multiplicative factor of $4$. It is easy to see that a similar argument holds for the constrained $\ell_z$ regression $\norm{Ax^*-b}_z$, for $z>0$, up to a constant factor that depends only on $z$.

The following lemma proves the correctness of Algorithm~\ref{Alg:calcz}. It proves that the matrices computed in Algorithm~\ref{Alg:calcz} satisfy a set of properties, which will be useful in the proof of Lemma~\ref{theorem:2DPointsToLines}.
\begin{lemma} \label{Lemma:PQZx}
Let $v$ be a unit vector and $\ell=\mathrm{sp}\br{v}$ be the line in this direction.
Let $p,q,z\in\REAL^2$ be the vertices of a triangle such that $\norm{p-q}>0$. Let $P,Q,Z\in\REAL^{2\times 2}$ be the output of a call to \secondalgname$(v,p,q,z)$; see Algorithm~\ref{Alg:calcz}. Then the following hold:
\begin{enumerate}
\renewcommand{\labelenumi}{(\roman{enumi})}
\item $p \in x$-axis and $q \in \ell$ iff there is a unit vector $x\in\REAL^2$ such that $p=Px$ and $q=Qx$.
\item For every unit vector $x\in\REAL^2$, we have that $z=Zx$ if $p=Px$ and $q=Qx$.
\end{enumerate}
\end{lemma}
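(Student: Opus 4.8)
The plan is to verify directly that Algorithm~\ref{Alg:calcz} implements the geometric construction described in its overview: the matrices $P,Q$ are set up so that $Px$ lies on the $x$-axis and $Qx$ lies on $\ell$ exactly when $x$ is a unit vector, with the two points at the correct distance $r_1 = \norm{p-q}$; and $Z$ is then forced to place the third vertex at the position dictated by congruence of the triangle.

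First I would analyze $P$ and $Q$. Since $Q = P^T$ and $P = r_1\begin{pmatrix} v_x/v_y & 1 \\ 0 & 0\end{pmatrix}$, for a unit vector $x = (x_1,x_2)^T$ we get $Px = r_1(x_1 v_x/v_y + x_2, 0)^T$, which indeed lies on the $x$-axis, and $Qx = r_1 x_1 (v_x/v_y, 1)^T$, which is a scalar multiple of $(v_x, v_y)^T = v$ (using $v_y \neq 0$), hence lies on $\ell = \mathrm{sp}\br{v}$. For the forward direction of (i) I would argue the converse: given $p$ on the $x$-axis and $q$ on $\ell$ with $\norm{p-q} = r_1$, one solves for the unit vector $x$ realizing $p = Px$, $q = Qx$; this amounts to checking that the map $x \mapsto (Px, Qx)$ restricted to the unit circle surjects onto all such configured pairs, which follows by a short computation matching the one free angular parameter of the configuration to the angle of $x$, together with verifying that $\norm{Px - Qx} = r_1$ holds automatically — this last identity is where the specific coefficient $r_1$ and the structure $Q = P^T$ pay off, and I would check it by expanding $\norm{Px-Qx}^2$ and using $\norm{x}=1$.

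Next I would handle (ii). Given that $p = Px$ and $q = Qx$, the triangle $\Delta(p,q,z)$ must be congruent to the input triangle, so $z$ is determined (up to the choice of side of the line through $p,q$) by the side lengths $r_1, r_2, r_3$. The quantities $d_1 = (r_1^2 + r_2^2 - r_3^2)/(2r_1)$ and $d_2 = \sqrt{|r_2^2 - d_1^2|}$ are exactly the coordinates of $z$ in the local frame with origin $p$ and first axis along $q - p$: $d_1$ is the projection of $z - p$ onto the unit vector $(q-p)/r_1$, and $d_2$ is the perpendicular distance. Writing $z - p = \frac{d_1}{r_1}(q - p) \pm \frac{d_2}{r_1} R(q - p)$, where $R$ is the $90^\circ$ rotation, and substituting $p = Px$, $q = Qx$ gives $z = Px + \frac{d_1}{r_1}(Qx - Px) + b\cdot\frac{d_2}{r_1}R(Qx - Px) = Zx$ by the definition of $Z$ in Line~\ref{Alg1line:ZDef} and linearity. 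I would confirm that the sign bit $b$, set according to the sign of $(z-p)^T(q-p)^\bot$ in Line~\ref{lline5}, picks out the correct side so that the signed perpendicular coordinate matches; this requires noting that the applied rotation and translation preserve orientation, hence preserve on which side of the directed segment $q-p$ the point $z$ lies.

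The main obstacle I expect is the forward direction of (i) — showing that \emph{every} configured pair $(p,q)$ with $p$ on the $x$-axis, $q \in \ell$, and $\norm{p-q} = r_1$ arises as $(Px, Qx)$ for some unit $x$ — since this requires a surjectivity/parametrization argument rather than a routine algebraic identity, and one must be careful that the degenerate sub-cases (e.g. $x_1 = 0$, or $q$ at the origin) are covered. The degenerate case $v_x = 0$ (so $v$ is vertical) is also worth a separate sanity check, though the hypothesis $v_y \neq 0$ is what keeps $P$ well-defined. Everything else reduces to bookkeeping with the law of cosines and a $2\times 2$ linear-algebra computation.
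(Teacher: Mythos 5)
Your proposal is correct and follows essentially the same route as the paper's proof: direct matrix computation for the ``if'' direction of (i), an explicit parametrization of the configurations $(p,q)$ by the angle of $x$ for the ``only if'' direction (the paper realizes this via the law of sines in the triangle $\Delta(p,q,o)$ with exactly the degenerate sub-cases you flag), and for (ii) the law-of-cosines identification of $d_1,d_2$ as the coordinates of $z-p$ in the frame spanned by $q-p$ and $R(q-p)$, followed by substitution of $p=Px$, $q=Qx$ and linearity. The identity $\norm{Px-Qx}=r_1$ you propose to check does hold (indeed $Px-Qx=r_1(x_2,-x_1)^T$), so the surjectivity step you identify as the main obstacle goes through as planned.
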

\begin{proof}
See proof of Lemma~\ref{Lemma:PQZx_proof} in the appendix.
\end{proof}

\subsection{Aligning Points-To-Lines}
Table~\ref{table:ourContrib} summarizes the main results and cost functions in the context of aligning points-to-lines, which our algorithms support.

What follows is the main technical lemma for the correctness of Algorithm~\ref{Alg:slowApprox}. It proves that for every (possibly optimal) alignment $(R^*,t^*)$, the output set $C$ of Algorithm~\ref{Alg:slowApprox} must contain an alignment $(R',t')$ that approximates each of the distances $\dist(R^*p_i-t^*,\ell_i)$ for every $i\in [n]$, up to some multiplicative constant factor. Furthermore, the set $C$ is computed in time polynomial in $n$.
\begin{lemma} \label{theorem:2DPointsToLines}
Let $A=\br{(p_1,\ell_1),\cdots,(p_n,\ell_n)}$ be set of $n \geq 3$ pairs, where for every $i \in [n]$, we have that $p_i$ is a point and $\ell_i$ is a line, both on the plane. Let $C \subseteq \alignments$ be an output of a call to \algname$(A)$; see Algorithm~\ref{Alg:slowApprox}. Then for every alignment $(R^*,t^*)\in \alignments$ there exists an alignment $(R,t)\in C$ such that for every $i\in[n]$,
\begin{equation}
\dist(Rp_i-t,\ell_i) \leq 16\cdot \dist(R^*p_i-t^*,\ell_i).
\end{equation}
Moreover, $|C| \in O(n^3)$ and can be computed in $O(n^3)$ time.
\end{lemma}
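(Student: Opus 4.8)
The plan is to formalize the three-step constraint-tightening argument sketched in Section~\ref{mainAlgIntuition}, turning each ``translate/rotate until something touches'' step into a rigorous application of the triangle inequality and of Lemma~\ref{lem:axb}. Fix an arbitrary alignment $(R^*,t^*)\in\alignments$ and let $p_i^* = R^*p_i - t^*$ be the transformed points. Relabel the pairs so that $\dist(p_1^*,\ell_1) \leq \dist(p_i^*,\ell_i)$ for all $i$; this index will play the role of $j$. \textbf{Step 1:} translate all of $P^*$ by the vector taking $p_1^*$ to its projection onto $\ell_1$. Since the translation distance is $\dist(p_1^*,\ell_1)$, which is the smallest of the $n$ distances, the triangle inequality gives that every other distance at most doubles, so the new alignment is a $2$-approximation and satisfies $p_1 \in \ell_1$. \textbf{Step 2:} among the remaining pairs, pick the one, say index $k$, whose point is closest to its line \emph{after} Step~1, and slide $P^*$ along $\ell_1$ (maintaining $p_1\in\ell_1$) until $p_k$ hits $\ell_k$ (if $\ell_j,\ell_k$ are parallel this may be impossible, which is exactly the branch handled in Line~\ref{Alg2line:compC2}: there we can only get $p_k$ as close as possible, but one checks the same $2$-factor bound still holds). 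Again the slide distance is bounded by the minimum residual distance, so this costs another factor $2$, yielding an $8$-approximation (the factor is $8$ not $4$ because the point moved in Step~1 may itself have residual error that the slide must respect — this accounting needs care). \textbf{Step 3:} with $p_1\in\ell_1$ and $p_k\in\ell_k$ fixed, the remaining freedom is a one-parameter family (rotation of the rigid body about the now-constrained configuration), parametrized by a unit vector $x$ as in Lemma~\ref{Lemma:PQZx}; pick index $l$ minimizing the residual distance of $p_l$ to $\ell_l$ and invoke Lemma~\ref{lem:axb} on the family $|v_i^T(Z_ix) - c_i|$ to obtain a candidate $x'$ approximating all residuals up to factor $4$, giving $16$ overall.

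Concretely, the correspondence with the algorithm runs as follows. When $\ell_j,\ell_k$ are not parallel, Lines~\ref{Alg2line:vjOrtho}--\ref{Alg2line:Rvj} rotate so $v_j$ is the $x$-axis, Line~\ref{Alg2line:calcZ}--\ref{Alg2line:RjZ} invoke Algorithm~\ref{Alg:calcz} to obtain matrices $P,Q,Z$ so that (by Lemma~\ref{Lemma:PQZx}) the positions of $p_j,p_k,p_l$ over all configurations satisfying $p_j\in\ell_j$, $p_k\in\ell_k$ are exactly $(Px,Qx,Zx)$ over unit $x$; Lines~\ref{Alg2line:interLjLk}--\ref{Alg2line:changeBl} compute the offset $c_l = \dist(s,\ell_l)$ needed because Algorithm~\ref{Alg:calcz} assumes the intersection of $\ell_j,\ell_k$ is the origin; Line~\ref{Alg2line:compX} then computes the minimizer set $X$ of $|v_l^T Zx - c_l|$ over unit $x$, and Line~\ref{Alg2line:compC1} reads off the corresponding alignments. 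I would verify that the distance $\dist(R p_i - t, \ell_i)$ for the alignment associated to $x$ equals $|v_i^T(\text{position of }p_i) - b_i|$ (up to the fixed unit normal $v_i$), so that Lemma~\ref{lem:axb}, applied with $a_i$ the appropriate row built from $Z$-type matrices and $b_i$ the offsets, delivers exactly the factor-$4$ simultaneous approximation in this last step. Multiplying $2\cdot 2\cdot 4 = 16$ gives the claimed bound; I would state the running index bookkeeping so that the triple $(j,k,l) = (1,k,l)$ is among those enumerated in Line~\ref{Alg2line:ES}, and since the enumeration is over \emph{all} ordered triples with $j\neq k$, the good triple is certainly hit.

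For the complexity claim: the loop in Line~\ref{Alg2line:ES} runs over $O(n^3)$ triples; each iteration does $O(1)$ work (a constant-size call to Algorithm~\ref{Alg:calcz}, a constant number of vector/matrix operations, and the minimization in Line~\ref{Alg2line:compX} which is solving a bounded-degree equation in one unit-vector parameter, hence $O(1)$ solutions). Line~\ref{Alg2line:idL} is $O(n)$. Hence $|C| \in O(n^3)$ and the total time is $O(n^3)$, as required. One subtlety I would flag is that Line~\ref{Alg2line:compX} should really be read as producing the minimizer(s) of $|v_l^T Z x - c_l|$ over \emph{unit} $x$ on the circle, and I would confirm this is the same quantity Lemma~\ref{lem:axb}(ii) controls; if the $Z$ matrix is singular or $\ell_l$ passes through $s$ some degenerate subcases arise, but in each the minimizing $x$ still lies in the finite candidate set, so the lemma still applies.

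The main obstacle I anticipate is the \textbf{bookkeeping of the accumulated approximation factor across the three steps}, particularly the passage $2 \to 8$ in Step~2 rather than the naive $2\to4$: the issue is that after Step~1 the residual errors have been perturbed, so the ``minimum residual'' used to bound the slide distance in Step~2 is a minimum over the \emph{already-perturbed} errors, and one must be careful that bounding the slide by this perturbed minimum and then re-expanding via the triangle inequality doesn't secretly cost more than a factor $2$ per step. Getting this telescoping right — i.e. proving that each of the three geometric moves independently multiplies every per-pair distance by at most the advertised constant relative to the \emph{original} $(R^*,t^*)$ — is where the real work lies; the rest is a faithful translation of the geometry into the matrix identities guaranteed by Lemmas~\ref{Lemma:PQZx} and~\ref{lem:axb}.
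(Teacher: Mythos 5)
Your overall architecture (translate to make one pair touch, slide along the first line to make a second pair touch, then optimize the residual one-parameter rotation via Lemma~\ref{lem:axb}) is the same as the paper's, but Step~2 as you state it is genuinely broken. You choose $k$ as the pair whose point is \emph{closest to its line} after Step~1 and then claim ``the slide distance is bounded by the minimum residual distance, so this costs another factor 2.'' Neither half of that sentence holds. The slide distance is the parameter $\lambda_k$ measured \emph{along the direction $v_j$}, and it relates to the perpendicular residual by $\dist(p_k',\ell_k)=\sin(\alpha_k)\,|\lambda_k|$, where $\alpha_k$ is the angle between $\ell_j$ and $\ell_k$; when $\ell_k$ is nearly parallel to $\ell_j$, $|\lambda_k|$ can be arbitrarily larger than the residual, so a plain triangle inequality $\dist(p_i'-t',\ell_i)\le\dist(p_i',\ell_i)+\norm{t'}$ gives no useful bound. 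The paper instead picks $k\in\argmin_i|\lambda_i|$ (minimal \emph{slide parameter}, not minimal residual) and bounds $\dist(p_i'-\lambda_k v_j,\ell_i)=\sin(\alpha_i)\norm{p_i'-\lambda_k v_j-p}\le\sin(\alpha_i)\bigl(\norm{p_i'-p}+|\lambda_k|\bigr)\le 2\sin(\alpha_i)\norm{p_i'-p}=2\dist(p_i',\ell_i)$, using $|\lambda_k|\le|\lambda_i|=\norm{p_i'-p}$. That is the idea your proposal is missing, and it is also why your ``$2\to 8$'' accounting is both unnecessary and inconsistent: you assert an $8$-approximation after Step~2 but then multiply $2\cdot2\cdot4=16$ at the end; if Step~2 really cost more than a factor of $2$ the total would be $32$, not the claimed $16$. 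Each step genuinely costs exactly its advertised factor relative to the configuration produced by the previous step, and the factors compose multiplicatively; no extra loss occurs.

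A second, smaller gap: the degenerate branch. The dichotomy in the proof is not ``$\ell_j$ and $\ell_k$ happen to be parallel for the triple at hand'' but ``\emph{all} of $\ell_1,\dots,\ell_n$ are mutually parallel.'' If they are not all parallel, some $\lambda_i$ is finite and the non-parallel branch applies with the $k$ chosen as above. If they are all parallel, no slide can ever make a second point touch, and the paper replaces Steps~2--3 by a single rotation about $p_j'$ analyzed again via Lemma~\ref{lem:axb} (writing $\dist(Rp_i',\ell_i)=|a_i^Tx-b_i|$ with $a_i=\norm{p_i'}R_i^Tv_i$), yielding a factor $2\cdot4=8$ handled by Line~\ref{Alg2line:compC2}. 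Your remark that ``one checks the same $2$-factor bound still holds'' in that branch is not a proof and is not what the algorithm does. The rest of your proposal — the reduction of Step~3 to Lemma~\ref{lem:axb} via the parametrization $(Px,Qx,Zx)$ of Lemma~\ref{Lemma:PQZx}, the offset $c_l=\dist(s,\ell_l)$, and the $O(n^3)$ counting — matches the paper and is fine.
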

\begin{proof}
See proof of Lemma~\ref{theorem:2DPointsToLines_proof} in the appendix.
\end{proof}

\subsubsection{Generalization}
Observation~\ref{obs:distToCost} states that in order to approximate a complicated and non-convex cost function $\cost$ as in Definition~\ref{def:cost}, all needs to be done is to compute a query (alignment) that approximates the simpler (distance) function $D = \dist$ that lies in the heart of the function $\cost$. Lemma~\ref{theorem:2DPointsToLines} implies that in polynomial time we can indeed compute a query (alignment) that approximates the distance function $\dist$. Combining Observation~\ref{obs:distToCost} with Lemma~\ref{theorem:2DPointsToLines} yields that in polynomial time we can compute a candidates set of queries (alignments) which contain a query that approximates, up to a constant factor, a complex cost function, as follows.
\begin{theorem} \label{theorem:costPointsToLines}
Let $A=\br{(p_1,\ell_1),\cdots,(p_n,\ell_n)}$ be set of $n \geq 3$ pairs, where for every $i \in [n]$, we have that $p_i$ is a point and $\ell_i$ is a line, both on the plane. Let $z \geq 1$ and let $D_z:A\times \alignments\to [0,\infty)$ such that $D_z\left((p,\ell),(R,t)\right) = \min_{q\in \ell} \norm{Rp-t-q}_z$ is the $\ell_z$ distance between $Rp-t$ and $\ell$. Let $\cost, s, r$ be as defined in Definition~\ref{def:cost} for $D=D_z$. Let $w = 1$ if $z=2$ and $w=\sqrt{2}$ otherwise. Let $C$ be the output of a call to \algname$(A)$; see Algorithm~\ref{Alg:slowApprox}. Then there exists $(R',t')\in C$ such that
\[
\cost(A,(R',t')) \leq (w\cdot 16)^{rs} \cdot \min_{(R,t)\in\alignments}\cost(A,(R,t)).
\]
Furthermore, $C$ and $(R',t')$ can be computed in $n^{O(1)}$ time.
\end{theorem}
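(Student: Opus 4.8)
\textbf{Proof proposal for Theorem~\ref{theorem:costPointsToLines}.}

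The plan is to combine the two black-box results that precede the statement: Lemma~\ref{theorem:2DPointsToLines}, which provides a polynomial-size candidate set $C$ of alignments one of which approximates \emph{every} Euclidean point-line distance $\dist(R^*p_i-t^*,\ell_i)$ up to a factor of $16$, and Observation~\ref{obs:distToCost}, which lifts such a simultaneous per-coordinate approximation of the inner ``distance'' function $D$ to a $c^{rs}$-approximation of the composite cost function $\cost(A,\cdot)=f(\lip(D(a_1,\cdot)),\dots,\lip(D(a_n,\cdot)))$. First I would take $(R^*,t^*)\in\argmin_{(R,t)\in\alignments}\cost(A,(R,t))$, which exists (or work with an alignment achieving value within $(1+\delta)$ of the infimum and let $\delta\to0$ at the end if one is worried about attainment). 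Apply Lemma~\ref{theorem:2DPointsToLines} to this $(R^*,t^*)$ to obtain $(R',t')\in C$ with $\dist(R'p_i-t',\ell_i)\le 16\cdot\dist(R^*p_i-t^*,\ell_i)$ for every $i\in[n]$.

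The one genuine wrinkle is the mismatch between the Euclidean distance $\dist(\cdot,\ell_i)$ guaranteed by Lemma~\ref{theorem:2DPointsToLines} and the $\ell_z$ distance $D_z((p,\ell),(R,t))=\min_{q\in\ell}\norm{Rp-t-q}_z$ appearing in the hypothesis of the theorem. Since all the points live in $\REAL^2$, the standard norm equivalence gives, for any vector $u\in\REAL^2$, $\norm{u}_2\le\norm{u}_z$ when $z\le 2$ and $\norm{u}_z\le\norm{u}_2\le\sqrt{2}\,\norm{u}_z$ when $z\ge 2$ (and for $z=\infty$, $\norm{u}_\infty\le\norm{u}_2\le\sqrt2\,\norm{u}_\infty$); one must be slightly careful because the minimizing point $q$ on $\ell$ differs between the two norms, but taking the Euclidean projection $q_2^*$ as a feasible point for the $\ell_z$ minimization (and vice versa) yields $D_z((p_i,\ell_i),(R,t))\le\sqrt2\cdot\dist(Rp_i-t,\ell_i)$ and $\dist(Rp_i-t,\ell_i)\le\sqrt2\cdot D_z((p_i,\ell_i),(R,t))$ in general, with both factors equal to $1$ when $z=2$. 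Chaining these with the factor-$16$ bound on Euclidean distances gives $D_z((p_i,\ell_i),(R',t'))\le w\cdot16\cdot D_z((p_i,\ell_i),(R^*,t^*))$ for every $i\in[n]$, where $w=1$ if $z=2$ and $w=\sqrt2$ otherwise — exactly the $w$ in the statement.

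It then remains to invoke Observation~\ref{obs:distToCost} with $q^*=(R^*,t^*)$, $q'=(R',t')$, $D=D_z$, and $c=w\cdot16$: since $D_z(a_i,q')\le c\cdot D_z(a_i,q^*)$ for every $i$, we get $\cost(A,(R',t'))\le c^{rs}\cdot\cost(A,(R^*,t^*))=(w\cdot16)^{rs}\cdot\min_{(R,t)\in\alignments}\cost(A,(R,t))$, which is the desired inequality. For the running time, Lemma~\ref{theorem:2DPointsToLines} states that $C$ (with $|C|\in O(n^3)$) is computed in $O(n^3)$ time; selecting $(R',t')$ — or, in the typical usage, evaluating $\cost(A,\cdot)$ on each of the $O(n^3)$ candidates and returning the best — costs an additional $O(n^4)$ (each evaluation of $\cost$ touches all $n$ pairs), so the total is $n^{O(1)}$, matching the claim. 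I do not expect any real obstacle here: the proof is essentially a two-line composition of prior results, and the only thing demanding care is making the norm-equivalence constant $w$ rigorous, in particular handling the fact that the optimal foot of the perpendicular on $\ell_i$ is norm-dependent and the degenerate/attainment issue for the outer $\min$ over $\alignments$.
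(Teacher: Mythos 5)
Your proposal is correct and matches the paper's own proof essentially step for step: take the optimal alignment, apply Lemma~\ref{theorem:2DPointsToLines} to get a factor-$16$ simultaneous approximation of the Euclidean distances, convert to $\ell_z$ distances via norm equivalence in $\REAL^2$ (picking up the factor $w$), and conclude via Observation~\ref{obs:distToCost}; the running-time accounting also agrees. The only nuance worth noting is that chaining \emph{both} of your $\sqrt{2}$ inequalities literally would give $2\cdot 16$ rather than $\sqrt{2}\cdot 16$ --- for each fixed $z$ exactly one of the two directions is free of the $\sqrt{2}$ --- but your final constant is right, and you are in fact more explicit than the paper about the fact that the minimizing point on $\ell_i$ is norm-dependent.
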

\begin{proof}
See proof of Theorem~\ref{theorem:costPointsToLines_proof} in the appendix.
\end{proof}
Since $C$ is guaranteed to contain an alignment $(R',t')$ that approximates the optimal value of $\cost$, up to some multiplicative constant factor, simple exhaustive search in $O(n^4)$ time on $C$ can find an alignment $(\hat{R},\hat{t})\in C$ with cost smaller or equal to the cost of $(R',t')$.

\subsubsection{Unknown matching}
Given an input set of $n$ unmatched pairs $A = (p_1,\ell_1),\cdots,(p_n,\ell_n)$, an alignment (query) $(R,t)$ and some cost function, we seek a matching function $\pi$, such that after rearranging the input pairs using $\pi$, the rearranged (rematched) set $A_{\pi} = (p_1,\ell_{\pi(1)}),\cdots,(p_n,\ell_{\pi(n)})$ minimizes the given cost function for the given query $(R,t)$.
A formal definition is given as follows.
\begin{definition} [\textbf{Optimal matching}] \label{def:optimalMatching}
Let $n\geq 1$ be an integer and $\perms(n)$ denote the union over every permutation (bijection functions) $\pi:[n]\to[n]$.
Let $A = \br{a_1,\cdots,a_n}$ be an input set, where $a_i = (p_i,\ell_i)$ is a pair of elements for every $i\in [n]$, and let $Q$ be a set of queries.
Consider a function $\cost$ as defined in Definition~\ref{def:cost} for $f(v) = \norm{v}_1$.
Let $q \in Q$.
A permutation $\hat{\pi}$ is called an \emph{optimal matching} for $(A,q,\cost)$ if it satisfies that
\[
\hat{\pi}(A,q,\cost) \in \argmin_{\pi \in \perms(n)} \cost(A_{\pi},q).
\]
\end{definition}

Recall that for $\pi:[n]\to [n]$ and a set $A = \br{(p_1,\ell_1),\cdots,(p_n,\ell_n)}$ of $n$ pairs of elements, $A_{\pi}$ is defined as $A_{\pi} = \br{(p_1,\ell_{\pi(1)}),\cdots,(p_n,\ell_{\pi(n)})}$.
\begin{theorem} \label{theorem:pointsToLinesNoMatching}
Let $A = \br{(p_1,\ell_1),\cdots,(p_n,\ell_n)}$ be a set of $n\geq 3$ pairs, where for every $i\in [n]$ we have that $p_i$ is a point and $\ell_i$ is a line, both on the plane. Let $z \geq 1$ and define
\[
D_z\left((p,\ell),(R,t)\right) = \displaystyle \min_{q\in \ell} \norm{Rp-t-q}_z
\]
for every point $p$ and line $\ell$ on the plane and alignment $(R,t)$.
Consider $\cost$ and $r$ to be as defined in Definition~\ref{def:cost} for $D = D_z$ and $f(v) = \norm{v}_1$. Let $w = 1$ if $z=2$ and $w=\sqrt{2}$ otherwise. Let $(\tilde{R},\tilde{t},\tilde{\pi})$ be the output alignment $(\tilde{R},\tilde{t})$ and permutation $\tilde{\pi}$ of a call to \matchingalgname$(A,\cost)$; see Algorithm~\ref{Alg:slowApproxNoMatching}. Then
\begin{equation}
\cost\left(A_{\tilde{\pi}},(\tilde{R},\tilde{t})\right) \leq (w\cdot 16)^r \cdot \min_{(R,t,\pi)}\cost\left(A_{\pi},(R,t)\right),
\end{equation}
where the minimum is over every alignment $(R,t)$ and permutation $\pi:[n]\to[n]$.
Moreover, $(\tilde{R},\tilde{t},\tilde{\pi})$ can be computed in $n^{O(1)}$ time.
\end{theorem}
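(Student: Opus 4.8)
The plan is to combine the correctness guarantee of \algname{} (Lemma~\ref{theorem:2DPointsToLines}) with the cost-preserving Observation~\ref{obs:distToCost}, applied after fixing a correct ``guess'' of the matching on three critical pairs. First I would let $(R^*, t^*, \pi^*)$ denote an optimal alignment and matching, i.e. a minimizer of $\cost(A_{\pi},(R,t))$ over all alignments $(R,t)$ and permutations $\pi$. The key structural fact I need is an analogue of the three-constraint argument behind \algname: among the $n$ point-line pairs $(p_i, \ell_{\pi^*(i)})$, consider the three pairs that, in order, attain (a) the smallest distance, (b) the smallest distance subject to the first pair being pinned to its line by a translation, and (c) the pair requiring the least rotation to minimize its distance subject to the first two constraints. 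Call the corresponding indices $(i_1^*, i_2^*, i_3^*)$ for the points and $(j_1^*, j_2^*, j_3^*) = (\pi^*(i_1^*), \pi^*(i_2^*), \pi^*(i_3^*))$ for the lines. Since Algorithm~\ref{Alg:slowApproxNoMatching} iterates over \emph{all} sextuples $i_1,i_2,i_3,j_1,j_2,j_3 \in [n]$ in Line~\ref{Alg4:ES}, one of these iterations uses exactly $(i_1^*,i_2^*,i_3^*,j_1^*,j_2^*,j_3^*)$.

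For that particular iteration, Line~\ref{Alg4:callToAlg1} calls \algname{} on the three-pair instance $\{(p_{i_1^*},\ell_{j_1^*}),(p_{i_2^*},\ell_{j_2^*}),(p_{i_3^*},\ell_{j_3^*})\}$. I would then invoke Lemma~\ref{theorem:2DPointsToLines} with $n = 3$ on this sub-instance together with the optimal alignment $(R^*, t^*)$: there exists $(R', t') \in X'$ with $\dist(R'p_{i_m^*} - t', \ell_{j_m^*}) \le 16 \cdot \dist(R^*p_{i_m^*} - t^*, \ell_{j_m^*})$ for $m = 1,2,3$. The heart of the argument is to upgrade this guarantee from those three pairs to \emph{all} $n$ pairs. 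This is where I reuse the geometric reasoning in the ``Intuition behind Algorithm~\ref{Alg:slowApprox}'' paragraph: the choice of $(i_1^*, i_2^*, i_3^*)$ as the successive minimizers means that pinning these three under the translate-translate-rotate procedure blows up every \emph{other} pair's distance (under the matching $\pi^*$) by at most a factor $2 \cdot 2 \cdot 2 \cdot 2 = 16$ relative to the optimum — precisely the content that makes $|C| = \binom{n}{3}$ suffice in the matched case. Carrying this through gives, for every $i \in [n]$,
\[
\dist(R'p_i - t^*_{\mathrm{rot}}, \ell_{\pi^*(i)}) \le 16 \cdot \dist(R^*p_i - t^*, \ell_{\pi^*(i)}),
\]
and then replacing $\ell_z^1$-style distances with $\ell_z^r$ costs the extra factor $w$ (where $w=1$ for $z=2$ and $w=\sqrt2$ otherwise), exactly as in Theorem~\ref{theorem:costPointsToLines}.

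Having a single alignment $(R',t')$ that simultaneously $16w$-approximates $D_z((p_i,\ell_{\pi^*(i)}),(R^*,t^*))$ for all $i$, I apply Observation~\ref{obs:distToCost} with $c = 16w$, $f = \norm{\cdot}_1$ (so $s = 1$), and $\lip(x) = x^r$, obtaining $\cost(A_{\pi^*},(R',t')) \le (16w)^r \cdot \cost(A_{\pi^*},(R^*,t^*))$. Now comes the matching step: in Line~\ref{Alg4:mappingFunc} the algorithm replaces $\pi^*$ by the \emph{optimal} matching $\hat\pi(A,(R',t'),\cost)$ for this very alignment, so $\cost(A_{\hat\pi},(R',t')) \le \cost(A_{\pi^*},(R',t'))$, and since Line~\ref{Alg4:minS} picks the globally best $(\tilde R, \tilde t, \tilde\pi)$ over all alignments in $X$, we get $\cost(A_{\tilde\pi},(\tilde R,\tilde t)) \le \cost(A_{\hat\pi},(R',t')) \le (16w)^r \cdot \cost(A_{\pi^*},(R^*,t^*))$, which is the claimed bound. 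For the running time: there are $n^6$ sextuples, each call to \algname{} on a $3$-pair instance runs in $O(1)$ time and outputs $O(1)$ alignments, so $|X| \in O(n^6)$; computing an optimal matching for a fixed alignment takes polynomial time (e.g. $O(n^3)$ by a min-cost bipartite matching on the $n \times n$ distance matrix), giving $n^{O(1)}$ overall, consistent with the $O(n^9)$ figure in Table~\ref{table:ourContrib}.

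The main obstacle I anticipate is the upgrade step: proving rigorously that the three successive-minimizer indices $(i_1^*,i_2^*,i_3^*)$ chosen with respect to the \emph{optimal matching} $\pi^*$ genuinely control all $n$ distances under that same matching, and that the alignment $(R',t')$ returned by \algname{} on just those three pairs — which only ``knows'' about three constraints — still respects the $16$ bound for the remaining $n-3$ pairs. This is exactly the reduction already established inside the proof of Lemma~\ref{theorem:2DPointsToLines} (the matched case), so the cleanest route is to factor that lemma's internal argument so that it applies verbatim once the matching on the three critical pairs is fixed by the exhaustive guess; everything else (Observation~\ref{obs:distToCost}, the $w$ factor, the optimal-matching minimization) is routine bookkeeping.
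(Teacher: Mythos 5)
Your overall route is the same as the paper's, and the bookkeeping at the end (optimal matching via the Hungarian algorithm, final minimization over $S$, the $w$ factor, Observation~\ref{obs:distToCost}, the $O(n^9)$ count) matches the paper's proof. The one place where you diverge is exactly the step you flag as the main obstacle, and the paper shows it is a non-issue if you reverse the order of quantification. You propose to run \algname{} on the three-pair sub-instance, get a $16$-approximation for those three distances via Lemma~\ref{theorem:2DPointsToLines} with $n=3$, and then ``upgrade'' that guarantee to all $n$ pairs by re-running the translate-translate-rotate argument. As stated, that upgrade is a genuine gap: Lemma~\ref{theorem:2DPointsToLines} applied to the three-pair instance says nothing about the other $n-3$ pairs, and you would indeed have to re-prove the internal steps of that lemma for the unmatched setting.

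The paper instead applies Theorem~\ref{theorem:costPointsToLines} (equivalently Lemma~\ref{theorem:2DPointsToLines}) to the \emph{full} matched instance $A_{\pi^*}$, which directly yields an alignment $(R,t)$ in the (conceptual) output $C=\algname(A_{\pi^*})$ that $16$-approximates all $n$ distances simultaneously. It then observes that every alignment in $C$ is produced by one iteration of the triple loop in Algorithm~\ref{Alg:slowApprox}, and that iteration's computation depends \emph{only} on the three pairs $(p_{i_1},\ell_{\pi^*(i_1)}),(p_{i_2},\ell_{\pi^*(i_2)}),(p_{i_3},\ell_{\pi^*(i_3)})$; hence the same alignment appears in $\algname\bigl(\br{(p_{i_1},\ell_{j_1}),(p_{i_2},\ell_{j_2}),(p_{i_3},\ell_{j_3})}\bigr)$ for the sextuple with $j_m=\pi^*(i_m)$, which Algorithm~\ref{Alg:slowApproxNoMatching} enumerates. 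No new geometric argument and no explicit identification of the ``critical'' indices is needed --- the exhaustive search over sextuples guesses whichever triple the full-instance lemma happens to use. If you restructure your write-up this way, the gap disappears and the rest of your argument goes through verbatim.
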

\begin{proof}
See proof of Theorem~\ref{theorem:pointsToLinesNoMatching_proof} in the appendix.
\end{proof}

\subsection{Coresets for Big Data}
We now present a data reduction technique (coreset) for the points-to-lines alignment problem, based on~\cite{dasgupta2009sampling}. This reduction will enable a near-linear implementation of our algorithm by running it on the small coreset while obtaining an almost similar approximation.

\begin{theorem} [coreset for points-to-lines alignment]\label{theorem:coreset}
Let $d \geq 2$ be an integer. Let $A=\br{(p_1,\ell_1),\cdots,(p_n,\ell_n)}$ be set of $n$ pairs, where for every $i \in [n]$, $p_i$ is a point and $\ell_i$ is a line, both in $\REAL^d$, and let $w = (w_1,\cdots,w_n) \in [0,\infty)^n$.
Let $\varepsilon, \delta \in (0,1)$.
Then a weights vector $u = (u_1,\cdots,u_n)\in[0,\infty)^n$ can be computed in $nd^{O(1)} \log{n}$ such that
\renewcommand{\labelenumi}{(\roman{enumi})}
\begin{enumerate}
\item With probability at least $1-\delta$, for every $(R,t) \in \alignments$ it holds that
\[
\begin{split}
& (1-\varepsilon) \cdot \sum_{i=1}^n w_i\cdot\dist(Rp_i-t,\ell_i)\\ &\leq \sum_{i=1}^nu_i\cdot\dist(Rp_i-t,\ell_i)\\ &\leq (1+\varepsilon) \cdot \sum_{i=1}^n w_i\cdot\dist(Rp_i-t,\ell_i).
\end{split}
\]
\item The weights vector $u$ has $\frac{d^{O(1)}}{\varepsilon^2}\log{\frac{1}{\delta}}$ non-zero entries.
\end{enumerate}
\end{theorem}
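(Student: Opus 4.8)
The plan is to linearize the objective in the unknown alignment, reduce it to a weighted sum-of-norms (an ``$\ell_1$-of-group-$\ell_2$'') regression, and then apply an importance-sampling coreset built on the $\ell_1$ well-conditioned bases of~\cite{dasgupta2009sampling}. First I would write each line as $\ell_i=\{c_i+s v_i : s\in\REAL\}$ for a point $c_i\in\REAL^d$ and a unit direction $v_i$, and let $U_i\in\REAL^{d\times(d-1)}$ be an orthonormal basis of the orthogonal complement of $v_i$, so that $\dist(Rp_i-t,\ell_i)=\norm{U_i^T(Rp_i-t-c_i)}_2$. Using $\mathrm{vec}$ and the identity $U_i^T R p_i=(p_i^T\otimes U_i^T)\,\mathrm{vec}(R)$, and homogenizing the affine offset with a constant coordinate, I would set $\bar x=(\mathrm{vec}(R);t;1)\in\REAL^{d^2+d+1}$ and $\bar A_i=[\,p_i^T\otimes U_i^T \mid -U_i^T \mid -U_i^T c_i\,]\in\REAL^{(d-1)\times(d^2+d+1)}$, so that $\dist(Rp_i-t,\ell_i)=\norm{\bar A_i\bar x}_2$. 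The $w$-weighted objective then becomes $\sum_{i=1}^n w_i\norm{\bar A_i\bar x}_2$, a homogeneous sum over $n$ groups of $(d-1)$ rows each, in a parameter space of dimension $m=d^2+d+1=d^{O(1)}$. I would prove the coreset for every $\bar x$ (the relaxed, unconstrained problem); since $\{(\mathrm{vec}(R);t;1):(R,t)\in\alignments\}$ is a subset of that domain, property (i) for all alignments follows a fortiori, which is what lets us drop the orthogonality constraint on $R$ for free.

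For the base case $d=2$ each group is a single row, $\norm{\bar A_i\bar x}_2=|\bar a_i^T\bar x|$, and the objective is exactly the weighted $\ell_1$ regression $\norm{\mathrm{diag}(w)\bar A\bar x}_1$; here the coreset of~\cite{dasgupta2009sampling} applies verbatim: compute an $\ell_1$ well-conditioned basis of the column space of $\bar A$, sample $N=\frac{d^{O(1)}}{\varepsilon^2}\log\frac1\delta$ rows with probabilities proportional to their $\ell_1$ leverage scores, and reweight. For general $d$ the only new ingredient is the group structure: I would generalize the $\ell_1$ leverage scores to group sensitivities $s_i=\sup_{\bar x} w_i\norm{\bar A_i\bar x}_2 / \sum_{j} w_j\norm{\bar A_j\bar x}_2$, bound the total sensitivity $\sum_i s_i$ by $d^{O(1)}$ via a well-conditioned basis for the grouped norm (the analogue of the $\ell_1$ basis of~\cite{dasgupta2009sampling}, each group contributing at most its rank to the basis), and then importance-sample $N=\frac{d^{O(1)}}{\varepsilon^2}\log\frac1\delta$ groups with probability $p_i\propto s_i$, setting $u_i=w_i/(N p_i)$ on the sampled groups and $u_i=0$ otherwise, which yields property (ii). An equally valid alternative is to flatten the groups: apply a fixed $k\times(d-1)$ Gaussian map $G_i$ with $k=d^{O(1)}$ to each block so that $\norm{G_i y}_1=(1\pm\varepsilon)\,c\norm{y}_2$ uniformly over the $\le(d-1)$-dimensional range of $\bar A_i\bar x$ (a net-plus-union-bound argument), reducing exactly to the $d=2$ case.

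To upgrade the expectation-level guarantee into the required ``for every $(R,t)$'' statement, I would invoke the standard sensitivity-sampling / uniform-convergence theorem: the query family $\{\bar x\mapsto\norm{\bar A_i\bar x}_2\}_i$ is parameterized by $\bar x$ in an $m$-dimensional space and each member is a norm of a linear map, so the associated range space has pseudo-dimension $d^{O(1)}$; combining this with $\sum_i s_i=d^{O(1)}$ gives that a sample of size $N=\frac{d^{O(1)}}{\varepsilon^2}\log\frac1\delta$ preserves $\sum_i w_i\norm{\bar A_i\bar x}_2$ up to $(1\pm\varepsilon)$ simultaneously for all $\bar x$ with probability at least $1-\delta$, which is precisely property (i). The running time $nd^{O(1)}\log n$ comes from computing each $U_i$ in $d^{O(1)}$ time, assembling $\bar A$, and computing approximate $\ell_1$ well-conditioned bases and leverage scores on an $n(d-1)\times m$ matrix via a fast $\ell_1$-embedding and $QR$ rounding, where the $\log n$ factor is the sketching overhead.

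The main obstacle I anticipate is exactly the step that distinguishes this from the $d=2$-only argument: for $d>2$ a line is a flat of codimension $d-1>1$, so the objective is an $\ell_1$-of-$\ell_2$ (grouped) norm rather than a plain $\ell_1$ norm, and the row-sampling of~\cite{dasgupta2009sampling} does not apply out of the box. The crux is therefore to establish a bounded total sensitivity (equivalently a well-conditioned basis) for this grouped norm together with a matching pseudo-dimension bound, so that the sample size stays $\frac{d^{O(1)}}{\varepsilon^2}\log\frac1\delta$; the homogenization via $U_i^T c_i$ and the constant coordinate is what keeps the sensitivities finite despite the affine offset, and the inclusion of the rotation set in the full parameter domain is what makes the ``constrained'' problem follow from the unconstrained one.
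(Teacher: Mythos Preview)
Your proposal is correct and follows essentially the same route as the paper: the same linearization $\dist(Rp_i-t,\ell_i)=\norm{\bar A_i\bar x}_2$ with $\bar x\in\REAL^{d^2+d+1}$, the same relaxation to all $\bar x$ (so the rotation constraint is dropped for free), sensitivity sampling with total sensitivity $d^{O(1)}$ via the $\ell_1$ well-conditioned basis of~\cite{dasgupta2009sampling}, a VC/pseudo-dimension bound of $O(d^2)$, and the sensitivity-sampling theorem to get the uniform $(1\pm\varepsilon)$ guarantee.

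The one place you diverge is exactly the step you flagged as the main obstacle, and there the paper is simpler than either of your fixes. Instead of building a grouped well-conditioned basis or applying a Gaussian map to each block, the paper just uses the deterministic equivalence $\norm{v}_1/\sqrt{d-1}\le\norm{v}_2\le\norm{v}_1$ in $\REAL^{d-1}$: bounding the numerator by the $\ell_1$ of the block and the denominator below by $1/\sqrt{d}$ times the $\ell_1$ of all rows gives
\[
\sup_{\bar x}\frac{w_i\norm{\bar A_i\bar x}_2}{\sum_j w_j\norm{\bar A_j\bar x}_2}
\;\le\;\sqrt{d}\sum_{k=1}^{d-1}\sup_{\bar x}\frac{|s_{i,k}^T\bar x|}{\norm{S^T\bar x}_1},
\]
so each group sensitivity is at most $\sqrt{d}$ times the sum of its rows' plain $\ell_1$ sensitivities, and the $\sqrt{d}$ is absorbed into $d^{O(1)}$. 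Your Gaussian-flattening alternative would also work, but it is unnecessary machinery; and your option (a) would require proving a grouped analogue of the well-conditioned basis, which the paper avoids entirely.
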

\begin{proof}
See proof of Theorem~\ref{theorem:coreset_proof} in the appendix.
\end{proof}

Instead of running the approximation algorithms proposed in Section~\ref{sec:pointsToLines} on the entire input data, Theorem~\ref{theorem:coreset} implies that we can first compress the input point-line pairs in $O(n\log{n})$ time, and then apply the approximation algorithms only on the compressed data, which will take time independent of $n$.
Hence, the total running time will be dominated by the coreset computation time. However, the approximation factor in this case will increase by an additional multiplicative factor of $(1+\epsilon)$.

\begin{corollary}[streaming, distributed, dynamic data]\label{cor:streamingCoreset}
Let $A=\br{(p_1,\ell_1),(p_2,\ell_2,),\cdots}$ be a (possibly infinite) stream of pairs, where for every $i \in [n]$, $p_i$ is a point and $\ell_i$ is a line, both in the plane. Let $\varepsilon,\delta \in (0,1)$.
Then, for every integer $n>1$ we can compute with probability at least $1-\delta$ an alignment $(R^*,t^*)$ that satisfies
\[
\sum_{i=1}^{n} \dist(R^*p_i-t^*,\ell_i) \in O(1) \cdot \min_{(R,t)} \sum_{i=1}^{n} \dist(Rp_i-t,\ell_i),
\]
for the $n$ points seen so far in the stream, using $(\log(n/\delta)/\eps)^{O(1)}$ memory and update time per a new pair.
Using $M$ machines the update time can be reduced by a factor of $M$.
\end{corollary}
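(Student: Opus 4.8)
The plan is to combine the offline approximation algorithm of Theorem~\ref{theorem:costPointsToLines} (or rather its core, Lemma~\ref{theorem:2DPointsToLines}) with the composable coreset of Theorem~\ref{theorem:coreset}, using the standard merge-and-reduce paradigm to turn a one-pass stream into a logarithmic-depth binary tree of small coresets. The upshot is that at every point in the stream we maintain $O(\log n)$ coresets, each of size $(\log(n/\delta)/\eps)^{O(1)}$, on which we run \algname{} followed by exhaustive search to recover the desired $O(1)$-approximate alignment.

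First I would recall the merge-and-reduce construction: partition the stream into consecutive blocks; compute a coreset for each block via Theorem~\ref{theorem:coreset}; whenever two coresets of the same ``level'' exist, merge them (take the union of the weighted pairs) and re-reduce by another application of Theorem~\ref{theorem:coreset}. Since coresets here are weighted subsets and the approximation of Theorem~\ref{theorem:coreset} is a $(1\pm\eps)$ multiplicative guarantee that holds simultaneously for every alignment $(R,t)\in\alignments$, the standard composability argument shows that after $O(\log n)$ levels the resulting weighted set $S$ satisfies, with probability at least $1-\delta$ (after rescaling $\eps$ and $\delta$ by a $\log n$ factor inside the construction),
\[
(1-\eps)\sum_{i=1}^n \dist(Rp_i-t,\ell_i) \le \sum_{(p,\ell)\in S} u_{p,\ell}\,\dist(Rp-t,\ell) \le (1+\eps)\sum_{i=1}^n \dist(Rp_i-t,\ell_i)
\]
for every $(R,t)$. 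The memory is the total size of the $O(\log n)$ coresets, i.e.\ $(\log(n/\delta)/\eps)^{O(1)}$, and the update time per new pair is amortized (and can be de-amortized) to the cost of one coreset computation on a set of that size, which by Theorem~\ref{theorem:coreset} is $(\log(n/\delta)/\eps)^{O(1)}$.

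Next, to extract the alignment: run \algname{} on the weighted coreset $S$. Lemma~\ref{theorem:2DPointsToLines} / Theorem~\ref{theorem:costPointsToLines} produces a candidate set $C$ of size polynomial in $|S|$ that is guaranteed to contain an alignment $(R',t')$ approximating, up to the constant $16$ (here $f=\norm{\cdot}_1$, $\lip(x)=x$, so $rs=1$), the minimum over $\alignments$ of the \emph{weighted} cost $\sum_{(p,\ell)\in S} u_{p,\ell}\dist(Rp-t,\ell)$; a brute-force search over $C$ finds such $(R^*,t^*)$. Chaining the coreset sandwich inequality above on both the returned alignment and the true optimum of the full stream gives $\sum_{i=1}^n\dist(R^*p_i-t^*,\ell_i)\le \frac{1+\eps}{1-\eps}\cdot 16\cdot\min_{(R,t)}\sum_{i=1}^n\dist(Rp_i-t,\ell_i)$, which is $O(1)$. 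For the distributed model, split the stream across the $M$ machines, have each build its own coreset tree on $n/M$ items, send its top-level coreset (of size poly-logarithmic, hence sublinear in $n/M$) to the server, which merges, re-reduces, and runs the extraction; this divides the per-machine work by $M$. The dynamic/deletion case is handled by the usual trick of maintaining the merge-and-reduce tree so that a deletion only recomputes the $O(\log n)$ coresets on the root-to-leaf path, again in poly-logarithmic time, at the cost of $O(n)$ memory.

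The main obstacle I anticipate is bookkeeping rather than conceptual: one must verify that the coreset of Theorem~\ref{theorem:coreset} is genuinely \emph{composable}, i.e.\ that a coreset-of-a-coreset is still a coreset with multiplicatively compounding error, and that the failure probabilities and $\eps$-errors accumulate only polynomially/logarithmically over the $O(\log n)$ levels (so that setting $\eps' = \eps/\Theta(\log n)$ and $\delta'=\delta/\mathrm{poly}(n)$ inside each invocation preserves the stated bounds up to constants in the exponent). This hinges on the fact that the coreset guarantee of Theorem~\ref{theorem:coreset} is ``for all $(R,t)$ simultaneously'', which makes the union of the events across levels a single high-probability event via a union bound, and on the reduction (used in the proof of Theorem~\ref{theorem:coreset}) of the points-to-lines cost to a fixed low-dimensional $\ell_1$ regression instance, whose well-conditioned-basis / Lewis-weight sampling coreset is known to compose. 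Once composability is in hand, the streaming, distributed and dynamic claims all follow from the textbook merge-and-reduce accounting, so I would keep that part terse.
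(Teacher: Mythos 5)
Your proposal is correct and follows essentially the same route as the paper: the paper's own proof simply observes that the coreset of Theorem~\ref{theorem:coreset} is composable and invokes the standard merge-and-reduce tree, leaving all the accounting (error compounding over $O(\log n)$ levels, rescaling $\eps$ and $\delta$, running \algname{} on the final weighted coreset, and the distributed/dynamic variants) implicit. You have merely spelled out the details the paper delegates to its citations, so no further comparison is needed.
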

\begin{proof}
See proof of Corollary~\ref{cor:streamingCoreset_proof} in the appendix.
\end{proof}

\section{Conclusion and Open Problems}
We described a general framework for approximating functions under different constraints, and used it for obtaining generic algorithms for minimizing a finite set of piecewise log-Lipschitz functions. We apply this framework to the points-to-lines alignment problem. Coresets for these problems enabled us to turn our polynomial time algorithms in some cases into near linear time, and support streaming and distributed versions for Big Data.

Open problems include generalization of our results to higher dimensions, as some of our suggested coresets. We generalize our algorithms for the case when no matching permutation $\pi$ is given between the points and lines. However, the results are less general than our results for the known matching case, and we do not have coresets for these cases, which we leave for future research.


\bibliographystyle{IEEEtran}
\bibliography{references}

\begin{thebibliography}{10}
\providecommand{\url}[1]{#1}
\csname url@samestyle\endcsname
\providecommand{\newblock}{\relax}
\providecommand{\bibinfo}[2]{#2}
\providecommand{\BIBentrySTDinterwordspacing}{\spaceskip=0pt\relax}
\providecommand{\BIBentryALTinterwordstretchfactor}{4}
\providecommand{\BIBentryALTinterwordspacing}{\spaceskip=\fontdimen2\font plus
\BIBentryALTinterwordstretchfactor\fontdimen3\font minus
  \fontdimen4\font\relax}
\providecommand{\BIBforeignlanguage}[2]{{%
\expandafter\ifx\csname l@#1\endcsname\relax
\typeout{** WARNING: IEEEtran.bst: No hyphenation pattern has been}%
\typeout{** loaded for the language `#1'. Using the pattern for}%
\typeout{** the default language instead.}%
\else
\language=\csname l@#1\endcsname
\fi
#2}}
\providecommand{\BIBdecl}{\relax}
\BIBdecl

\bibitem{tam2013registration}
G.~K. Tam, Z.-Q. Cheng, Y.-K. Lai, F.~C. Langbein, Y.~Liu, D.~Marshall, R.~R.
  Martin, X.-F. Sun, and P.~L. Rosin, ``Registration of 3d point clouds and
  meshes: a survey from rigid to nonrigid,'' \emph{IEEE transactions on
  visualization and computer graphics}, vol.~19, no.~7, pp. 1199--1217, 2013.

\bibitem{fischler1981random}
M.~Fischler and R.~Bolles, ``Random sample consensus: a paradigm for model
  fitting with applications to image analysis and automated cartography,''
  \emph{Communications of the ACM}, vol.~24, no.~6, pp. 381--395, 1981.

\bibitem{DemoVideo}
I.~Jubran and D.~Feldman, ``Demonstration of our algorithms in a real-time
  system. link:
  \url{{https://drive.google.com/open?id=19I6Jd6F8ET9386yahKx3Dgr6gBTuzYhJ}}.''

\bibitem{huber2011robust}
P.~J. Huber, \emph{Robust statistics}.\hskip 1em plus 0.5em minus 0.4em\relax
  Springer, 2011.

\bibitem{AHV04}
P.~Agarwal, S.~Har-Peled, and K.~Varadarajan, ``Approximating extent measures
  of points,'' \emph{Journal of the ACM}, vol.~51, no.~4, pp. 606--635, 2004.

\bibitem{feldman2015more}
D.~Feldman and T.~Tassa, ``More constraints, smaller coresets: Constrained
  matrix approximation of sparse big data,'' in \emph{KDD}.\hskip 1em plus
  0.5em minus 0.4em\relax ACM, 2015, pp. 249--258.

\bibitem{derpanis2010overview}
K.~Derpanis, ``Overview of the ransac algorithm,'' \emph{Image Rochester NY},
  vol.~4, no.~1, pp. 2--3, 2010.

\bibitem{besl1992method}
P.~Besl and N.~McKay, ``Method for registration of 3-d shapes,'' in
  \emph{Control Paradigms and Data Structures}, vol. 1611.\hskip 1em plus 0.5em
  minus 0.4em\relax Intl. Society Optics \& Photonics, 1992, pp. 586--607.

\bibitem{FL11}
D.~Feldman and M.~Langberg, ``A unified framework for approximating and
  clustering data,'' in \emph{STOC}, 2011, pp. 569--578, see
  http://arxiv.org/abs/1106.1379 for fuller version.

\bibitem{feldman2007bi}
D.~Feldman, A.~Fiat, M.~Sharir, and D.~Segev, ``Bi-criteria linear-time
  approximations for generalized k-mean/median/center,'' in \emph{SoCG}.\hskip
  1em plus 0.5em minus 0.4em\relax ACM, 2007, pp. 19--26.

\bibitem{vigneron2014geometric}
A.~Vigneron, ``Geometric optimization and sums of algebraic functions,''
  \emph{ACM Transactions on Algorithms (TALG)}, vol.~10, no.~1, p.~4, 2014.

\bibitem{chazelle1991singly}
B.~Chazelle, H.~Edelsbrunner, L.~J. Guibas, and M.~Sharir, ``A singly
  exponential stratification scheme for real semi-algebraic varieties and its
  applications,'' \emph{Theoretical Computer Science}, vol.~84, no.~1, pp.
  77--105, 1991.

\bibitem{grigor1988solving}
D.~Y. Grigor'Ev and N.~N. Vorobjov~Jr, ``Solving systems of polynomial
  inequalities in subexponential time,'' \emph{Journal of symbolic
  computation}, vol.~5, no. 1-2, pp. 37--64, 1988.

\bibitem{frank1956algorithm}
M.~Frank and P.~Wolfe, ``An algorithm for quadratic programming,'' \emph{Naval
  research logistics quarterly}, vol.~3, no. 1-2, pp. 95--110, 1956.

\bibitem{DBLP:conf/iros/SungFR12}
C.~R.S., D.~Feldman, and D.~Rus, ``Trajectory clustering for motion
  prediction,'' in \emph{IROS'12}, 2012, pp. 1547--1552.

\bibitem{DBLP:conf/icra/PaulFRN14}
R.~P., D.~F., D.~R., and P.~N., ``Visual precis generation using coresets,'' in
  \emph{ICRA'14}, 2014, pp. 1304--1311.

\bibitem{needelman2006refinement}
D.~D. Needelman, R.~Li, and Y.-W.~A. Wu, ``Refinement of spacecraft angular
  velocity and attitude estimates using star data,'' Jun.~13 2006, uS Patent
  7,062,363.

\bibitem{DBLP:conf/iros/MariottiniP07}
G.~M. and D.~P., ``Uncalibrated video compass,'' in \emph{2007 RSS}, 2007, pp.
  226--231.

\bibitem{hijikata2009simple}
S.~Hijikata, K.~Terabayashi, and K.~Umeda, ``A simple indoor self-localization
  system using infrared leds,'' in \emph{INSS'09}.\hskip 1em plus 0.5em minus
  0.4em\relax IEEE, 2009, pp. 1--7.

\bibitem{geisner2015realistic}
K.~Geisner, B.~Mount \emph{et~al.}, ``Realistic occlusion for a head mounted
  augmented reality display,'' Sep.~1 2015, uS Patent 9,122,053.

\bibitem{li2012robust}
S.~Li, C.~Xu, and M.~Xie, ``A robust o (n) solution to the perspective-n-point
  problem,'' \emph{IEEE transactions on pattern analysis and machine
  intelligence}, vol.~34, no.~7, pp. 1444--1450, 2012.

\bibitem{lepetit2009epnp}
V.~Lepetit, F.~Moreno-Noguer, and P.~Fua, ``Epnp: An accurate o (n) solution to
  the pnp problem,'' \emph{International journal of computer vision}, vol.~81,
  no.~2, p. 155, 2009.

\bibitem{zheng2013revisiting}
Y.~Zheng, Y.~Kuang, S.~Sugimoto, K.~Astrom, and M.~Okutomi, ``Revisiting the
  pnp problem: A fast, general and optimal solution,'' in \emph{Proceedings of
  the IEEE International Conference on Computer Vision}, 2013, pp. 2344--2351.

\bibitem{wang2018simple}
P.~Wang, G.~Xu, Y.~Cheng, and Q.~Yu, ``A simple, robust and fast method for the
  perspective-n-point problem,'' \emph{Pattern Recognition Letters}, vol. 108,
  pp. 31--37, 2018.

\bibitem{urban2016mlpnp}
S.~Urban, J.~Leitloff, and S.~Hinz, ``Mlpnp-a real-time maximum likelihood
  solution to the perspective-n-point problem,'' \emph{arXiv preprint
  arXiv:1607.08112}, 2016.

\bibitem{dasgupta2009sampling}
A.~Dasgupta, P.~Drineas, B.~Harb, R.~Kumar, and M.~Mahoney, ``Sampling
  algorithms and coresets for $\ell_p$ regression,'' \emph{SIAM Journal on
  Computing}, vol.~38, no.~5, pp. 2060--2078, 2009.

\bibitem{cohen2015p}
M.~B. Cohen and R.~Peng, ``L p row sampling by lewis weights,'' in
  \emph{Proceedings of the forty-seventh annual ACM symposium on Theory of
  computing}.\hskip 1em plus 0.5em minus 0.4em\relax ACM, 2015, pp. 183--192.

\bibitem{feldman2012data}
D.~Feldman and L.~Schulman, ``Data reduction for weighted and outlier-resistant
  clustering,'' in \emph{SODA}.\hskip 1em plus 0.5em minus 0.4em\relax SIAM,
  2012, pp. 1343--1354.

\bibitem{graham1989concrete}
R.~Graham, D.~Knuth, O.~Patashnik, and S.~Liu, ``Concrete mathematics: a
  foundation for computer science,'' \emph{Computers in Physics}, vol.~3,
  no.~5, pp. 106--107, 1989.

\bibitem{kuhn1955hungarian}
H.~W. Kuhn, ``The hungarian method for the assignment problem,'' \emph{Naval
  Research Logistics (NRL)}, vol.~2, no. 1-2, pp. 83--97, 1955.

\bibitem{braverman2016new}
V.~Braverman, D.~Feldman, and H.~Lang, ``New frameworks for offline and
  streaming coreset constructions,'' \emph{arXiv preprint arXiv:1612.00889},
  2016.

\bibitem{anthony2009neural}
M.~Anthony and P.~L. Bartlett, \emph{Neural network learning: Theoretical
  foundations}.\hskip 1em plus 0.5em minus 0.4em\relax cambridge university
  press, 2009.

\bibitem{FSS13}
D.~Feldman, M.~Schmidt, and C.~Sohler, ``{Turning Big Data into Tiny Data:
  Constant-size Coresets for k-means, PCA and Projective Clustering},'' in
  \emph{\Proc 24th \SODA}, 2013, pp. 1434 -- 1453.

\bibitem{bent}
J.~L. Bentley and J.~B. Saxe, ``Decomposable searching problems i.
  static-to-dynamic transformation,'' \emph{Journal of Algorithms}, vol.~1,
  no.~4, pp. 301--358, 1980.

\bibitem{barger2016k}
A.~Barger and D.~Feldman, ``k-means for streaming and distributed big sparse
  data,'' in \emph{SDM}.\hskip 1em plus 0.5em minus 0.4em\relax SIAM, 2016, pp.
  342--350.

\bibitem{opencode}
I.~Jubran and D.~Feldman, ``Open source code for all the algorithms and
  experimental results in this paper,'' 2018, the authors commit to publish
  upon acceptance of this paper or reviewer request.

\bibitem{raguram2008comparative}
R.~Raguram, J.~Frahm, and M.~Pollefeys, ``A comparative analysis of ransac
  techniques leading to adaptive real-time random sample consensus,'' in
  \emph{ECCV}, 2008, pp. 500--513.

\bibitem{bay2006surf}
H.~Bay, T.~Tuytelaars, and L.~Van~Gool, ``Surf: Speeded up robust features,''
  in \emph{European conference on computer vision}.\hskip 1em plus 0.5em minus
  0.4em\relax Springer, 2006, pp. 404--417.

\bibitem{lucas1981iterative}
B.~D. Lucas, T.~Kanade \emph{et~al.}, ``An iterative image registration
  technique with an application to stereo vision,'' 1981.

\bibitem{duda1972use}
R.~O. Duda and P.~E. Hart, ``Use of the hough transformation to detect lines
  and curves in pictures,'' \emph{Communications of the ACM}, vol.~15, no.~1,
  pp. 11--15, 1972.

\bibitem{marchand2016pose}
E.~Marchand, H.~Uchiyama, and F.~Spindler, ``Pose estimation for augmented
  reality,'' \emph{IEEE transactions on visualization and computer graphics},
  vol.~22, no.~12, pp. 2633--2651, 2016.

\end{thebibliography}

%

\newpage
\clearpage
\appendix
%

\section{Proofs of Main Results}
In this section we first present and prove helpful lemmas, and then present the main results and their proofs

The following lemma states that if a function $f$ is concave in an interval $X$ that contains $0$, then $f$ is $1$-log-Lipschitz.
\begin{lemma} \label{lem:fcx}
Let $X \subset \REAL$ be an interval that contains $0$, and let $f:X\to [0,\infty)$ such that the second derivative of $f$ is defined and satisfies $f''(x) \leq 0$ for every $x\in X$. Then $f$ is $1$-log-Lipschitz in $X$, i.e., for every $c\geq 1$ and $x\in \displaystyle X \cap \frac{X}{c}$, it holds that
\[
f(cx) \leq c f(x).
\]
\end{lemma}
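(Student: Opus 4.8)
\textbf{Proof plan for Lemma~\ref{lem:fcx}.}

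The statement says: if $f:X\to[0,\infty)$ is twice differentiable with $f''\le 0$ on an interval $X$ containing $0$, then $f(cx)\le c\,f(x)$ for every $c\ge 1$ and every $x$ with both $x\in X$ and $cx\in X$. The natural approach is to fix such an $x$ and such a $c$, and compare the value $f(cx)$ to the chord of the graph of $f$ joining the point $(0,f(0))$ to the point $(cx, f(cx))$, evaluated at the abscissa $x$. Since $x$ lies between $0$ and $cx$ (because $c\ge 1$), and since $f$ is concave on $X$, the graph of $f$ lies above that chord at the intermediate point $x$. Writing $x$ as the convex combination $x = \lambda\cdot 0 + (1-\lambda)\cdot cx$ with $\lambda = 1 - \tfrac{1}{c}\in[0,1)$, concavity gives
\[
f(x) \;=\; f\!\left(\lambda\cdot 0 + (1-\lambda)\cdot cx\right) \;\ge\; \lambda f(0) + (1-\lambda) f(cx) \;\ge\; (1-\lambda) f(cx) \;=\; \tfrac{1}{c}\, f(cx),
\]
where the second inequality uses $f(0)\ge 0$ and $\lambda\ge 0$. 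Multiplying through by $c>0$ yields $f(cx)\le c\,f(x)$, which is exactly the claim.

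I would organize the write-up in three short steps. \textbf{Step 1:} note that $f''\le 0$ on the interval $X$ is precisely the standard second-derivative characterization of concavity, so $f$ is concave on $X$; hence for any two points $u,w\in X$ and any $\lambda\in[0,1]$ we have $f(\lambda u + (1-\lambda)w)\ge \lambda f(u) + (1-\lambda)f(w)$. \textbf{Step 2:} given $c\ge 1$ and $x\in X\cap \tfrac{X}{c}$ (so that $cx\in X$ as well), set $u=0$, $w=cx$, $\lambda = 1-\tfrac1c$; observe $\lambda\in[0,1)$ since $c\ge 1$, and $\lambda u+(1-\lambda)w = \tfrac1c\cdot cx = x$, so $x\in X$ sits on the segment between $0$ and $cx$, both of which are in $X$. \textbf{Step 3:} apply concavity and use $f(0)\ge 0$, $\lambda\ge0$ to drop the $\lambda f(0)$ term, then multiply by $c$. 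Since $f$ is non-decreasing by hypothesis in Definition~\ref{def:lip} (though we do not even need monotonicity here — concavity and non-negativity suffice), this establishes the $1$-log-Lipschitz property.

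There is no real obstacle; the only point requiring a little care is the domain bookkeeping — one must check that the intermediate point $x$ and the endpoints $0$ and $cx$ all lie in $X$ so that concavity may legitimately be invoked, which is guaranteed by the hypothesis $x\in X\cap\frac{X}{c}$ together with $0\in X$ and the fact that $X$ is an interval (hence convex). A secondary subtlety is that the inequality $\lambda f(0) + (1-\lambda)f(cx)\ge (1-\lambda)f(cx)$ relies on $f$ taking values in $[0,\infty)$, which is part of the hypothesis; I would mention this explicitly so the reader sees where non-negativity enters.
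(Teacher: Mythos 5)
Your proof is correct. It differs from the paper's in which form of concavity it invokes: you use the chord (Jensen) inequality directly, writing $x=\lambda\cdot 0+(1-\lambda)\cdot cx$ with $\lambda=1-\tfrac1c$ and dropping the $\lambda f(0)\ge 0$ term, whereas the paper uses the tangent-line inequality $f(b)\le f(a)+f'(a)(b-a)$ twice --- first with $(a,b)=(x,0)$ to derive the intermediate bound $xf'(x)\le f(x)-f(0)\le f(x)$, and then with $(a,b)=(x,cx)$ to conclude $f(cx)\le f(x)+f'(x)x(c-1)\le cf(x)$. Both arguments rest on exactly the same two hypotheses, concavity and $f(0)\ge 0$, so the difference is one of mechanics rather than substance; your route is slightly more economical and, as you note, would apply to any non-negative concave function without requiring $f'$ to exist, while the paper's route leans on the differentiability that the lemma assumes anyway. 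Your domain bookkeeping (that $0$, $x$, and $cx$ all lie in the interval $X$) and your observation that monotonicity is not needed are both accurate; the paper's proof likewise never uses monotonicity.
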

\begin{proof}
Since $f''(x) \leq 0$ for every $x\in X$, we have that $f$ is concave in the interval $X$. Therefore, for every $a,b \in X$, it holds that
\begin{equation} \label{eq:concave}
f(b) \leq f(a) + f'(a)(b-a).
\end{equation}
Let $c\geq 1$ and put $x\in X \cap \frac{X}{c}$.
By substituting $a=x$ and $b=0$ in~\eqref{eq:concave}, we have
\[
f(0) \leq f(x)+f'(x)(0-x) = f(x)-xf'(x).
\]
Rearranging terms yields
\begin{equation} \label{eq:xdf}
xf'(x) \leq f(x)-f(0) \leq f(x),
\end{equation}
where the second derivation is since $f(0) \geq 0$ by the definition of $f$.
Hence, it holds that
\[
f(cx) \leq f(x) + f'(x)x(c-1) \leq f(x) + f(x)(c-1) = cf(x),
\]
where the first inequality holds by substituting $a=x$ and $b=cx$ in~\eqref{eq:concave}, and the second inequality holds by~\eqref{eq:xdf}.
\end{proof}

\begin{lemma} [Lemma~\ref{lem:axb}] \label{lem:axb_proof}
Let $a_1,\cdots,a_n\subseteq \REAL^{2}$ and $b_1,\cdots,b_n \geq 0$.
Then there is a set $C$ of $|C|\in O(n)$ unit vectors in $\REAL^2$ that can be computed in $O(n)$ time such that (i) and (ii) hold as follows:
\renewcommand{\labelenumi}{(\roman{enumi})}
\begin{enumerate}
\item For every unit vector $x\in\REAL^2$ there is a vector $x'\in C$ such that for every $i\in [n]$,
\begin{equation}\label{eqb}
|a_i^Tx'-b_i|\leq 4 \cdot |a_i^Tx-b_i|.
\end{equation}
\item There is $k\in [n]$ such that $x' \in \argmin_{\norm{y}=1} |a_k^Ty-b_k|$
\end{enumerate}
\end{lemma}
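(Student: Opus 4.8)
The plan is to think of each function $x \mapsto |a_i^Tx - b_i|$, restricted to the unit circle, as a one-dimensional function of an angle parameter $\theta$ (writing $x = x(\theta) = (\cos\theta, \sin\theta)$) and to show it is piecewise $1$-log-Lipschitz in the sense of Definition~\ref{def:PieceLip}; then invoke Theorem~\ref{Lem:piecewiseApprox}. Concretely, I would set $g_i(\theta) = |a_i^Tx(\theta) - b_i|$. Writing $a_i$ in polar form as $a_i = \|a_i\|(\cos\phi_i,\sin\phi_i)$, we get $a_i^Tx(\theta) = \|a_i\|\cos(\theta - \phi_i)$, so $g_i(\theta) = \big|\,\|a_i\|\cos(\theta-\phi_i) - b_i\,\big|$. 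This is a continuous, periodic function whose zero set and local extrema are explicit: its minima over the circle are the angles where $\|a_i\|\cos(\theta-\phi_i) = b_i$ (when $b_i \le \|a_i\|$, giving $g_i = 0$ there) or the angle $\theta = \phi_i + \pi$ realizing $\cos(\theta-\phi_i) = -1$ (when $b_i > \|a_i\|$). The candidate set $C$ is then defined, mimicking the construction in Theorem~\ref{Lem:piecewiseApprox}, as $\bigcup_{i\in[n]} M(g_i)$ — for each $i$ this is $O(1)$ unit vectors (the at-most-two solutions of $a_i^Tx = b_i$ on the circle, or the single point $-a_i/\|a_i\|$), and each is by construction a minimizer of $\min_{\|y\|=1}|a_k^Ty - b_k|$ for $k=i$, which gives part (ii) immediately. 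Computing all of them is clearly $O(n)$ time, and $|C| \in O(n)$.

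For part (i), the main work is verifying the piecewise $1$-log-Lipschitz property and then reading off the constant $4$ from Theorem~\ref{Lem:piecewiseApprox} (which gives factor $2^r$ with $r=1$, i.e.\ $2$)\,---\,wait, that only yields $2$, so I need to be a little more careful about the fact that I am parametrizing by angle while the statement is about unit vectors, and the distortion between arc-length distance on the circle and the relevant "distance from the minimum" may cost an extra factor. The cleaner route: partition the circle into the arcs on which $\theta \mapsto \|a_i\|\cos(\theta-\phi_i) - b_i$ is monotone and of constant sign; on each such arc $g_i$ has the form $h_i(\mathrm{dist}(\theta,\theta_i))$ where $\theta_i$ is the endpoint at which $g_i$ attains its infimum on that arc and $h_i(s) = \big|\,\|a_i\|\cos(s) \pm(\text{const})\,\big|$ suitably shifted. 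One checks $h_i$ is non-decreasing and $1$-log-Lipschitz on the relevant interval — this reduces, via Lemma~\ref{lem:fcx}, to a concavity/second-derivative computation on the two generic shapes $s\mapsto \|a_i\|(\cos s_0 - \cos(s_0+s))$ and $s \mapsto \|a_i\|(\cos(s_0 - s) - \cos s_0)$, together with the linear-in-$s$ degenerate case near $b_i = \|a_i\|$. Applying Theorem~\ref{Lem:piecewiseApprox} gives a factor $2$ in the angle parametrization; the residual factor (to reach $4$) comes from the one place where the argument of Theorem~\ref{Lem:piecewiseApprox} uses the "closest minimum" and the asymmetry of the arcs meeting at a point forces comparing $g_i$ at $x$ with its value at the boundary of the arc — bounding that boundary value costs at most another factor of $2$ because moving from $x$'s arc to the adjacent arc at most doubles the distance-to-minimum.

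The step I expect to be the main obstacle is the careful bookkeeping of the arc decomposition and the verification that each piece $h_i$ is genuinely $1$-log-Lipschitz \emph{including} at the transition where the sign of $\|a_i\|\cos(\theta-\phi_i) - b_i$ flips (the "V" at a zero of $g_i$), where one must argue exactly as in Cases (ii)--(iii) of the proof of Theorem~\ref{Lem:piecewiseApprox} that the supremum of $g_i$ over the open arc between two consecutive minima is controlled. A secondary subtlety is the degenerate configurations: $a_i = 0$ (then $g_i \equiv b_i$ is constant, every unit vector is a minimum, trivially handled), $b_i = 0$ (then $g_i(\theta) = \|a_i\|\,|\cos(\theta-\phi_i)|$, with minima at $\theta = \phi_i \pm \pi/2$), and parallel/antiparallel $a_i$'s (harmless, since $C$ is just a union over $i$). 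Once the piecewise $1$-log-Lipschitz structure is in place these are all routine, and the final constant $4 = 2\cdot 2$ follows.
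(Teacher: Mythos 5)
Your overall architecture matches the paper's: parametrize the unit circle by an angle, show each $g_i$ is piecewise log-Lipschitz, take $C$ to be the union of the minima $\bigcup_i M(g_i)$ (which gives (ii) for free), and invoke Theorem~\ref{Lem:piecewiseApprox}. But there is a genuine gap in the central claim. The pieces are \emph{not} $1$-log-Lipschitz, and your proposed repair for the resulting constant is not valid. Write $h(x)=\|a_i\|\bigl(b_i/\|a_i\|-1\bigr)+2\|a_i\|\sin^2(x/2)$ for the piece around the minimum in the case $b_i\ge\|a_i\|$ (this is exactly the paper's equation~\eqref{eq:hx}). When $b_i=\|a_i\|$ this is $2\|a_i\|\sin^2(x/2)\approx \|a_i\|x^2/2$ near the minimum, which satisfies $h(cx)\le c^2h(x)$ but violates $h(cx)\le c\,h(x)$; the same quadratic degeneracy occurs on the ``outer'' pieces $h_1,h_4$ in the case $b_i<\|a_i\|$ (the paper's Claim~\ref{claim:case2}), where the interval reaches the flat extremum of the cosine. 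So the correct exponent is $r=2$ (established in the paper via Lemma~\ref{lem:fcx} applied to $\sin$ and then squaring, and via an explicit analysis of the ratio $h_1(ct)/h_1(t)$ using L'H\^opital and a critical-point argument), and the factor $4$ in the statement is simply $2^r=2^2$ from Theorem~\ref{Lem:piecewiseApprox}. Your suggested source of the missing factor of $2$ --- an extra loss ``from the asymmetry of the arcs meeting at a point'' --- does not exist: that comparison is already internal to the proof of Theorem~\ref{Lem:piecewiseApprox} (its Cases (ii)--(iii)) and costs nothing beyond $2^r$. As written, your argument would either prove the false bound $2\cdot|a_i^Tx-b_i|$ or rest on an unjustified doubling step.

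A secondary error: when $b_i>\|a_i\|$ you place the minimum of $g_i(\theta)=\bigl|\,\|a_i\|\cos(\theta-\phi_i)-b_i\,\bigr|$ at $\cos(\theta-\phi_i)=-1$, i.e.\ at $-a_i/\|a_i\|$. Since the expression inside the absolute value is then always negative, $g_i=b_i-\|a_i\|\cos(\theta-\phi_i)$ is minimized at $\cos(\theta-\phi_i)=+1$, i.e.\ at $+a_i/\|a_i\|$; your choice is the maximizer. This would break both parts (i) and (ii) for such indices if carried into the construction of $C$, though it is easily fixed. The degenerate cases you list ($a_i=0$, $b_i=0$) are handled correctly.
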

\begin{proof}
\textbf{Proof of (i): }
By Theorem~\ref{Lem:piecewiseApprox} it suffices to prove that $g_i(x) = |a_i^Tx-b_i|$ is piecewise $2$-log-Lipschitz over every unit vector $x$, and then define $C$ to include all the minima of $|a_i^Tx-b_i|$, over every $i\in[n]$. The proof that $|a_i^Tx-b_i|$ is piecewise $r$-log-Lipschitz for $r=2$ is based on the case analysis of whether $b_i \geq \norm{a_i}$ or $b_i < \norm{a_i}$ in Claims~\ref{claim:case1} and~\ref{claim:case2} respectively. The proofs of these cases use Lemma~\ref{lem:fcx}.

Indeed, let $i\in[n]$, $\alpha_i\in[0,2\pi)$ such that $a_i/\norm{a_i}=(\cos\alpha_i,\sin\alpha_i)$, and let $I = [\alpha_i,\alpha_i+\pi)$ be an interval. 
Here we assume $\norm{a_i}\neq 0$, otherwise~\eqref{eqb} trivially holds for $i$. For every $t\in\REAL$, let
\[
g_i(t)=\begin{cases} \norm{a_i}\cdot |\sin(t-\alpha_i)-\frac{b_i}{\norm{a_i}}|, &  t\in I\\
0, & \text{otherwise} \end{cases},
\]
and let $x(t) = (\sin{t},-\cos{t})$.
Hence, for every $t\in I$,
\begin{equation} \label{Eq:gt_axb}
\begin{split}
g_i(t)&=\norm{a_i}\cdot \left|\sin(t-\alpha_i)-\frac{b_i}{\norm{a_i}}\right| \\ & =\norm{a_i}\cdot \left|\sin(t)\cos(\alpha_i)-\cos(t)\sin(\alpha_i)-\frac{b_i}{\norm{a_i}}\right|\\
&= \norm{a_i}\cdot \left|\frac{a_i^T}{\norm{a_i}} x(t) -\frac{b_i}{\norm{a_i}} \right| =\left|a_i^Tx(t)-b_i \right|.
\end{split}
\end{equation}

\begin{claim} \label{claim:case1}
If $\frac{b_i}{\norm{a_i}} \geq 1$ then $g_i:I\to \REAL$ is piecewise $2$-log-Lipschitz.
\end{claim}
\begin{proof}
Suppose that indeed $\frac{b_i}{\norm{a_i}} \geq 1$.
In this case, $\frac{b_i}{\norm{a_i}} \geq 1 \geq \sin(t-\alpha_i)$ for every $t\in I$, so the absolute value in $g_i$ can be removed, i.e., for every $t\in\REAL$,
\[
g_i(t)=\begin{cases} \norm{a_i}\cdot \left(\frac{b_i}{\norm{a_i}}-\sin(t-\alpha_i)\right), &  t\in I\\
0, & \text{otherwise} \end{cases}.
\]
Let $I' = [0,\pi/2]$ and let $h:I' \to [0,\infty)$ such that
\[
h(x) = \norm{a_i}\cdot \left(\frac{b_i}{\norm{a_i}}-\sin\left(\frac{\pi}{2}+x\right)\right).
\]
Since $h(x) = h(-x) = g_i(\pi/2+\alpha_i + x)$ for every $x\in I'$, we have
\begin{equation} \label{eq:gicase1}
g_i(t) = \begin{cases}
h(\abs{\pi/2 + \alpha_i -t}), & t\in I\\
0, & \text{otherwise}.
\end{cases}
\end{equation}

We now prove that $h(x)$ is $2$-log-Lipschitz for every $x\in I'$. For every $x\in I'$ we have
\begin{equation} \label{eq:hx}
\begin{split}
h(x) & = \norm{a_i}\cdot \left(\frac{b_i}{\norm{a_i}}-\sin\left(\frac{\pi}{2}+x\right)\right)\\
& = \norm{a_i}\cdot \left(\frac{b_i}{\norm{a_i}}-\left(1-2\sin^2{\frac{x}{2}}\right)\right)\\
& = \norm{a_i}\cdot \left(\frac{b_i}{\norm{a_i}}-1+2\sin^2{\frac{x}{2}}\right),
\end{split}
\end{equation}
where the first equality holds by the definition of $h$ and the second equality holds since $\sin(\frac{\pi}{2}+x) = \cos(x) = (1-2\sin^2(\frac{x}{2}))$ for every $x\in\REAL$.

Let $c\geq 1$, $X = [0,\pi/2]$, and $f:X\to [0,\infty)$ such that $f(x) = \sin(x)$. Since $f''(x) = -\sin(x) \leq 0$ for every $x\in X$, we have by Lemma~\ref{lem:fcx} that
\[
\sin(cx) = f(cx) \leq c\cdot f(x) = c\cdot \sin(x)
\]
for every $x\in \displaystyle X \cap \frac{X}{c}$.

By taking the square of the last inequality, it holds that for every $x\in \displaystyle X \cap \frac{X}{c}$,
\begin{equation} \label{eq:sin2cx}
\sin^2(cx) \leq c^2 \cdot \sin^2(x).
\end{equation}
Thus, for every $x\in \displaystyle I' \cap \frac{I'}{c} = X\cap \frac{X}{c}$, it holds that
\begin{equation} \label{eq:casei}
\begin{split}
h(cx) & = \norm{a_i}\cdot \left(\frac{b_i}{\norm{a_i}}-1+2\sin^2{\frac{cx}{2}}\right)\\
& \leq \norm{a_i}\cdot \left(\left(\frac{b_i}{\norm{a_i}}-1\right)+2c^2 \sin^2{\frac{x}{2}}\right)\\
& \leq c^2\cdot \norm{a_i}\cdot \left(\frac{b_i}{\norm{a_i}}-1+2 \sin^2{\frac{x}{2}}\right) = c^2\cdot h(x),
\end{split}
\end{equation}
where the first and last equalities are by~\eqref{eq:hx}, the first inequality is by combining $I'=X$ and~\eqref{eq:sin2cx}, and the second inequality holds since $\frac{b_i}{\norm{a_i}}-1 \geq 0$. By Definition~\ref{def:lip}, it follows that $h(x)$ is $2$-log-Lipschitz for every $x\in I'$.

By substituting in Definition~\ref{def:PieceLip} $g=g_i, m=1, X=X_1=I$ and $\dist(a,b)=|a-b|$ for every $a,b\in\REAL$, Properties (i)-(iii) of Definition~\ref{def:PieceLip} hold for $g_i$ since
\renewcommand{\labelenumi}{(\roman{enumi})}
\begin{enumerate}
\item $g_i$ has a unique infimum $x_1=\pi/2+\alpha_i$ in $X_1$.
\item We have $[0,\max_{x\in I} |x-x_1|] =[0,\max_{x\in [a_i,a_i+\pi]} |x-\pi/2-a_i|]=[0,\max_{x\in [0,\pi]} |x-\pi/2|]= [0,\pi/2] = I'$
and by~\eqref{eq:casei} $h:I' \to [0,\infty)$ is $2$-log-Lipschitz.
\item $g_i(t) = h(\abs{\pi/2 + \alpha_i -t}) = h(\dist(x_1,t))$ by~\eqref{eq:gicase1}.
\end{enumerate}
Hence, $g_i:I\to \REAL$ is a piecewise $2$-log-Lipschitz function.

\end{proof}

\begin{claim} \label{claim:case2}
If $\frac{b_i}{\norm{a_i}} < 1$ then $g_i:[-\frac{\pi}{2}+\alpha_i,\frac{3\pi}{2}+\alpha_i)\to \REAL$ is piecewise $2$-log-Lipschitz.
\end{claim}
\begin{proof}
Let
\begin{align*}
& \alpha = \arcsin\left(\frac{b_i}{\norm{a_i}}\right) \in [0,\frac{\pi}{2}), \\
& \alpha^*_{1} = -\frac{\pi}{2}+\alpha_i,\\
& \alpha^*_{2} = \alpha + \alpha_i,\\
& \alpha^*_{3} = \frac{\pi}{2}+\alpha_i,\\
& \alpha^*_{4} = \pi - \alpha + \alpha_i,\\
& \alpha^*_{5} = \frac{3\pi}{2}+\alpha_i, \\
& X_1 = [\alpha^*_{1},\alpha^*_{2}), I_1 = [0,\alpha^*_{2}-\alpha^*_{1}) = [0,\pi/2+\alpha), \\
& X_2 = [\alpha^*_{2},\alpha^*_{3}), I_2 = [0,\alpha^*_{3}-\alpha^*_{2}) = [0,\pi/2-\alpha), \\
& X_3 = [\alpha^*_{3},\alpha^*_{4}), I_3 = [0,\alpha^*_{4}-\alpha^*_{3}) = [0,\pi/2-\alpha), \\
& X_4 = [\alpha^*_{4},\alpha^*_{5}), I_4 = [0,\alpha^*_{5}-\alpha^*_{4}) = [0,\pi/2+\alpha),
\end{align*}
and let
\[
\begin{split}
\frac{h_1(t)}{\norm{a_i}} & :=\frac{b_i}{\norm{a_i}}-\sin(\alpha^*_{2}-t-\alpha_i)\\ &= \frac{b_i}{\norm{a_i}}-\sin(\alpha-t) \text{ for every }t\in I_1, \\
\frac{h_2(t)}{\norm{a_i}} & := \sin(\alpha^*_{2}+t-\alpha_i)-\frac{b_i}{\norm{a_i}}\\ & = \sin(\alpha+t)-\frac{b_i}{\norm{a_i}} \text{ for every }t\in I_2, \\
\frac{h_3(t)}{\norm{a_i}} & := \sin(\alpha^*_{4}-t-\alpha_i)-\frac{b_i}{\norm{a_i}}\\ & = \sin(\pi-\alpha-t)-\frac{b_i}{\norm{a_i}} = \sin(\alpha+t)-\frac{b_i}{\norm{a_i}}\\ &\text{ for every }t\in I_3, \\
\frac{h_4(t)}{\norm{a_i}} & := \frac{b_i}{\norm{a_i}}-\sin(\alpha^*_{4}+t-\alpha_i)\\ & = \frac{b_i}{\norm{a_i}}-\sin(\pi-\alpha+t) = \frac{b_i}{\norm{a_i}}-\sin(\alpha-t)\\ &\text{ for every }t\in I_4. \\
\end{split}
\]
First, observe that $I = [\alpha_i,\alpha_i+\pi) \subseteq [-\frac{\pi}{2}+\alpha_i,\frac{3\pi}{2}+\alpha_i) = X_1\cup X_2\cup X_3 \cup X_4$. Second, for every $t \in \REAL$,
\[
g_i(t) = \begin{cases}
h_1(\alpha^*_{2}-t), & t\in X_1\\
h_2(t-\alpha^*_{2}), & t\in X_2\\
h_3(\alpha^*_{4}-t), & t\in X_3\\
h_4(t-\alpha^*_{4}), & t\in X_4\\
0, & \text{otherwise}.
\end{cases}.
\]

Let $c>1$. We now prove that $g_i(t)$ is piecewise $2$-log-Lipschitz for every $t\in [-\frac{\pi}{2}+\alpha_i,\frac{3\pi}{2}+\alpha_i)$ by proving that

\textbf{(i):} $h_1(t)$ is $2$-log-Lipschitz in $I_1$, i.e., $h_1(ct) \leq c^2 \cdot h_1(t)$ for every $t\in \displaystyle I_1 \cap \frac{I_1}{c} = \frac{I_1}{c}$.

\textbf{(ii):} $h_2(t)$ is $1$-log-Lipschitz in $I_2$, i.e., $h_2(ct) \leq c \cdot h_2(t)$ for every $t\in \displaystyle I_2 \cap \frac{I_2}{c} = \frac{I_2}{c}$.

Since $I_2=I_3$ and $h_2(t) = h_3(t)$ for every $t\in I_3$, we conclude by (ii) that $h_3(t)$ is $1$-log-Lipschitz in $I_3$, i.e., $h_3(ct) \leq c \cdot h_3(t)$ for every $t\in \displaystyle I_3 \cap \frac{I_3}{c}$.

Since $I_1=I_4$ and $h_1(t) = h_4(t)$ for every $t\in I_4$, we conclude by (i) that $h_4(t)$ is $2$-log-Lipschitz in $I_4$, i.e., $h_4(ct) \leq c^2 \cdot h_4(t)$ for every $t\in \displaystyle I_4 \cap \frac{I_4}{c}$.

The proof of (i) and (ii) above is as follows.

\textbf{Proof of (i): }Clearly, if $t=0$, (i) trivially holds as
\begin{equation} \label{eq:h1_t0}
h_1(ct)=h_1(t)\leq c\cdot h_1(t).
\end{equation}

Let $d_1:[I_1 \cap \frac{I_1}{c} \setminus\br{0}]:\to[0,\infty)$ such that $d_1(t)=\displaystyle \frac{h_1(ct)}{h_1(t)}$. The denominator is positive since $h_1(t)>0$ in the range of $d_1$.

We first prove that $d_1$ does not get its maximum at $t'=\frac{\pi/2+\alpha}{c}$. Observe that

\begin{equation} \label{eq:d1_2}
\begin{split}
& \frac{1}{\norm{a_i}^2} \left(c\cdot h_1'(ct') \cdot h_1(t')-h_1(ct') \cdot h_1'(t')\right)\\
& = c\cdot \cos(\alpha - ct') \cdot \left(\frac{b_i}{\norm{a_i}}-\sin(\alpha-t')\right)\\ &- \left( \left(\frac{b_i}{\norm{a_i}}-\sin(\alpha-ct')\right)\cdot \cos(\alpha-t') \right)\\
& = -\left(\frac{b_i}{\norm{a_i}}+1\right)\cdot\left(\cos\left(\alpha-\frac{\pi/2+\alpha}{c}\right)\right)\\
& < 0,
\end{split}
\end{equation}
where the second derivation holds since $\alpha-ct' = -\pi/2$, and the last derivation holds since $\alpha-\frac{\pi/2+\alpha}{c} \in (-\pi/2,\pi/2)$.

Since $d_1'(t') = \frac{c\cdot h_1'(ct') \cdot (h_1(t'))-h_1(ct') \cdot (h_1'(t'))}{h_1^2(t')}$, we have by~\eqref{eq:d1_2} that $d_1'(t')<0$. Hence, $d_1$ is decreasing in the neighbourhood of $t'$. Therefore, $t'$ is not an extreme point of $d_1$.

We now prove that $d_1(t) \leq c^2$ for $t \in (0,\frac{\pi/2+\alpha}{c})$.
Suppose that $t^*$ maximizes $d_1(t)$ over the open interval $(0,\frac{\pi/2+\alpha}{c})$. Since $d_1$ is continuous in an open interval, the derivation of $d_1$ at $t^*$ is zero, i.e.,
\[
0 = c\cdot h_1'(ct^*) \cdot (h_1(t^*))-h_1(ct^*) \cdot (h_1'(t^*)).
\]
We also have that $h_1'(t^*)/\norm{a_i} = \cos(\alpha-t^*) > 0$ since
\[
\alpha-t^* \in (\alpha-(\pi/2+\alpha),\alpha]\subseteq (-\pi/2,\pi/2).
\]

Hence,
\begin{equation} \label{eq:d1t1}
\begin{split}
d_1(t^*)& = \frac{h_1(ct^*)}{h_1(t^*)} = \frac{c\cdot h_1'(ct^*)}{h_1'(t^*)} = \frac{c \norm{a_i}\cdot \cos(\alpha_{i2}^*-ct^*-\alpha_i)}{\norm{a_i}\cdot \cos(\alpha_{i2}^*-t^*-\alpha_i)}\\ & = \frac{c\cdot \cos(\alpha-ct^*)}{\cos(\alpha-t^*)}.
\end{split}
\end{equation}
For every $x\in \displaystyle I_1 \cap \frac{I_1}{c}$, let $f(x) = \cos(\alpha-x) = \cos(x-\alpha)$. Observe that $f''(x) = -\cos(x-\alpha) \leq 0$ for every $x\in \displaystyle I_1 \cap \frac{I_1}{c} = \frac{I_1}{c}$. Substituting $f$ and $X = I_1$ in Lemma~\ref{lem:fcx} yields
\begin{equation} \label{eq:cosct}
\cos(\alpha-cx) = f(cx) \leq c\cdot f(x) = c\cdot \cos(\alpha-x).
\end{equation}
Hence, for every $t\in \displaystyle \frac{I_1}{c}$ it follows that
\begin{equation} \label{gipart1}
d_1(t) \leq d_1(t^*) = c\cdot\frac{\cos(\alpha-ct^*)}{\cos(\alpha-t^*)} \leq c^2,
\end{equation}
where the first derivation is by the definition of $t^*$, the second derivation is by~\eqref{eq:d1t1} and the last derivation is by substituting $x=t$ in~\eqref{eq:cosct}.
We also have that
\begin{equation} \label{eq:d1_1}
\begin{split}
\lim_{t\to 0} d_1(t)& = \lim_{t\to 0} \frac{h_1(ct)}{h_1(t)} = \lim_{t\to 0} \frac{c\cdot h_1'(ct)}{h_1'(t)}\\ & = c \cdot \lim_{t \to 0} \frac{\norm{a_i}\cdot \cos(\alpha-ct)}{\norm{a_i}\cdot \cos(\alpha-t)} = c\cdot\frac{\cos(\alpha)}{\cos(\alpha)} = c,
\end{split}
\end{equation}
where the second equality holds by L'hospital's rule since $h_1(ct) = h_1(t) = 0$ for $t=0$.

By combining~\eqref{eq:d1_2}, ~\eqref{gipart1} and~\eqref{eq:d1_1}, we get that $d_1(t) = \frac{h_1(ct)}{h_1(t)} \leq c^2$ for every $t\in \displaystyle I_1 \cap \frac{I_1}{c} \setminus\br{0}$.
By combining~\eqref{eq:h1_t0} with the last inequality, (i) holds as $h_1(ct) \leq c^2 h_1(t)$ for every $t\in \displaystyle I_1 \cap \frac{I_1}{c}$.

\textbf{Proof of (ii): }We have that $h_2(t) = \norm{a_i} \cdot \big(\sin(t+\alpha)-\frac{b_i}{\norm{a_i}}\big)$ for $t\in I_2$. Since $h_2''(t) = -\norm{a_i}\cdot \sin(t+\alpha) \leq 0$ for every $t\in \displaystyle \frac{I_2}{c}$, by substituting $f(x) = h_2(x)$ and $X = I_2$ in Lemma~\ref{lem:fcx}, (ii) holds as
\begin{equation} \label{eq:gipart2}
h_2(ct) = f(ct) \leq c\cdot f(t) = c\cdot h_2(t)
\end{equation}
for every $t \in \displaystyle I_2 \cap \frac{I_2}{c}$.

By (i) and (ii) we can now prove the lemma as follows.
Observe that $\alpha^*_{2}$ and $\alpha^*_{4}$ are the minima of $g_i(t)$ over $t\in [-\pi/2+\alpha_i,\alpha_i+3\pi/2]$ when $\frac{b_i}{\norm{a_i}} < 1$. By Definition~\ref{def:PieceLip} we have that $g_i(t)$ is piecewise $2$-log-Lipschitz function in $[-\frac{\pi}{2}+\alpha_i,\frac{3\pi}{2}+\alpha_i)$ since
\begin{enumerate}
\item $g_i$ has a unique infimum $x_1 = \alpha_{i2}^*= \alpha+\alpha_i$ in $X_1$, $h_1:[0,\max_{x\in X_1}|x_1-x|]\to [0,\infty)$ is $2$-log-Lipschitz in $I_1$, and $g_i(t) = h_1(\alpha^*_{2}-t) = h_1(|\alpha^*_{2}-t|)$ for every $t\in X_1$.
\item $g_i$ has a unique minimum $x_2 = \alpha_{i2}^*= \alpha+\alpha_i$ in $X_2$, $h_2:[0,\max_{x\in X_2}|x_2-x|]\to [0,\infty)$ is $1$-log-Lipschitz in $I_2$, and $g_i(t) = h_2(t-\alpha^*_{2}) = h_2(|t-\alpha^*_{2}|)$ for every $t\in X_2$.
\item $g_i$ has a unique infimum $x_3 = \alpha_{i4}^*= \pi - \alpha + \alpha_i$ in $X_3$, $h_3:[0,\max_{x\in X_3}|x_3-x|]\to [0,\infty)$ is $1$-log-Lipschitz in $I_3$, and $g_i(t) = h_3(\alpha^*_{4}-t) = h_3(|\alpha^*_{4}-t|)$ for every $t\in X_3$.
\item $g_i$ has a unique minimum $x_4 = \alpha_{i4}^*= \pi - \alpha + \alpha_i$ in $X_4$, $h_4:[0,\max_{x\in X_4}|x_4-x|]\to [0,\infty)$ is $2$-log-Lipschitz in $I_4$, and $g_i(t) = h_4(t-\alpha^*_{4}) = h_4(|t-\alpha^*_{4}|)$ for every $t\in X_4$.
\end{enumerate}
\end{proof}

Recall that $i\in [n]$. Since $I \subset [-\frac{\pi}{2}+\alpha_i,\frac{3\pi}{2}+\alpha_i)$, by Claims~\ref{claim:case1} and~\ref{claim:case2} it holds that $g_i$ is piecewise $2$-log-Lipschitz function for every $t\in I$.

Hence, by substituting $g(p,\cdot)$ in Theorem~\ref{Lem:piecewiseApprox} with $g_i(\cdot)$, and $r=2$, there exists a minimum $t'\in \bigcup_{j=1}^n M(g_j(\cdot))$ such that for every $t \in I$,
\begin{equation} \label{Eq:gt2_leq_gt}
g_i(t') \leq 2^r \cdot g_i(t) = 4 g_i(t).
\end{equation}

Let $x\in\REAL^2$ be a unit vector, $t\in\REAL$ such that $x=x(t)$, $x' = (\sin{t'},-\cos{t'})$, and let $k\in [n]$ be the index such that $t' \in M(g_k(\cdot))$. Hence,
\[
\begin{split}
|a_i^Tx'-b_i| & = |a_i^T x(t') - b_i| =  g_i(t') \leq 4 \cdot g_i(t)\\ &= 4 \cdot |a_i^T x(t) -b_i| = 4\cdot |a_i^T x - b_i|,
\end{split}
\]
where the first equality holds by the definition of $x'$, the second and fourth equalities holds by~\eqref{Eq:gt_axb}, the inequality holds by~\eqref{Eq:gt2_leq_gt}, and the last equality holds since $x=x(t)$.

\textbf{Proof of Lemma~\ref{lem:axb_proof} (ii): }The proof follows immediately by the definition of $k$ in (i).

Observe that it takes $O(1)$ time to compute $M(g_i(\cdot))$ for a fixed $i\in [n]$.
Hence, Lemma~\ref{lem:axb_proof} holds by letting $C = \br{x(t) \mid t\in \bigcup_{i\in [n]}M(g_i(\cdot))}$, since it takes $O(n)$ time to compute $C$.
\end{proof}

\subsection{Aligning Points-To-Lines}

\begin{lemma} [Lemma~\ref{Lemma:PQZx}] \label{Lemma:PQZx_proof}
Let $v = (v_x,v_y)$ be a unit vector such that $v_y \neq 0$ and $\ell=\mathrm{sp}\br{v}$ be the line in this direction.
Let $p,q,z\in\REAL^2$ be the vertices of a triangle such that $\norm{p-q}>0$. Let $P,Q,Z\in\REAL^{2\times 2}$ be the output of a call to \secondalgname$(v,p,q,z)$; see Algorithm~\ref{Alg:calcz}. Then the following hold:
\begin{enumerate}
\renewcommand{\labelenumi}{(\roman{enumi})}
\item We have both $p \in x$-axis and $q \in \ell$ if and only if there is a unit vector $x\in\REAL^2$ such that $p=Px$ and $q=Qx$. \label{PQZxBullet1}
\item For every unit vector $x\in\REAL^2$, we have that $z=Zx$ if $p=Px$ and $q=Qx$. \label{PQZxBullet2}
\end{enumerate}
\end{lemma}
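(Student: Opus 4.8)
The plan is to verify both parts of the lemma by direct computation from the closed forms of $P,Q,Z$ produced by Algorithm~\ref{Alg:calcz}, exploiting two structural facts: the second row of $P$ is identically zero, and $Q=P^{T}$. For the ``if'' direction of part (i): since the second row of $P$ vanishes, $Px$ lies on the $x$-axis for every $x\in\REAL^2$; and since $Q=P^{T}$, a one-line computation gives $Qx=\tfrac{r_1x_1}{v_y}(v_x,v_y)$, which is always a scalar multiple of $v$ and hence lies on $\ell$. Therefore $p=Px$ forces $p$ onto the $x$-axis and $q=Qx$ forces $q\in\ell$. (The hypothesis $v_y\neq0$ is exactly what makes $v_x/v_y$, and hence $P$ itself, well defined.)

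For the ``only if'' direction of part (i): assume $p=(p_1,0)$ lies on the $x$-axis and $q=(q_1,q_2)\in\ell$, so that $q_1=\tfrac{v_x}{v_y}q_2$. I would exhibit the explicit candidate
\[
x:=\frac{1}{r_1}\bigl(q_2,\; p_1-q_1\bigr),\qquad r_1=\norm{p-q},
\]
and check three things: (a) $\norm{x}=1$, using $r_1^2=(p_1-q_1)^2+q_2^2$ (valid because $p_2=0$); (b) $Px=p$, the needed cancellation being $q_1=\tfrac{v_x}{v_y}q_2$; and (c) $Qx=q$. The only degenerate subcase is $q_2=0$, which (since $v_y\neq0$) forces $q$ to be the origin and is handled immediately.

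For part (ii): by part (i), the hypothesis $p=Px,\ q=Qx$ for a unit vector $x$ pins $x$ down to the vector constructed above, and in particular $(Q-P)x=Qx-Px=q-p$. Substituting this into the definition of $Z$ yields
\[
Zx=p+\frac{d_1}{r_1}(q-p)+b\,\frac{d_2}{r_1}\,R(q-p),
\]
so it remains only to identify the right-hand side with $z$. This is the standard decomposition of $z$ relative to the line $\overline{pq}$: expanding $r_3^2=\norm{(z-p)-(q-p)}^2$ gives $(z-p)^{T}(q-p)=r_1 d_1$, so $f:=p+\tfrac{d_1}{r_1}(q-p)$ is the orthogonal projection of $z$ onto $\overline{pq}$; the Pythagorean theorem gives $\norm{z-f}^2=r_2^2-d_1^2\ge0$, which is why $d_2=\sqrt{|r_2^2-d_1^2|}=\norm{z-f}$; and since $\tfrac1{r_1}R(q-p)$ is a unit vector orthogonal to $q-p$, we get $z-f=\pm\tfrac{d_2}{r_1}R(q-p)$. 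A short sign computation relating $(z-p)^{T}(q-p)^{\bot}$ (the quantity defining the flag $b$) to $(z-f)^{T}R(q-p)$ then shows that the offset is exactly the term $b\,\tfrac{d_2}{r_1}R(q-p)$ appearing above, so $Zx=z$.

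The routine parts are the matrix arithmetic in (a)--(c) and the law-of-cosines identity for $d_1$. The one genuinely delicate step is the final sign/orientation bookkeeping in part (ii): making the fixed $90^{\circ}$ rotation $R$, the convention for $(\cdot)^{\bot}$, and the binary flag $b$ agree so that a single term reproduces the \emph{signed} offset $z-f$ in every case, including the boundary case $z\in\overline{pq}$ where $d_2=0$ and $z=f$. I would resolve this with an explicit case split on which side of the directed segment $q-p$ the vertex $z$ lies, verifying $Zx=z$ in each case.
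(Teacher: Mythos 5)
Your proposal is correct in substance and, for part (i), the backward direction coincides with the paper's argument (second row of $P$ vanishes, so $Px$ is always on the $x$-axis; $Q=P^{T}$ makes $Qx$ a multiple of $v$). Where you genuinely diverge is the forward direction of (i): the paper parametrizes the candidate as $x=(\sin\alpha,\cos\alpha)^{T}$ using the interior angles of the triangle $\Delta(p,q,o)$ and the law of sines, which forces a three-way case analysis on degenerate angles ($\alpha=0$, $\beta=0$); your explicit candidate $x=\tfrac{1}{r_1}(q_2,\,p_1-q_1)$ is the same vector in disguise ($q_2/r_1=\sin\alpha$), but the direct matrix verification works uniformly, so you avoid the case split entirely --- a real simplification. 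For part (ii) you follow the same decomposition as the paper ($z$ equals its projection $f$ onto the line through $p,q$ plus a perpendicular offset of length $d_2$), but you derive the identity $(z-p)^{T}(q-p)=r_1d_1$ by expanding $r_3^2$, whereas the paper simply asserts the formula for $z$ by pointing at the circle-intersection figure; your version is the more complete argument.

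One caution on the step you defer. You promise that a sign computation shows the offset equals $b\,\tfrac{d_2}{r_1}R(q-p)$ with $b$ as defined in Line~\ref{lline5} of Algorithm~\ref{Alg:calcz}. As literally written there, $b\in\{0,1\}$, so when $z$ lies strictly on the side where $(z-p)^{T}(q-p)^{\bot}\le 0$ and $d_2>0$, the offset term is zero while $z-f\neq 0$, and $Zx=z$ fails. The paper's own proof silently treats $b$ as taking values in $\{1,-1\}$ instead (and also asserts $v^{\bot}=Rv$ for every $v$, which conflicts with the paper's sign convention $p^{\bot}_1\ge 0$). So the case split you propose is not merely bookkeeping: carried out honestly against the algorithm as printed, it exposes a discrepancy that must be repaired by reading $b$ as a sign in $\{\pm 1\}$ (or $\{0,1\}$ with the $b=0$ branch contributing $-\tfrac{d_2}{r_1}R(Q-P)$). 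With that correction your argument goes through.
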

\begin{proof}
We use the variables that are defined in Algorithm~\ref{Alg:calcz}.\\
\textbf{(i) $\Leftarrow$:} Let $\alpha\in[0,2\pi)$ and $x=(\sin{\alpha},\cos{\alpha})^T$ such that $p=Px$ and $q=Qx$. We need to prove that $p\in x$-axis and $q\in\ell$. Indeed,
\[
\dist(p,x\text{-axis}) = |(0,1) p| = |(0,1) Px| = 0,
\]
where the first equality holds since the vector $(0,1)$ is orthogonal to the $x$-axis and the last equality holds since $(0,1) P = (0,0)$ by the definition of $P$ in Line \ref{Alg1line:PDef} of Algorithm~\ref{Alg:calcz}. Hence, $p\in x$-axis. Let $s=\frac{v_x}{v_y}$. We also have
\[
\begin{split}
\dist(q,\ell) & = |{v^\bot}^T q| = |{v^\bot}^T Qx|\\
& = r_1 \left| \left( \begin{array}{ccc}
v_y \\
-v_x \end{array} \right)^T \left( \begin{array}{ccc}
s & 0 \\
1 & 0 \end{array} \right) \left( \begin{array}{ccc}
\sin{\alpha} \\
\cos{\alpha} \end{array} \right) \right|\\
& = r_1 \left| \left( \begin{array}{ccc}
0 & 0 \end{array} \right) \left( \begin{array}{ccc}
\sin{\alpha} \\
\cos{\alpha} \end{array} \right)\right| = 0 \nonumber,
\end{split}
\]
where the third equality holds by the definition of $v^\bot = (v_y,-v_x)^T$, and the fourth equality holds since $s=\frac{v_x}{v_y}$. Hence, $q\in\ell$.

\noindent \textbf{(i) $\Rightarrow$:}  Let $p\in x$-axis and $q\in\ell$. We need to prove that there exists a unit vector $x\in\REAL^2$ such that $p=Px$ and $q=Qx$.
Let $t_p,t_q \in \REAL$ such that $p=(t_p,0)^T$, $q = t_q(v_x,v_y)^T$ and $\Delta (p,q,o)$ be a triangle, where $o$ is the origin of $\REAL^2$. Let $\alpha\in [0,\pi)$ be the interior angle at vertex $p$ and let $\beta\in [0,\pi)$ be the angle at vertex $q$ of the triangle $\Delta (p,q,o)$. Observe that the sin of the angle at vertex $o$ is simply $v_y$. Let $x=(\sin{\alpha},\cos{\alpha})^T$.
By simple trigonometric identities in the triangle $\Delta (p,q,o)$, we have
\begin{equation} \label{Eq:sinBDef}
\sin{\beta} = v_y \cos{\alpha}+v_x \sin{\alpha}.
\end{equation}
We continue with the following case analysis: \textbf{(a):} $\alpha,\beta \neq 0$, \textbf{(b):} $\alpha = 0, \beta \neq 0$ and \textbf{(c):} $\alpha \neq 0, \beta = 0$.

\noindent \textbf{Case (a) $\alpha,\beta \neq 0$: }By the law of sines we have that $\norm{q-o}/\sin(\alpha) = \norm{p-o}/\sin{\beta} = \norm{p-q}/v_y$. Let $r_1=\norm{p-q}$. Hence, respectively,
\begin{equation}\label{Eq:sinADef}
\frac{t_q}{\sin{\alpha}} = \frac{r_1}{v_y}, \text{therefore }\sin{\alpha} = \frac{t_q \cdot v_y}{r_1},
\end{equation}
and
\begin{equation}\label{Eq:tpDef}
\frac{t_p}{\sin{\beta}} = \frac{r_1}{v_y}, \text{therefore } t_p = \frac{r_1 \cdot \sin{\beta}}{v_y}.
\end{equation}

Then it holds that
\begin{equation} \label{Eq:Qxq}
\begin{split}
Qx & = r_1\left( \begin{array}{ccc}
s & 0 \\
1 & 0 \end{array} \right) \left( \begin{array}{ccc}
\sin{\alpha} \\
\cos{\alpha} \end{array} \right)
= r_1\left( \begin{array}{ccc}
s\sin{\alpha} \\
\sin{\alpha} \end{array} \right)\\
& = r_1\left( \begin{array}{ccc}
\frac{v_x}{v_y} \cdot \frac{t_q \cdot v_y}{r_1} \\
\frac{t_q \cdot v_y}{r_1} \end{array} \right)
= t_q \left( \begin{array}{ccc}
v_x \\
v_y \end{array} \right)\\
& = t_q \cdot v = q,
\end{split}
\end{equation}
where the third equality holds by combining~\eqref{Eq:sinADef} with the definition $s=\frac{v_x}{v_y}$, and the last equality holds by the definition of $t_q$.
It also holds that
\begin{align}
Px & =
r_1\left( \begin{array}{ccc}
s & 1 \\
0 & 0 \end{array} \right) \left( \begin{array}{ccc}
\sin{\alpha} \\
\cos{\alpha} \end{array} \right)
= r_1\left( \begin{array}{ccc}
s\sin{\alpha}+\cos{\alpha} \\
0 \end{array} \right) \nonumber \\
& = r_1\left( \begin{array}{ccc}
\frac{v_x}{v_y}\sin{\alpha}+ \cos{\alpha} \\
0 \end{array} \right)
= r_1\left( \begin{array}{ccc}
\frac{\sin{\beta}}{v_y} \label{eqthird12}\\
0 \end{array} \right)\\
& = \left( \begin{array}{ccc}
t_p\\
0 \end{array} \right) = p \label{eqfifth}
\end{align}
where the two equalities in~\eqref{eqthird12} hold respectively by the definition $s=\frac{v_x}{v_y}$ and by~\eqref{Eq:sinBDef}, and the two equalities in~\eqref{eqfifth} hold by \eqref{Eq:tpDef} and the definition of $t_p$ respectively. Combining~\eqref{Eq:Qxq} and ~\eqref{eqfifth} yields that the vector $x = (\sin{\alpha},\cos{\alpha})^T$ satisfies $p=Px$ and $q=Qx$.

\noindent \textbf{Case (b) $\alpha = 0$: } In this case, since $\alpha = 0$, it holds that $q$ intersects the origin $o$, i.e., $q = (0,0)^T$, and $p = \pm (r_1,0)$ since $\norm{p-q}=r_1$. Then by setting $x = \pm(0,1)^T$ respectively, it holds that, $Px = \pm(r_1,0)^T = p$ and $Qx = (0,0)^T = q$. Hence, there exists a unit vector $x\in\REAL^2$ such that $p=Px$ and $q=Qx
$.

\noindent \textbf{Case (c) $\beta = 0$: } In this case, since $\beta = 0$, it holds that $p$ intersects the origin $o$, i.e., $p = (0,0)^T$. Similarly to~\eqref{Eq:Qxq}, it holds that $Qx=q$ for $x=(\sin{\alpha},\cos{\alpha})^T$. It also holds that
\begin{align}
Px & =
r_1\left( \begin{array}{ccc}
s & 1 \\
0 & 0 \end{array} \right) \left( \begin{array}{ccc}
\sin{\alpha} \\
\cos{\alpha} \end{array} \right)
= r_1\left( \begin{array}{ccc}
s\sin{\alpha}+\cos{\alpha} \\
0 \end{array} \right) \\
& = r_1\left( \begin{array}{ccc}
\frac{v_x}{v_y}\sin{\alpha}+\cos{\alpha} \\
0 \end{array} \right)
= \left( \begin{array}{ccc}
0\\
0 \end{array} \right) = p \nonumber
\end{align}
where the fourth equality holds by combining~\eqref{Eq:sinBDef} and $\beta = 0$.
Hence, there exists a unit vector $x\in\REAL^2$ such that $q = Qx$ and $p = Px$

\noindent \textbf{(ii)}: Suppose that $p=Px$ and $q=Qx$ for some unit vector $x\in\REAL^2$. By (i) we have that $p\in x$-axis and $q\in\ell$. We need to prove that $z=Zx$. Consider the triangle $\Delta (p,q,z)$. Vertex $z$ lies at one of the intersection points of the pair of circles $c_1,c_2$ whose radii are $r_2 = \norm{p-z},r_3=\norm{q-z}$, centered at $p$ and $q$, respectively; See Fig.~\ref{fig:CircCircInter}. The position of $z$ is given by
\begin{equation} \label{eq:z1}
\begin{split}
z & = p + d_1 \cdot \frac{q-p}{\norm{q-p}} + b\cdot d_2 \Big(\frac{q-p}{\norm{q-p}}\Big)^\bot \\
& = p+\frac{d_1}{r_1}\cdot(q-p) + b\cdot \frac{d_2}{r_1}\cdot(q-p)^\bot \\
& = p+\frac{d_1}{r_1}\cdot(q-p) +b\cdot \frac{d_2}{r_1}\cdot R(q-p),
\end{split}
\end{equation}
where $b$ is $1$ or $-1$ if $z$ lies respectively in the halfplane to the right of the vector $(q-p)$ or otherwise. If $z$ lies exactly on the line spanned by the vector $(q-p)$, then $b$ can take either $1$ or $-1$ since, in that case, $d_2=0$; See illustration in Fig~\ref{fig:CircCircInter}. The first equality in~\eqref{eq:z1} holds since the vector $q-p$ and $(q-p)^\bot$ are an orthogonal basis of $\REAL^2$, and the last equality holds since $R$ is a $\pi/2$ radian counter clockwise rotation matrix, i.e., for every vector $v\in\REAL^2$ it satisfies that $v^\bot = Rv$.
Substituting $p=Px$ and $q=Qx$ in~\eqref{eq:z1} proves Claim (ii) as
\[
\begin{split}
z & = Px+\frac{d_1}{r_1}\cdot(Qx-Px) +b\cdot \frac{d_2}{r_1}\cdot R(Qx-Px)\\\ & = \big(P+\frac{d_1}{r_1}\cdot(Q-P) +b\cdot \frac{d_2}{r_1}\cdot R(Q-P)\big)x = Zx.
\end{split}
\]
\end{proof}

\begin{figure}
	\centering
    \includegraphics[width=0.4\textwidth,scale=0.2]{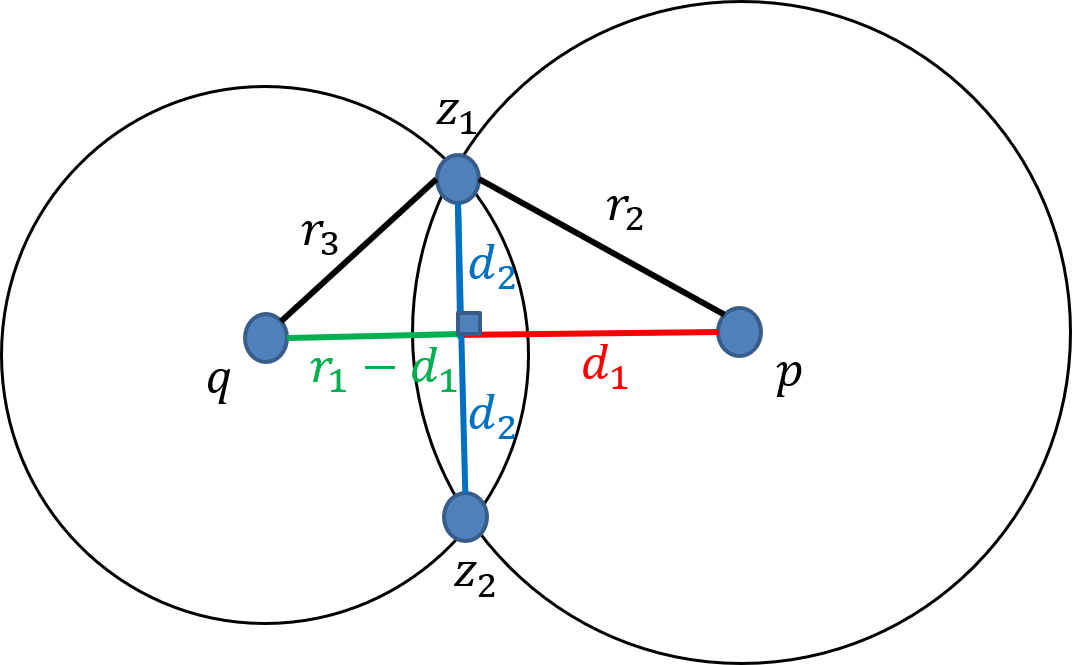}
    \caption{A pair of circles that are centered at $p$ and $q$ respectively, and intersect at $z_1$ and $z_2$. The line between $p$ and $q$ is always orthogonal to the line between $z_1$ and $z_2$ and partitions the segment between $z_1$ and $z_2$ into two equal halves.}
    \label{fig:CircCircInter}
\end{figure}

\begin{corollary} \label{Cor:RZx}
Let $\ell$ be a line on the plane that intersects the origin and is spanned by the unit vector $v\in\REAL^2$. Let $p,q\in\REAL^2$ such that $p$ is on the $x$-axis, $q$ is on line $\ell$ and $\norm{p-q} > 0$. For every $z\in \REAL^2$, let $(P,Q,Z(z))$ denote the output of a call to \secondalgname$(v,p,q,z)$; see Algorithm~\ref{Alg:calcz}. Then (i) and (ii) holds as follows.
\renewcommand{\labelenumi}{(\roman{enumi})}
\begin{enumerate}
\item There is a unit vector $x\in\REAL^2$ such that for every $z\in \REAL^2$ we have $z=Z(z)x$.
\item For every unit vector $x\in\REAL^2$, there is a corresponding alignment $(R,t)$, such that $Rz-t = Z(z)x$ for every $z\in \REAL^2$.
\end{enumerate}
\end{corollary}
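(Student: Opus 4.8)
The plan is to reduce everything to Lemma~\ref{Lemma:PQZx}, exploiting the fact that the first two matrices returned by $\secondalgname(v,p,q,z)$ do not depend on the vertex $z$: by Lines~\ref{Alg1line:PDef} and~\ref{Alg1line:QDef} of Algorithm~\ref{Alg:calcz}, $P$ and $Q=P^T$ are functions of $v$ and $r_1=\norm{p-q}$ alone, while $z$ enters only through $r_2,r_3,d_1,d_2,b$ and hence only through $Z$ — which is precisely why the statement writes $Z(z)$ but not $P(z),Q(z)$. For part (i): since $p$ lies on the $x$-axis and $q$ lies on $\ell=\mathrm{sp}\br{v}$, property (i) of Lemma~\ref{Lemma:PQZx} produces a unit vector $x\in\REAL^2$ with $p=Px$ and $q=Qx$, and this $x$ is built from $P$ and $Q$ only, hence is the same for every $z$. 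Fixing any $z$ and noting that $Z(z)$ is exactly the matrix $Z$ output by $\secondalgname(v,p,q,z)$, property (ii) of Lemma~\ref{Lemma:PQZx} applied to this same $x$ gives $z=Z(z)x$. Thus one unit vector serves all $z$ simultaneously, which is part (i). (Throughout we assume $v_y\neq 0$, the precondition that makes $\secondalgname$ well-defined; this holds whenever the corollary is invoked, since there $v$ is a rotated line direction non-parallel to the $x$-axis.)

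For part (ii) I would fix an arbitrary unit vector $x$, take $x_0$ to be the vector from part (i) (so $p=Px_0$ and $z=Z(z)x_0$ for all $z$), and let $R$ be the unique rotation matrix with $Rx_0=x$. The computational heart is the identity $Q-P=P^T-P=r_1 J$, where $J$ is the $\pi/2$-rotation matrix of Line~\ref{Alg1line:R}; substituting it into the definition of $Z$ in Line~\ref{Alg1line:ZDef} collapses that formula to $Z(z)=P+d_1 J-b\,d_2 I$. Since planar rotations commute, $J$ and $R$ commute, so using $z-p=Z(z)x_0-Px_0=d_1Jx_0-b\,d_2x_0$ one obtains
\[
Z(z)x = Px+d_1Jx-b\,d_2x = Px+R(d_1Jx_0-b\,d_2x_0)=Px+R(z-p)=Rz-t,
\]
with $t:=Rp-Px$ independent of $z$. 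Hence $(R,t)$ is the alignment required in part (ii).

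The step I expect to be the main obstacle is spotting and verifying the collapse $Q-P=r_1 J$ of the expression in Line~\ref{Alg1line:ZDef} (equivalently, that $P-P^T$ is a quarter-turn scaled by $r_1$); it is what lets the rotation and translation be pulled out uniformly over $z$, after which the argument is just the commutativity of $2\times2$ rotations together with Lemma~\ref{Lemma:PQZx}. A minor bookkeeping point is the convention for $b$ (Line~\ref{lline5} uses $b\in\{0,1\}$ while the proof of Lemma~\ref{Lemma:PQZx} uses $b\in\{1,-1\}$), but this only affects which of the two circle intersection points is named $z$ and does not alter the displayed derivation.
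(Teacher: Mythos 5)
Your proof is correct. Part (i) follows the paper's own argument essentially verbatim: since $P$ and $Q$ do not depend on $z$, Lemma~\ref{Lemma:PQZx}(i) yields one unit vector $x_0$ with $p=Px_0$ and $q=Qx_0$, and Lemma~\ref{Lemma:PQZx}(ii) then gives $z=Z(z)x_0$ for every $z$. For part (ii) you take a genuinely different route. The paper argues geometrically: because $\norm{Px-Qx}=\norm{p-q}$ for every unit vector $x$, there is an alignment $(R,t)$ carrying $p,q$ onto $Px,Qx$; applying it to the whole triangle $\Delta(p,q,z)$ and re-invoking Lemma~\ref{Lemma:PQZx}(ii) on the image triangle gives $Rz-t=Z(z)x$. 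That argument implicitly relies on the output of \secondalgname{} being invariant under orientation-preserving rigid motions of its triangle input (it depends on the triangle only through $r_1,r_2,r_3$ and the bit $b$), a point the paper does not spell out. You instead compute the affine structure of $Z(z)$ explicitly: the identity $Q-P=P^T-P=r_1J$ (with $J$ the quarter-turn of Line~\ref{Alg1line:R}) collapses Line~\ref{Alg1line:ZDef} to $Z(z)=P+d_1J-b\,d_2I$, and commutativity of planar rotations then lets you pull the rotation $R$ with $Rx_0=x$ out of $Z(z)x-Px$, giving $Z(z)x=Px+R(z-p)=Rz-t$ with $t=Rp-Px$ independent of $z$; I verified the algebra and it is sound. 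Your version is more computational but self-contained and sidesteps the invariance step the paper leaves implicit, at the cost of opening up the formula for $Z$; the paper's version is shorter if one treats Lemma~\ref{Lemma:PQZx} as a black box. Your closing remarks on the $v_y\neq 0$ precondition and on the $b\in\br{0,1}$ versus $b\in\br{1,-1}$ discrepancy are accurate and, as you note, immaterial to the derivation since you only use $z=Z(z)x_0$ from part (i) rather than re-deriving $Z$.
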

\begin{proof}
\textbf{(i): }By Lemma~\ref{Lemma:PQZx_proof} (i), since $p\in x$-axis and $q\in\ell$, there is a unit vector $x\in\REAL^2$ such that $p=Px$ and $q=Qx$. By Lemma~\ref{Lemma:PQZx_proof} (ii), since $p=Px$ and $q=Qx$, we have $z=Z(z)x$ for every $z\in\REAL^2$. Hence, there is a unit vector $x$ such that $z=Z(z)x$ for every $z\in\REAL^2$.

\noindent \textbf{(ii): }Let $x\in\REAL^2$ be a unit vector. Since $\norm{Px-Qx}=\norm{p-q}$ for every unit vector $x\in\REAL^2$, there is an aligning $(R,t)$ that aligns the points $p$ and $q$ with the points $Px$ and $Qx$, i.e., $Rp-t = Px$ and $Rq-t = Qx$.
Let $z\in\REAL^2$. Consider the triangle $\Delta (p',q',z')$ after applying the alignment $(R,t)$ to triangle $\Delta (p,q,z)$. Hence, $p' = Rp-t$, $q' = Rq-t$ and $z' = Rz-t$. By the definition of $(R,t)$, we also have $p'=Px$ and $q'=Qx$. By Lemma~\ref{Lemma:PQZx_proof} (ii), the unit vector $x$ satisfies $z'=Z(z)x$. Hence, $z' = Rz-t = Z(z)x$.
\end{proof}

The proof of the following lemma is divided into 3 steps that correspond to the steps discussed in the intuition for Algorithm~\ref{Alg:slowApprox} in Section~\ref{mainAlgIntuition}.
\begin{lemma} [Lemma~\ref{theorem:2DPointsToLines}] \label{theorem:2DPointsToLines_proof}
Let $A=\br{(p_1,\ell_1),\cdots,(p_n,\ell_n)}$ be a set of $n \geq 3$ pairs, where for every $i \in [n]$, we have that $p_i$ is a point and $\ell_i$ is a line, both on the plane. Let $C \subseteq \alignments$ be an output of a call to \algname$(A)$; see Algorithm~\ref{Alg:slowApprox}. Then for every alignment $(R^*,t^*)\in \alignments$ there exists an alignment $(R,t)\in C$ such that for every $i\in[n]$,
\begin{equation} \label{eq:MainTheoremEq}
\dist(Rp_i-t,\ell_i) \leq 16\cdot \dist(R^*p_i-t^*,\ell_i).
\end{equation}
Moreover, $|C| \in O(n^3)$ and can be computed in $O(n^3)$ time.
\end{lemma}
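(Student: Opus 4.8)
The plan is to follow the three-step reduction outlined in the intuition (Section~\ref{mainAlgIntuition}), carefully tracking the multiplicative loss at each step and verifying that the finite candidate set $C$ produced by Algorithm~\ref{Alg:slowApprox} captures the result of the third step. Fix an arbitrary alignment $(R^*,t^*)$ and write $p_i^* = R^*p_i - t^*$. Let $j \in [n]$ be the index of the point-line pair minimizing $\dist(p_j^*,\ell_j)$. \textbf{Step 1:} translate the configuration so that $p_j$ lands on $\ell_j$; by the triangle inequality, this increases $\dist(p_i,\ell_i)$ by at most $\dist(p_j^*,\ell_j) \le \dist(p_i^*,\ell_i)$, hence by a factor of at most $2$ for every $i$. \textbf{Step 2:} among the alignments keeping $p_j \in \ell_j$, translate along $\ell_j$ until a second pair, indexed $k$, has $p_k$ on $\ell_k$; the minimal such translation has length at most the current distance of $p_k$ to $\ell_k$, so again every distance grows by at most a factor of $2$, giving a $4$-approximation so far. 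If $\ell_j \parallel \ell_k$ this second constraint cannot be met exactly, and Line~\ref{Alg2line:compC2} handles that degenerate case directly (place $p_j \in \ell_j$, $p_k$ as close to $\ell_k$ as possible); I would treat this as a short separate case. \textbf{Step 3:} among the (still infinitely many) alignments with $p_j \in \ell_j$ and $p_k \in \ell_k$, we may rotate the rigid body about the appropriate pivot; pick the index $l$ requiring the smallest rotation to reach its own local minimum of $\dist(p_l,\ell_l)$, and rotate to that point. This costs another factor of $4$ (two applications of the triangle-inequality trick, in the arc-length metric on the circle of possible positions — formalized via the piecewise log-Lipschitz machinery rather than ad hoc), for a total of $16$.

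For the finiteness and the algorithmic correspondence, I would invoke Lemma~\ref{Lemma:PQZx} (via Corollary~\ref{Cor:RZx}): once $\ell_j$ is rotated to the $x$-axis by $R_{v_j}$, the locus of the third vertex $p_l'$ over all configurations with $p_j' \in x\text{-axis}$ and $p_k' \in \ell_k$ (translated so $\ell_j \cap \ell_k$ is the origin) is exactly $\{Z'x : \|x\|=1\}$, an ellipse, with the first two vertices given by $P'x, Q'x$. Minimizing $\dist(p_l', \ell_l)$ over this family is then minimizing $|v_l^T Z' x - c_l|$ over unit $x$, where $c_l = \dist(s,\ell_l)$ is the constant accounting for the shift of origin. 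This is precisely the constrained linear-regression quantity controlled by Lemma~\ref{lem:axb}: there is a finite set $X$ of $O(1)$ minimizing unit vectors such that for \emph{any} competing unit vector, some member of $X$ approximates $|v_l^T Z' x - c_l|$ within a factor of $4$ — and simultaneously approximates $|v_m^T Z' x - c_m|$ for all $m$, which is what makes the third-step loss a uniform factor of $4$ across all $i$. Undoing $R_{v_j}$ and translating back recovers genuine alignments, which are exactly the elements placed into $C_1$ in Line~\ref{Alg2line:compC1}. Since the loop ranges over all triples $(j,k,l)$, the particular triple singled out by Steps 1--3 is examined, so $C$ contains the desired $(R,t)$.

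Chaining the four factors ($2 \cdot 2 \cdot 4 = 16$, where the last $4$ already absorbs two triangle-inequality steps), and applying the bound for every $i \in [n]$ simultaneously — which is legitimate because in Steps 1 and 2 the same translation bound holds for all $i$, and in Step 3 Lemma~\ref{lem:axb}(i) is a simultaneous statement — yields $\dist(Rp_i - t, \ell_i) \le 16 \cdot \dist(R^*p_i - t^*, \ell_i)$ for all $i$. For the complexity claim: the triple loop contributes $O(n^3)$ iterations; each iteration calls \secondalgname{} ($O(1)$ time, Algorithm~\ref{Alg:calcz} is a constant number of $2\times 2$ operations), computes an intersection point and a distance in $O(1)$, and invokes the $O(1)$-sized minimization from Lemma~\ref{lem:axb} restricted to the single index $l$ (again $O(1)$). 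Hence $|C| \in O(n^3)$ and $C$ is computed in $O(n^3)$ time.

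\textbf{Main obstacle.} The delicate point is Step 3: making rigorous that the rotation-to-local-minimum argument (a) only loses a \emph{constant} factor and (b) loses it \emph{simultaneously} for all pairs, not just the pivot pair $l$. The natural metric here is arc length along the ellipse/circle of feasible third-vertex positions, on which the maps $x \mapsto \dist(p_i', \ell_i)$ are not convex but are piecewise log-Lipschitz; one must check that the parametrization by the unit vector $x$ is compatible with this metric so that Theorem~\ref{Lem:piecewiseApprox} / Lemma~\ref{lem:axb} applies cleanly, and that the constant coming out is indeed $4$ (two steps of $2$) rather than something larger. Handling the parallel-lines degeneracy and the edge cases where the ellipse degenerates (e.g. $\|p-q\|$ small, or collinear triangles) also needs a few lines of care, though none of it is deep.
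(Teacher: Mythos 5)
Your overall architecture coincides with the paper's proof: the same three-step reduction with factors $2\cdot 2\cdot 4=16$, the same use of Lemma~\ref{Lemma:PQZx} (via Corollary~\ref{Cor:RZx}) to parametrize all configurations with $p_j\in\ell_j$ and $p_k\in\ell_k$ by a unit vector $x$, and the same reduction of Step~3 to the simultaneous factor-$4$ guarantee of Lemma~\ref{lem:axb}, with the triple loop ensuring the relevant $(j,k,l)$ is examined. Steps~1 and~3, and the complexity accounting, are justified essentially as in the paper.

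The genuine problem is your justification of Step~2. You assert that ``the minimal such translation has length at most the current distance of $p_k$ to $\ell_k$'' and deduce the factor $2$ by a triangle inequality; both parts fail. The translation is constrained to the direction $v_j$ of $\ell_j$, so the magnitude needed to bring $p_k'$ onto $\ell_k$ is $\lambda_k=\dist(p_k',\ell_k)/\sin\alpha_k$, where $\alpha_k$ is the angle between $\ell_j$ and $\ell_k$; this is at least, not at most, $\dist(p_k',\ell_k)$, and is unbounded as the two lines approach parallelism. Moreover, even granting a bound on the translation magnitude by $\dist(p_k',\ell_k)$, the additive increase of $\dist(p_i',\ell_i)$ would be bounded by $\dist(p_k',\ell_k)$, which need not be at most $\dist(p_i',\ell_i)$, since $k$ is chosen to minimize the translation magnitude rather than the distance. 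The paper's argument (Claim~\ref{2DRotation}) is angle-aware: let $\lambda_i$ be the magnitude of the translation along $v_j$ bringing $p_i'$ closest to $\ell_i$, pick $k\in\argmin_i|\lambda_i|$, and let $p$ be the intersection of $\ell_i$ with the line $\br{p_i'-\lambda v_j\mid\lambda\in\REAL}$. Then $\dist(q,\ell_i)=\sin(\alpha_i)\,\norm{q-p}$ for every $q$ on that line and $|\lambda_i|=\norm{p_i'-p}$, so
\[
\dist(p_i'-\lambda_k v_j,\ell_i)\le \sin(\alpha_i)\bigl(\norm{p_i'-p}+|\lambda_k|\bigr)\le 2\sin(\alpha_i)\norm{p_i'-p}=2\,\dist(p_i',\ell_i).
\]
The $\sin(\alpha_i)$ factor, present on both sides, is exactly what your version omits and what makes the bound hold for every $i$ simultaneously. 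Relatedly, the degenerate case is not ``$\ell_j\parallel\ell_k$'' (the choice of $k$ above automatically selects a non-parallel $\ell_k$ whenever one exists) but the case where \emph{all} input lines are mutually parallel; there Step~2 must be replaced by a rotation about $p_j'$, analyzed again through Lemma~\ref{lem:axb} for a factor of $4$ (total $8$), rather than by the ``place $p_k$ as close as possible'' shortcut you propose.
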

\begin{proof}
Let $(R^*,t^*)\in \alignments$. Without loss of generality, assume that the set is already aligned by $(R^*,t^*)$, i.e., $R^*=I$ and $t^*=\mathbf{0}$.

\textbf{Step 1. }We first prove that there is $j\in[n]$, such that translating $p_j$ until it intersects $\ell_j$ does not increase the distance $\dist(p_i,\ell_i)$ by more than a multiplicative factor of $2$, for every $i \in [n]$. See Fig.~\ref{fig:mainProofStep1}.

\begin{figure}
\centering
\includegraphics[width=0.4\textwidth]{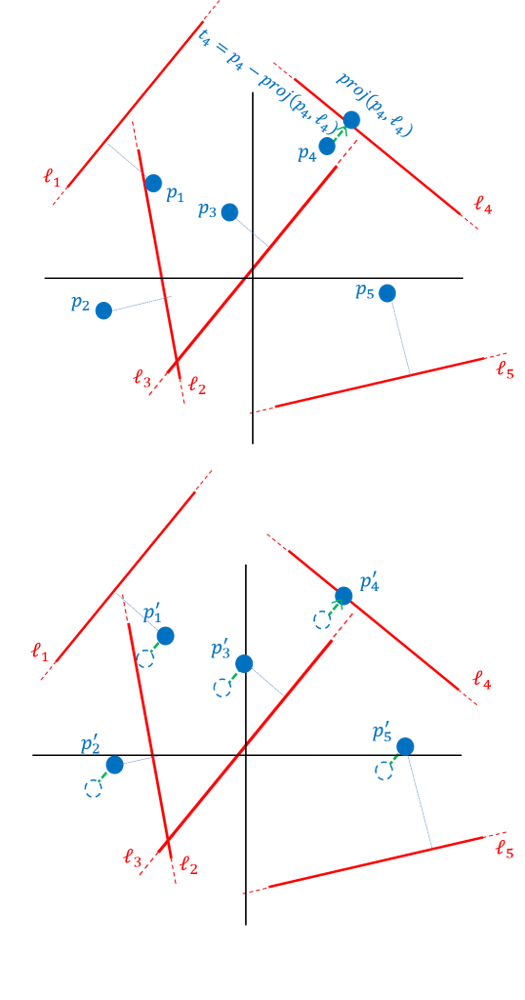}
\caption{\textbf{Illustration of Step 1 in the proof of Lemma~\ref{theorem:2DPointsToLines_proof}.} \textbf{(Top:)} The pair $(p_4,\ell_4)$ is the closest pair among the $5$ pairs $(p_1,\ell_1),\cdots,(p_5,\ell_5)$.
\textbf{(Bottom:)} Every point $p_i$ is then translated to  $p'_i=p_i-t_4$ where $t_4=p_4-\proj(p_4,\ell_4)$.
The new distance is larger by a factor of at most $2$, i.e., $\dist(p_i',\ell_i)\leq 2\dist(p_i,\ell_i)$ for every $i\in\br{1,\cdots,5}$.}
\label{fig:mainProofStep1}
\end{figure}

Put $i\in[n]$. Let $t_i=p_i-\proj(p_i,\ell_i)$ and let $j\in\argmin_{j'\in[n]} \norm{t_{j'}}$.
Hence,
\begin{align}
\dist(p_i-t_j,\ell_i) & \leq \norm{t_j} + \dist(p_i,\ell_i) \label{eq:hatt_tria}\\
& \leq \norm{t_i} + \dist(p_i,\ell_i) \label{eq:j_def}\\
& = 2\cdot\dist(p_i,\ell_i) \label{eq:normTjIsDist},
\end{align}
where \eqref{eq:hatt_tria} holds due to the triangle inequality, \eqref{eq:j_def} holds due to the definition of $j$, and \eqref{eq:normTjIsDist} holds since $\norm{t_i} = \dist(p_i,\ell_i)$.
For every $s\in[n]$, let $p_s'=p_s-t_j$ and without loss of generality assume that $p_j'$ is the origin, otherwise translate the coordinate system.

Let $v_i\in\REAL^{2}$ be a unit vector and $b_i\geq 0$ such that
\[
\ell_i = \{q\in\REAL^2 \mid v_i^Tq = b_i\}.
\]

In what follows we assume that $\ell_1,\cdots,\ell_n$ cannot all be mutually parallel, i.e., there exists at least two lines that intersect. We address the case where all the lines are parallel in Claim~\ref{claim:allLinesParallel}.

\textbf{Step 2. }We prove in Claim~\ref{2DRotation} that there is $k\in[n]$ such that translating $p_k'$ in the direction of $\ell_j$, until it gets as close as possible to $\ell_k$ will not increase the distance $\dist(p_i',\ell_i)$ by more than a multiplicative factor of $2$. See Fig.~\ref{fig:mainProofStep2}.

\begin{figure}
\centering
\includegraphics[width=0.4\textwidth]{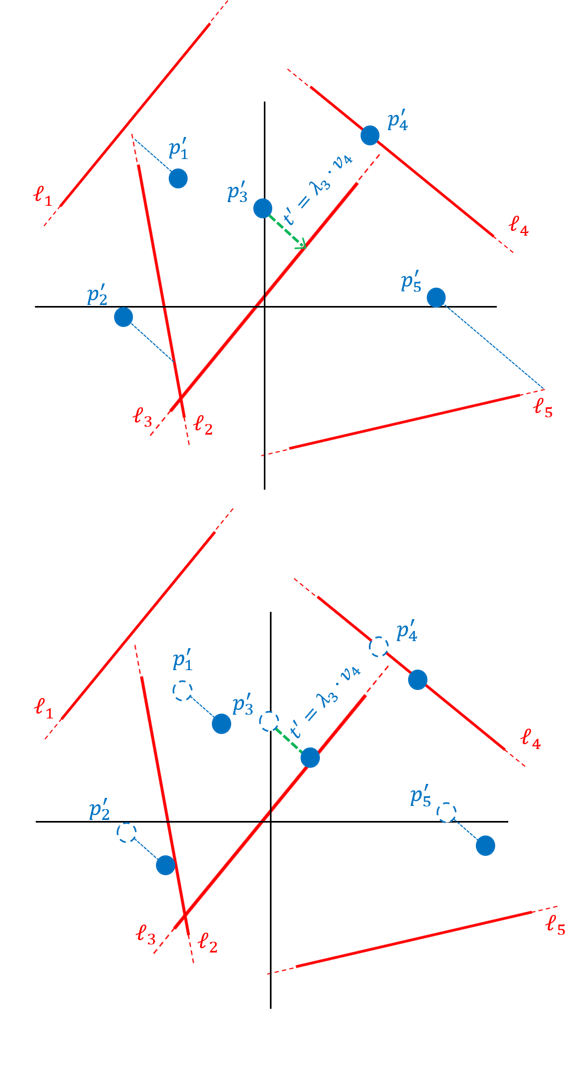}
\caption{\textbf{Illustration of Step 2 in the proof of Lemma~\ref{theorem:2DPointsToLines_proof}.} \textbf{(Top:)} The pair $(p_3',\ell_3)$ is the pair of minimal distance in the direction of $\ell_4$ among the 4 pairs $(p_1',\ell_1),(p_2',\ell_2),(p_3',\ell_3),(p_5',\ell_5)$. \textbf{(Bottom:)} The set of points is translated by the vector $t' = \lambda_3\cdot v_4$, where $v_4$ is the unit vector that spans $\ell_4$ and $\lambda_3$ is the minimal translation magnitude required in order for point $p_3'$ to get as close as possible to $\ell_3$.}
\label{fig:mainProofStep2}
\end{figure}

Let $\Lambda_i = \argmin_{\lambda \in \REAL} \dist(p_i'-\lambda\cdot v_j,\ell_i)$ and set
\[
\lambda_i = \begin{cases}
              \argmin_{\lambda \in \Lambda_i} |\lambda|, & \mbox{if } |\Lambda_i| < \infty \\
              \infty, & \mbox{otherwise}.
            \end{cases}
\]
Informally, $\lambda_i = \infty$ if and only if $\ell_j$ and $\ell_i$ are parallel lines, and it represents the distance that $p_i'$ needs to travel in the direction parallel to $\ell_i$ until it intersects $\ell_i$. Observe that there is at least one index $i \in [n]$ such that $\lambda_i \neq \infty$ since we assumed the lines are not all mutually parallel.
\begin{claim} \label{2DRotation}
There exists $k \in [n]\setminus\br{j}$ and a corresponding translation vector $t' = \lambda_k \cdot v_j$ such that for every $i \in [n]$,
\[
\dist(p_i'-t',\ell_i) \leq 2 \cdot \dist(p_i',\ell_i).
\]
\end{claim}
\begin{proof}
Let $k \in \argmin_{i \in [n]} |\lambda_i|$. Put $i\in [n]$.
Let $\ell_i' = \br{p_i'-\lambda v_j \mid \lambda \in \REAL}$. Observe that $\ell_i'$ is a line in $\REAL^2$. If $\ell_i'$ is parallel to $\ell_i$, then since both $p_i'-\lambda_k v_j$ and $p_i'$ are on $\ell_i'$, the claim trivially holds as
\[
\begin{split}
& \dist(p_i'-t',\ell_i) = \dist(p_i'-\lambda_k v_j,\ell_i)\\
& = \dist(p_i',\ell_i) \leq 2\cdot \dist(p_i',\ell_i).
\end{split}
\]
We now assume that $\ell_i'$ and $\ell_i$ are not parallel. Let $\alpha$ be the angle between $\ell_i$ and $\ell_i'$, and let $p = \ell_i'\cap \ell_i$. Observe that
\begin{equation} \label{eq:lambdai}
\lambda_i = \norm{p_i' - p},
\end{equation}
and that for every $q\in \ell_i'$,
\begin{equation} \label{eq:distBetweenLines}
\dist(q,\ell_i) = \sin(\alpha)\cdot \norm{q - p}.
\end{equation}
Then the claim holds as
\begin{align}
\dist(p_i'-t',\ell_i) & = \dist(p_i'-\lambda_k\cdot v_j,\ell_i) \nonumber\\
& = \sin(\alpha) \cdot \norm{p_i'-\lambda_k\cdot v_j - p} \label{eq1:norm}\\
& \leq \sin(\alpha) \cdot \left( \norm{p_i'- p}+\norm{\lambda_k \cdot v_j} \right) \label{eq2:triaIn}\\
& = \sin(\alpha) \cdot \left( \norm{p_i'- p}+|\lambda_k| \right) \label{eq3:vjUnit}\\
& \leq \sin(\alpha) \cdot 2\norm{p_i'- p} \label{eq4:main}\\
& = 2 \sin(\alpha) \cdot \norm{p_i'- p} \nonumber\\
& = 2\cdot \dist(p_i',\ell_i), \label{eq5:norms}
\end{align}
where~\eqref{eq1:norm} holds by substituting $q=p_i'-\lambda_k\cdot v_j$ in~\eqref{eq:distBetweenLines},~\eqref{eq2:triaIn} holds by the triangle inequality,~\eqref{eq3:vjUnit} since $\norm{v_j}=1$,~\eqref{eq4:main} holds by combining the definition of $k$ with~\eqref{eq:lambdai}, and~\eqref{eq5:norms} holds by substituting $q=p_i'$ in~\eqref{eq:distBetweenLines}.
\end{proof}

Let $k \in \argmin_{i \in [n]} |\lambda_i|$ and $t' = \lambda_k \cdot v_j$ be respectively the index and the translation vector that are computed in Claim~\ref{2DRotation}.
For every $i\in[n]$, let $p_i''$ denote the point $p_i$ after translation by $t_j+t'$, i.e., $p_i''=p_i' - t' = p_i - t_j - t'$.

Since $k$ is the index corresponding to the minimal $|\lambda_i|$, we have that $|\lambda_k| < \infty$ (since there is at least one such $|\lambda_i| \neq \infty$). By the definitions of $\lambda_i$ for every $i\in [n]$, we have that $|\lambda_i| \neq \infty$ only if $\ell_i$ and $\ell_j$ are not parallel. Combining the two previous facts yields that $\ell_j$ and $\ell_k$ are not parallel. Furthermore, since we defined $t' = \lambda_k \cdot v_j$ to be exactly the translation distance in the direction of $v_j$ needed for $p_k'$ to intersect $\ell_k$, we have that $p_k'' \in \ell_k$.

\textbf{Step 3. }In the following claim we prove that there is an alignment $(R'',t'')\in \alignments$ that satisfies the following: (i) $R''p_j''-t''\in\ell_j$ and $R''p_k''-t''\in\ell_k$, (ii) $(R'',t'')$ minimizes the distance $\dist(Rp_l''-t,\ell_l)$ for some index $l\in[n]$ over every $(R,t)$ that satisfy $Rp_j''-t\in\ell_j$ and $Rp_k''-t\in\ell_k$, and (iii) $(R'',t'')$ satisfies $\dist(R''p_i''-t'',\ell_i) \leq 4 \cdot \dist(p_i'',\ell_i)$ for every $i\in[n]$.

\begin{claim} \label{claim:2DRTapprox}
Let $\alignments_1 = \{(R,t)\in \alignments \mid Rp_j''-t \in \ell_j \text{ and } Rp_k''-t \in \ell_k\}$. Then there exists $l \in [n]\setminus \br{j,k}$ and an alignment $(R_l'',t_l'') \in \displaystyle \argmin_{(R,t)\in \alignments_1}\dist(Rp_l-t,\ell_l)$ such that for every $i\in[n]$,
\[
\dist(R_l''p_i''-t_l'',\ell_i) \leq 4\cdot \dist(p_i'',\ell_i).
\]
\end{claim}
\begin{proof}
Put $i\in[n]$. We have that $p_j''\in\ell_j$ since $p'_j$ is assumed to be the origin in Step 1, and $p_k''\in\ell_k$ by the definition $t'$. Let $R_{v_j}\in \REAL^{2\times 2}$ be a rotation matrix that aligns $v_j$ with the $x$-axis, i.e., $R_{v_j}v_j = (1,0)^T$. By substituting $p=p_j'',q=p_k'',z=p_i''$ and $v=R_{v_j}v_k$ in Lemma~\ref{Lemma:PQZx} (i) and (ii) (where $v_j$ and $v_k$ are rotated together such that $v_j$ is the origin), we have that there is a unit vector $x\in\REAL^2$ and matrices $Z_j=R_{v_j}^TP,Z_k=R_{v_j}^TQ$ and $Z_i=R_{v_j}^TZ$ such that $p_j''=Z_jx,p_k''=Z_kx$ and $p_i''=Z_ix$. The matrices $P,Q$ and $Z$ are multiplied by the inverse rotation matrix $R_{v_j}^T$ to simulate the inverse rotation which was applied on $v_j$ and $v_k$ before applying Lemma~\ref{Lemma:PQZx}. Observe that
\begin{equation} \label{eq:distIs_axb2}
\dist(p_i'',\ell_i) = |v_i^T p_i''-b_i| = |v_i^T Z_i x -b_i|,
\end{equation}
where the second equality holds since $p_i'' = Z_i x$.
By substituting $a_i = v_i^TZ_i$ in Lemma~\ref{lem:axb_proof} (i), there is a set $D$ of $O(n)$ unit vectors, $y\in D$ and an index $l\in [n]$, such that for every unit vector $x\in\REAL^2$
\begin{equation} \label{eq:axbApprox}
|v_i^TZ_i y -b_i| \leq 4 \cdot |v_i^TZ_i x -b_i|.
\end{equation}
Substituting $p=p_j''$ and $q=p_k''$ in Corollary~\ref{Cor:RZx} (ii) implies that there is an alignment $(R'',t'')\in \alignments_1$ that aligns the set $\{Z_ix \mid i\in[n]\}$ with the set $\{Z_iy \mid i\in [n]\}$, i.e. $R''p_i''-t = R''Z_ix-t'' = Z_iy$.
Hence, the following holds
\begin{align}
\dist(R''p_i''-t'',\ell_i) & = |v_i^T (R''p_i''-t'') -b_i| \nonumber \\
& = |v_i^T (R''Z_ix-t'') -b_i| \label{eq:zixDef} \\
& = |v_i^T Z_iy -b_i| \label{eq:zix'Def} \\
& \leq 4 \cdot |v_i^TZ_i x-b_i| \label{eq:xx'Approx} \\
& = 4 \cdot \dist(p_i'',\ell_i) \label{eqaxbToDist},
\end{align}
where~\eqref{eq:zixDef} holds since $p_i''=Z_ix$, \eqref{eq:zix'Def} holds since $R''Z_ix-t'' = Z_iy$, \eqref{eq:xx'Approx} holds by~\eqref{eq:axbApprox} and~\eqref{eqaxbToDist} holds by~\eqref{eq:distIs_axb2}.

It follows from the definition of $y$ and $l$ that the vector $y$ minimizes $|v_i^TZ_iy'-b_l|$ over every unit vector $y' \in \REAL^2$. Substituting $i=l$ in~\eqref{eq:zix'Def}, we have that $\dist(R''p_l''-t'',\ell_l) = |v_i^TZ_iy-b_l| = \min_{\norm{y'}=1} |v_i^TZ_iy'-b_l|$. Hence,
\[
(R'',t'') \in \displaystyle \argmin_{(R,t)\in \alignments_1}\dist(Rp_l-t,\ell_l).
\]
Hence, $l$ and $(R'',t'')$ satisfy the requirements of Claim~\ref{claim:2DRTapprox}.
\end{proof}

Hence, by Step 1, there exists an alignment that aligns a point $p_j$ with $\ell_j$ for some $j\in[n]$, and does not increase the distances of the other pairs by more than a multiplicative factor of $2$. This is the reason that all the output alignments of Algorithm~\ref{Alg:slowApprox} satisfy that one of the input points intersects its corresponding line.

By the proof in Step 2, there exists an alignment that aligns a point $p_k$ with $\ell_k$ for some $k\in[n]$, and does not increase the distances of the other pairs by more than a multiplicative factor of $2$. This is the reason that all the output alignments of Algorithm~\ref{Alg:slowApprox} satisfy that two of the input points intersect their corresponding lines, if the input lines are not all mutually parallel.

By the proof in Step 3, there exists an alignment that minimizes the distance between $p_l$ and $\ell_l$ for some $l \in [n]$, and maintains that $p_j\in\ell_j$ and $p_k\in\ell_k$. This alignment provably guarantees that the distance of the other pairs does not increase by more than a multiplicative factor of $4$.

Hence, if the input lines are not all mutually parallel, we proved there are $j,k,l\in[n]$ where $j\neq k$, and $(R,t)\in \alignments$, such that the following is satisfied
\begin{equation} \label{MainConds}
\begin{split}
&Rp_j-t\in\ell_j, Rp_k-t\in\ell_k, \dist(Rp_l-t,\ell_l) \text{ is minimized over}\\
&\text{every alignment that satisfies } Rp_j-t\in\ell_j \text{ and } Rp_k-t\in\ell_k,\\
& \text{ and } \dist(Rp_i-t,\ell_i) \leq 16 \cdot \dist(p_i,\ell_i) \text{ for every } i\in[n].
\end{split}
\end{equation}

So far we have proven Step 1 regardless of weather the lines are mutually parallel or not. However, we proved Step 2 and step 3 (Claims~\ref{2DRotation} and~\ref{claim:2DRTapprox} respectively) of Lemma~\ref{theorem:2DPointsToLines_proof} only for the case that the lines are not all mutually parallel. We now prove an alternative claim (Claim~\ref{claim:allLinesParallel}) instead of Claims~\ref{2DRotation} and~\ref{claim:2DRTapprox} for the case that \textbf{$\ell_1,\cdots,\ell_n$ are mutually parallel.}

Recall that $p_i' = p_i - \left(p_j-\proj(p_j,\ell_j)\right)$. By Step 1, we have that $\dist(p_i',\ell_i) \leq 2\cdot \dist(p_i,\ell_i)$. To handle the case where all the lines $\ell_1,\cdots,\ell_n$ are parallel, we now prove that rotating the points around $p_j'$ until the distance $\dist(p_h,\ell_h)$ for some $h\in [n]$ is minimized would increase the distance $\dist(p_i',\ell_i)$ for every $i\in [n]$ by a factor of at most $4$.
\begin{claim} \label{claim:allLinesParallel}
If $\ell_1,\cdots,\ell_n$ are mutually parallel, then there exists $h \in [n]\setminus\br{j}$ and a rotation matrix
\[
R' \in \argmin_{R} \dist(Rp_h',\ell_h),
\]
where the minimum is over every rotation matrix $R \in \REAL^{2\times 2}$, such that for every $i \in [n]$,
\[
\dist(R'p_i',\ell_i) \leq 4 \cdot \dist(p_i',\ell_i).
\]
\end{claim}
\begin{proof}
Let $x\in\REAL^2$ be an arbitrary (known) unit vector. Let $R_i \in \REAL^{2\times 2}$ be a rotation matrix that rotates $x$ to the direction of $p_i'$, i.e., $p_i'=\norm{p_i'}\cdot R_i x$, and define $a_i=\norm{p_i'}\cdot R_i^Tv_i$. Therefore
\begin{equation} \label{eq:distIs_axb}
\begin{split}
\dist(p_i',\ell_i)& = |v_i^Tp_i'-b_i|= \left|\norm{p_i'}\cdot v_i^TR_ix-b_i\right|\\ &= |a_i^Tx-b_i|.
\end{split}
\end{equation}
By Lemma~\ref{lem:axb_proof}(i), there is a set $C$ of $O(n)$ unit vectors and $x'\in C$, such that
\begin{align}
|a_i^T x' -b_i| \leq 4 \cdot |a_i^T x -b_i|. \label{2dRotxapprox}
\end{align}

Let $h\in[n]$ be the index from Lemma~\ref{lem:axb_proof}(ii).
Define $R' \in \REAL^{2 \times 2}$ to be the rotation matrix that satisfies $R'x = x'$. Hence,
\begin{align}
\dist(R'p_i',\ell_i) & = |v_i^T R'p_i' - b_i| \nonumber\\
& = \left| \norm{p_i'} \cdot v_i^T R' R_i x -b_i \right| \label{2DrotpiDef}\\
& = \left| \norm{p_i'} \cdot v_i^T R_i R' x -b_i \right| \label{2DrotChangeRot}\\
& = |a_i^T R' x - b_i| \label{2DrotaiDef}\\
& = |a_i^T x' - b_i| \label{2DrotRDef}\\
& \leq 4 \cdot |a_i^T x - b_i| \label{2Drotxapprox2}\\
& = 4 \cdot \dist(p_i',\ell_i) \label{2DrotDistIs_axb},
\end{align}
where~\eqref{2DrotpiDef} holds by the definition of $R_i$,~\eqref{2DrotChangeRot} holds since the $2D$ rotation matrix group is commutative,~\eqref{2DrotaiDef} holds by the definition of $a_i$,~\eqref{2DrotRDef} holds by the definition of $R'$,~\eqref{2Drotxapprox2} holds by~\eqref{2dRotxapprox} and~\eqref{2DrotDistIs_axb} holds by~\eqref{eq:distIs_axb}.

By the definition of $h$ and $x'$, it follows that $x'$ minimizes $|a_h^Tx'-b_h|$. Similarly to~\eqref{2DrotRDef}, it holds that $\dist(R'p_h',\ell_h) = |a_h^Tx'-b_h|$, so $R' \in \argmin_R \dist(Rp_h',\ell_h)$. Hence, $h$ and $R'$ satisfy the requirements of this Claim.
\end{proof}

\begin{figure}
\centering
\includegraphics[width=0.4\textwidth]{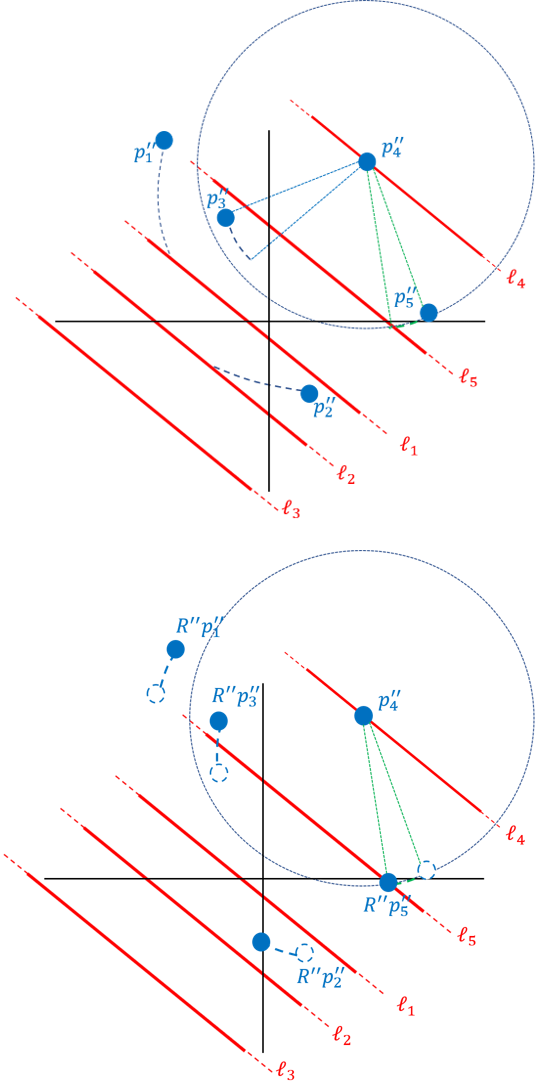}
\caption{\textbf{Illustration of Step 3 in the proof of Lemma~\ref{theorem:2DPointsToLines_proof}.} \textbf{(Top:)} The pair $(p_5'',\ell_5)$ is the pair that requires the minimal rotation angle in order to minimize the distance between $p_5''$ after the rotation and $\ell_5$, among the 4 pairs $(p_1'',\ell_1),(p_2'',\ell_2),(p_3'',\ell_3),(p_5'',\ell_5)$. \textbf{(Bottom:)} The set of points is rotated around $p_4''$ by a rotation matrix $R''$ such that $R''\in\argmin_{R}\dist(Rp_5'',\ell_5)$, where the minimum is over every rotation matrix $R\in\REAL^{2\times 2}$. The new distance is larger by a factor of 4 at most, i.e., $\dist(R''p_i'',\ell_i) \leq 4\cdot \dist(p_i'',\ell_i)$ for every $i\in\br{1,\cdots,5}$.}
\label{fig:mainProofParallelLines}
\end{figure}

Hence, by Step 1 and Claim~\ref{claim:allLinesParallel}, if all the lines are parallel, there are $j,h \in [n]$, where $j \neq h$, and $(R,t) \in \alignments$, such that the following holds
\begin{equation} \label{SecondaryConds}
\begin{split}
&Rp_j-t\in\ell_j,\\
&\dist(Rp_h-t,\ell_h) \text{ is minimized over every} (R,t) \text{ s.t. } Rp_j-t\in\ell_j,\\
& \text{ and } \dist(Rp_i-t,\ell_i) \leq 8 \cdot \dist(p_i,\ell_i) \text{ for every } i\in[n].
\end{split}
\end{equation}

Algorithm~\ref{Alg:slowApprox} iterates over every triplet $(j,k,l)\in[n]$. There are $O(n^3)$ such triplets. The alignments that satisfy~\eqref{MainConds} are computed in Lines~\ref{Alg2line:vjOrtho}~\ref{Alg2line:compC1} and stored in $C_1$. The alignments that satisfy~\ref{SecondaryConds} are computed in Line~\ref{Alg2line:compC2} and stored in $C_2$. The sets $C_1$ and $C_2$ computed at every iteration are of constant size, i.e., $|C_1|,|C_2|\in O(1)$.
We output $C$, the union of all those alignments. Hence, $C$ will contain an alignment $(R,t)$ that satisfies~\eqref{eq:MainTheoremEq}, and $|C| \in O(n^3)$. Furthermore, the running time of Algorithm~\ref{Alg:slowApprox} is $O(n^3)$ since Lines~\ref{Alg2line:setC2Empty}-~\ref{Alg2line:endES} take $O(1)$ time to compute, and are executed $O(n^3)$ times.
\end{proof}

\begin{theorem} [Theorem~\ref{theorem:costPointsToLines}] \label{theorem:costPointsToLines_proof}
Let $A=\br{(p_1,\ell_1),\cdots,(p_n,\ell_n)}$ be set of $n \geq 3$ pairs, where for every $i \in [n]$, we have that $p_i$ is a point and $\ell_i$ is a line, both on the plane. Let $z \geq 1$ and let $D_z:A\times \alignments\to [0,\infty)$ such that $D_z\left((p,\ell),(R,t)\right) = \min_{q\in \ell} \norm{Rp-t-q}_z$ is the $\ell_z$ norm between $Rp-t$ and $\ell$. Let $\cost, s, r$ be as defined in Definition~\ref{def:cost} for $D=D_z$. Let $w = 1$ if $z=2$ and $w=\sqrt{2}$ otherwise. Let $C$ be the output of a call to \algname$(A)$; see Algorithm~\ref{Alg:slowApprox}. Then there exists $(R',t')\in C$ such that
\[
\cost(A,(R',t')) \leq (w\cdot 16)^{rs} \cdot \min_{(R,t)\in\alignments}\cost(A,(R,t)).
\]
Furthermore, $C$ and $(R',t')$ can be computed in $n^{O(1)}$ time.
\end{theorem}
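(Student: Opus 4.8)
The plan is to transfer the Euclidean guarantee of Lemma~\ref{theorem:2DPointsToLines} to the $\ell_z$ cost via Observation~\ref{obs:distToCost}; the only new ingredient is a comparison between the $\ell_z$ point-to-line distance $D_z$ and the Euclidean one $\dist$ on the plane. First I would establish, for every point-line pair $(p,\ell)$ and alignment $(R,t)$, the two-sided estimate
\[
D_z\big((p,\ell),(R,t)\big)\le w\cdot\dist(Rp-t,\ell)\quad\text{and}\quad \dist(Rp-t,\ell)\le w\cdot D_z\big((p,\ell),(R,t)\big),
\]
which follows from the equivalence of $\ell_z$ and $\ell_2$ norms on $\REAL^2$: for every $v\in\REAL^2$ one has $\norm{v}_z\le\norm{v}_2\le\sqrt{2}\,\norm{v}_z$ if $z\ge2$, and $\norm{v}_2\le\norm{v}_z\le\sqrt{2}\,\norm{v}_2$ if $z\le2$ (with equality throughout when $z=2$, which is why $w=1$ there). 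Comparing the $\ell_z$-minimizing point of $\ell$ with the Euclidean projection in both norms shows, moreover, that when $z\ge2$ the first estimate already holds with constant $1$, i.e.\ $D_z\le\dist$, while when $z\le2$ the second holds with constant $1$, i.e.\ $\dist\le D_z$. This asymmetry is the one point that needs care: it is exactly what keeps the final factor at $16w$ rather than $16w^2$.

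Next I would fix $(R^*,t^*)\in\argmin_{(R,t)\in\alignments}\cost(A,(R,t))$ and apply Lemma~\ref{theorem:2DPointsToLines}, which guarantees an alignment $(R',t')$ in the set $C$ returned by \algname$(A)$ with $\dist(R'p_i-t',\ell_i)\le16\cdot\dist(R^*p_i-t^*,\ell_i)$ for every $i\in[n]$. Chaining this with the distance comparison above --- using $D_z\le\dist$ for $(R',t')$ and $\dist\le w\cdot D_z$ for $(R^*,t^*)$ in the case $z\ge2$, and the dual choice ($D_z\le w\cdot\dist$ for $(R',t')$ and $\dist\le D_z$ for $(R^*,t^*)$) in the case $z\le2$ --- gives
\[
D_z\big((p_i,\ell_i),(R',t')\big)\le (16w)\cdot D_z\big((p_i,\ell_i),(R^*,t^*)\big)\qquad\text{for every }i\in[n].
\]
Observation~\ref{obs:distToCost}, applied with $c=16w\ge1$, $q'=(R',t')$ and $q^*=(R^*,t^*)$, then yields $\cost(A,(R',t'))\le(16w)^{rs}\cost(A,(R^*,t^*))=(w\cdot16)^{rs}\min_{(R,t)\in\alignments}\cost(A,(R,t))$, which is the claimed inequality.

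For the computability claim, Lemma~\ref{theorem:2DPointsToLines} already computes $C$ with $|C|\in O(n^3)$ in $O(n^3)$ time. Although the argument above refers to the unknown optimum $(R^*,t^*)$, an admissible $(R',t')$ is produced effectively by exhaustively searching $C$ for a minimizer of $\cost(A,\cdot)$: each evaluation of $\cost(A,(R,t))$ takes polynomial time (each $D_z$ is a one-dimensional convex minimization over the line --- with a closed form for the cost functions of interest --- and $\lip$ and $f$ are then applied coordinatewise), so the search costs $n^{O(1)}$ overall, and any minimizer of $\cost(A,\cdot)$ over $C$ has cost no larger than that of the guaranteed $(R',t')$ and hence satisfies the same bound. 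The total running time is therefore $n^{O(1)}$. I do not anticipate a genuine obstacle beyond the bookkeeping flagged in the first paragraph.
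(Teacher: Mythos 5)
Your proposal is correct and follows essentially the same route as the paper: invoke Lemma~\ref{theorem:2DPointsToLines} to get a simultaneous $16$-approximation of the Euclidean distances, convert to $\ell_z$ distances at a cost of $w$, and feed the resulting factor $c=16w$ into Observation~\ref{obs:distToCost}. Your observation that the norm comparison is one-sided with constant $1$ in each direction (which is what yields $16w$ rather than $16w^2$) is a detail the paper's proof passes over in a single sentence, so your write-up is, if anything, slightly more careful on that point.
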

\begin{proof}
Put $i\in [n]$ and let
\[
(R^*,t^*) \displaystyle\in \argmin_{(R,t)\in\alignments}\cost(A,(R,t)).
\]
By Lemma~\ref{theorem:2DPointsToLines_proof}, $C$ is of size $|C| \in O(n^3)$, it can be computed in $O(n^3)$ time, and there is $(R',t')\in C$ that satisfies
\begin{equation} \label{eq:distLeq16Dist}
\begin{split}
D_2\left((p_i,\ell_i),(R',t')\right)& = \dist(R'p_i-t',\ell_i)\\
& \leq 16\cdot \dist(R^*p_i-t^*,\ell_i)\\
&= 16\cdot D_2\left((p_i,\ell_i),(R^*,t^*)\right).
\end{split}
\end{equation}

By~\eqref{eq:distLeq16Dist} and since the $\ell_2$-norm of every vector in $\REAL^2$ is approximated up to a multiplicative factor of $w$ (as defined in the theorem) by its $\ell_z$-norm, we obtain that
\[
D_z\left((p_i,\ell_i),(R',t')\right) \leq 16w D_z\left((p_i,\ell_i),(R^*,t^*)\right).
\]

By substituting $q' = (R',t'), q^* = (R^*,t^*)$ and $D = D_z$ in Observation~\ref{obs:distToCost}, Theorem~\ref{theorem:costPointsToLines_proof} holds as
\[
\begin{split}
\cost\left(A,(R',t')\right)& \leq (w \cdot 16)^{rs} \cdot \cost\left(A,(R^*,t^*)\right)\\
& = (w \cdot 16)^{rs} \cdot \min_{(R,t)\in\alignments}\cost(A,(R,t)),
\end{split}
\]
where the last derivation holds by the definition of $(R^*,t^*)$.

Furthermore, it takes $O(n\cdot |C|) = n^{O(1)}$ time to compute $(R',t')$.
\end{proof}

\begin{theorem} [Theorem~\ref{theorem:pointsToLinesNoMatching}] \label{theorem:pointsToLinesNoMatching_proof}
Let $A = \br{(p_1,\ell_1),\cdots,(p_n,\ell_n)}$ be a set of $n\geq 3$ pairs, where for every $i\in [n]$ we have that $p_i$ is a point and $\ell_i$ is a line, both on the plane. Let $z \geq 1$ and $D_z\left((p,\ell),(R,t)\right) = \displaystyle \min_{q\in \ell} \norm{Rp-t-q}_z$ for every point $p$ and line $\ell$ on the plane and alignment $(R,t)$. Consider $\cost, r$ to be a function as defined in Definition~\ref{def:cost} for $D = D_z$ and $f(v) = \norm{v}_1$. Let $w = 1$ if $z=2$. Otherwise, $w=\sqrt{2}$.
Let $(\tilde{R},\tilde{t},\tilde{\pi})$ be the output alignment $(\tilde{R},\tilde{t})$ and permutation $\tilde{\pi}$ of a call to \matchingalgname$(A,\cost)$; see Algorithm~\ref{Alg:slowApproxNoMatching}. Then
\begin{equation} \label{eq:matchingGoal}
\cost\left(A_{\tilde{\pi}},(\tilde{R},\tilde{t})\right) \leq (w\cdot 16)^r \cdot \min_{(R,t,\pi)}\cost\left(A_{\pi},(R,t)\right),
\end{equation}
where the minimum is over every alignment $(R,t)$ and permutation $\pi$.\\
Moreover, $(\tilde{R},\tilde{t},\tilde{\pi})$ can be computed in $n^{O(1)}$ time.
\end{theorem}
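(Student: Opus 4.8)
The plan is to reduce the unknown-matching problem to the known-matching analysis of Lemma~\ref{theorem:2DPointsToLines_proof} by exploiting the \emph{locality} of the candidate alignment produced there. First I would fix an optimal triple $(R^*,t^*,\pi^*)\in\argmin_{(R,t),\pi}\cost(A_{\pi},(R,t))$ and consider the matched set $A_{\pi^*}=\br{(p_1,\ell_{\pi^*(1)}),\cdots,(p_n,\ell_{\pi^*(n)})}$ together with the alignment $(R^*,t^*)$. Inspecting the proof of Lemma~\ref{theorem:2DPointsToLines_proof}, the $16$-approximating alignment it produces is not an arbitrary element of the full $O(n^3)$-size candidate set: its three steps (translate one point onto its line; translate along that line to bring a second point onto its line; rotate to minimize the distance of a third point under those constraints) single out three \emph{distinct} indices $j,k,l\in[n]$, and the guaranteed alignment $(R',t')$ is the one described by~\eqref{MainConds} --- it places $p_j\in\ell_{\pi^*(j)}$ and $p_k\in\ell_{\pi^*(k)}$ and minimizes $\dist(p_l,\ell_{\pi^*(l)})$ subject to those two constraints, so it depends \emph{only} on the three pairs $(p_j,\ell_{\pi^*(j)}),(p_k,\ell_{\pi^*(k)}),(p_l,\ell_{\pi^*(l)})$. (If $\ell_{\pi^*(1)},\cdots,\ell_{\pi^*(n)}$ are all mutually parallel, Claim~\ref{claim:allLinesParallel} instead yields two indices $j,h$ and the weaker factor $8\le16$, and a valid three-pair input is obtained by padding with any third index.) Since the alignments that \algname{} assembles for a fixed triple of input pairs are computed from that triple alone, $(R',t')$ lies in the output of \algname{} applied to $\br{(p_j,\ell_{\pi^*(j)}),(p_k,\ell_{\pi^*(k)}),(p_l,\ell_{\pi^*(l)})}$ and, exactly as in Lemma~\ref{theorem:2DPointsToLines_proof}, satisfies $\dist(R'p_i-t',\ell_{\pi^*(i)})\le 16\cdot\dist(R^*p_i-t^*,\ell_{\pi^*(i)})$ for \emph{every} $i\in[n]$ --- not merely for $i\in\br{j,k,l}$.

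Next I would observe that \matchingalgname{} enumerates every $(i_1,i_2,i_3,j_1,j_2,j_3)\in[n]^6$ in Line~\ref{Alg4:ES}, so it in particular reaches $(j,k,l,\pi^*(j),\pi^*(k),\pi^*(l))$; here $j,k,l$ are distinct and, $\pi^*$ being a bijection, so are $\pi^*(j),\pi^*(k),\pi^*(l)$, hence \algname{} receives a legal three-pair input and $(R',t')\in X$ after Line~\ref{Alg4:addToX}. To convert the Euclidean guarantee into the $\ell_z$ quantity $D_z$, I would use the planar norm equivalence $\tfrac{1}{\sqrt2}\norm{v}_2\le\norm{v}_z\le\sqrt2\norm{v}_2$ for $v\in\REAL^2$, noting that for $z\le2$ only the right inequality can be tight and for $z\ge2$ only the left one; hence the loss factor is $w=1$ for $z=2$ and $w=\sqrt2$ otherwise, and it is incurred \emph{at most once} when relating $D_z(\cdot,(R',t'))$ to $D_z(\cdot,(R^*,t^*))$ through the Euclidean bound, giving $D_z\big((p_i,\ell_{\pi^*(i)}),(R',t')\big)\le 16w\cdot D_z\big((p_i,\ell_{\pi^*(i)}),(R^*,t^*)\big)$ for every $i\in[n]$.

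The remainder is bookkeeping. Applying Observation~\ref{obs:distToCost} to the input set $A_{\pi^*}$ with $q'=(R',t')$, $q^*=(R^*,t^*)$, $c=16w$, $D=D_z$ and $f=\norm{\cdot}_1$ (which is $1$-log-Lipschitz, so $s=1$) yields $\cost(A_{\pi^*},(R',t'))\le(16w)^{r}\cdot\cost(A_{\pi^*},(R^*,t^*))$. Because $\big(R',t',\hat\pi(A,(R',t'),\cost)\big)\in S$ by Line~\ref{Alg4:mappingFunc} and $\hat\pi(A,(R',t'),\cost)$ minimizes $\cost(A_{\pi},(R',t'))$ over all permutations $\pi$ by Definition~\ref{def:optimalMatching}, we get $\cost\big(A_{\hat\pi(A,(R',t'),\cost)},(R',t')\big)\le\cost(A_{\pi^*},(R',t'))$. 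Finally $(\tilde R,\tilde t,\tilde\pi)$ is chosen in Line~\ref{Alg4:minS} to minimize $\cost(A_{\pi'},(R',t'))$ over $S$, so chaining these three inequalities gives $\cost(A_{\tilde\pi},(\tilde R,\tilde t))\le(w\cdot16)^{r}\cdot\cost(A_{\pi^*},(R^*,t^*))=(w\cdot16)^{r}\cdot\min_{(R,t,\pi)}\cost(A_{\pi},(R,t))$, which is~\eqref{eq:matchingGoal}. For the running time, \matchingalgname{} runs $n^6$ iterations, each invoking \algname{} on three pairs in $O(1)$ time, so $|X|\in O(n^6)$; constructing $S$ costs one optimal bipartite matching per element of $X$ (e.g.\ via the Hungarian algorithm, $O(n^3)$ each) and Line~\ref{Alg4:minS} evaluates $\cost$ once per element of $S$ in $O(n)$ time, for a total of $n^{O(1)}$.

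The step I expect to be the crux is the first one: isolating from the proof of Lemma~\ref{theorem:2DPointsToLines_proof} that it does not merely assert the \emph{existence} of a $16$-approximating alignment inside the candidate set, but that this alignment is the output of \algname{} on a specific, explicitly nameable triple of input pairs --- with the attendant care needed to supply a valid three-pair input in the all-parallel sub-case. Once this ``locality'' is made precise, the matching-aware enumeration, the planar norm-equivalence constant, and the appeals to Observation~\ref{obs:distToCost} and Definition~\ref{def:optimalMatching} are all routine.
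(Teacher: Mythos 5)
Your proposal is correct and follows essentially the same route as the paper's proof: both arguments hinge on the observation that the $16$-approximating alignment guaranteed by Lemma~\ref{theorem:2DPointsToLines_proof} for the optimally matched set $A_{\pi^*}$ is determined by (and recoverable from) a single triple of matched pairs, which the enumeration over $(i_1,i_2,i_3,j_1,j_2,j_3)\in[n]^6$ in \matchingalgname{} is guaranteed to hit, after which the chain through Observation~\ref{obs:distToCost}, the optimality of $\hat{\pi}$, and the final $\argmin$ over $S$ is identical. Your treatment of the ``locality'' step and of the all-parallel sub-case is in fact slightly more explicit than the paper's, which asserts the correspondence between alignments in $C$ and triples of pairs more briefly.
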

\begin{proof}
Let $(R^*,t^*,\pi^*) \displaystyle \in \argmin_{(R,t,\pi)} \cost(A_{\pi},(R,t))$.
By Theorem~\ref{theorem:costPointsToLines_proof}, we can compute a set $C \subseteq \alignments$ such that there exists an alignment $(R,t) \in C$ that satisfies
\begin{equation} \label{eq:Rtnomatching}
\cost\left(A_{\pi^*},(R,t)\right) \leq (w\cdot 16)^{r} \cdot \cost\left(A_{\pi^*},(R^*,t^*)\right)
\end{equation}
where $C$ is computed by a call to \algname$(A_{\pi^*})$; see Algorithm~\ref{Alg:slowApprox}.

In Line~\ref{Alg2line:ES} of Algorithm~\ref{Alg:slowApprox} we iterate over every triplet of indices $j,k,l \in [n]$. Each such index corresponds to a point-line matched pair. Hence, $j,k$ and $l$ correspond to a triplet of matched point-line pairs. In Lines~\ref{Alg2line:endES}, we add $O(1)$ alignments to the set $C$ that correspond to that triplet of point-line pairs. Therefore, every alignment $(R',t') \in C$ corresponds to some 3 matched point-line pairs $(p_j,\ell_{\pi^*(j)}),(p_k,\ell_{\pi^*(k)}),(p_l,\ell_{\pi^*(l)}) \in A$.
Let $(p_{i_1},\ell_{\pi^*(i_1)}),(p_{i_2},\ell_{\pi^*(i_2)}),(p_{i_3},\ell_{\pi^*(i_3)})$ be the triplet of matched pairs that corresponds to the alignment $(R,t)$. Hence, it holds that
\[
(R,t) \in \algname\left(\br{(p_{i_1},\ell_{\pi^*(i_1)}),(p_{i_2},\ell_{\pi^*(i_2)}),(p_{i_3},\ell_{\pi^*(i_3)})}\right).
\]
By iterating over every $j_1,j_2$ and $j_3 \in [n]$, when $j_1 = \pi^*(i_1), j_2 = \pi^*(i_2)$ and $j_3 = \pi^*(i_3)$, we get that
\[
(R,t) \in \algname\left(\br{(p_{i_1},\ell_{j_1}),(p_{i_2},\ell_{j_2}),(p_{i_3},\ell_{j_3})}\right).
\]
In Line~\ref{Alg4:ES} of Algorithm~\ref{Alg:slowApproxNoMatching} we iterate over every tuple of 6 indices $i_1,i_2,i_3,j_1,j_2,j_3\in [n]$, and compute, using Algorithm~\ref{Alg:slowApprox}, the set of alignments $X'$ that corresponds to the triplet of pairs $(p_{i_1},\ell_{j_1}),(p_{i_2},\ell_{j_2}),(p_{i_3},\ell_{j_3})$. We then add those alignments to the set $X$. Hence, it is guaranteed that the alignment $(R,t) \in C$ that satisfies~\eqref{eq:Rtnomatching} is also in $X$.

The bijection (matching) function $\pi:[n]\to[n]$ that minimizes the sum of distances (or in some other cost function) between the point $p_i$ and the line $\ell_{\pi(i)}$ over $i\in[n]$ can be computed in $O(n^3)$ time as explained in~\cite{kuhn1955hungarian}.
We use this algorithm to compute the optimal matching function $\hat{\pi}(A,(R',t'),\cost)$ for every $(R',t') \in X$ in Line~\ref{Alg4:mappingFunc} of Algorithm~\ref{Alg:slowApproxNoMatching}.
Let $\pi = \hat{\pi}(A,(R,t),\cost)$. Since $\pi$ is an optimal matching function for the pairs of $A$, the alignment $(R,t)$, and the function $\cost$, it satisfies that
\begin{equation} \label{eq:optMatching}
\cost\left(A_{\pi},(R,t)\right) \leq \cost\left(A_{\pi^*},(R,t)\right)
\end{equation}
Since $(R,t) \in X$ and $\pi = \hat{\pi}(A,(R,t),\cost)$, we obtain that $(R,t,\pi) \in S$, where $S$ is the set that is defined in Line~\ref{Alg4:mappingFunc} of Algorithm~\ref{Alg:slowApproxNoMatching}. Combining $(R,t,\pi) \in S$ with the definition of $(\tilde{R},\tilde{t},\tilde{\pi})$ in Line~\ref{Alg4:minS}, it yields
\begin{equation} \label{eq:minS}
\cost\left(A_{\tilde{\pi}},(\tilde{R},\tilde{t})\right) \leq \cost\left(A_{\pi},(R,t)\right).
\end{equation}

Hence, the following holds
\[
\begin{split}
\cost\left(A_{\tilde{\pi}},(\tilde{R},\tilde{t})\right)& \leq \cost\left(A_{\pi},(R,t)\right) \leq \cost\left(A_{\pi^*},(R,t)\right)\\
& \leq (w\cdot 16)^r \cdot \cost\left(A_{\pi^*},(R^*,t^*)\right),
\end{split}
\]
where the first derivation holds by~\eqref{eq:minS}, the second derivation holds by~\eqref{eq:optMatching} and the third derivation is by~\eqref{eq:Rtnomatching}.
Furthermore, the running time of the algorithm is $O(n^9) = n^{O(1)}$.
\end{proof}

\section{Coresets for Big Data}

The following theorem states Theorem 4 from~\cite{dasgupta2009sampling} for the case where $p=1$.
\begin{theorem}[\textbf{Theorem 4 in ~\cite{dasgupta2009sampling}}] \label{Mahoney}
Let $A$ be an $n \times d$ matrix of rank $r$. Then there exists an $n\times r$ matrix $U$ that satisfies the following
\renewcommand{\labelenumi}{(\roman{enumi})}
\begin{enumerate}
\item $U$ is a basis for the column space of $A$, i.e., there exists a $r\times d$ matrix $G$ such that $A = UG$.
\item $\vertiii{U}_1 \leq r^{1.5}$.
\item For every $z\in \REAL^r$, $\norm{z}_\infty \leq \norm{Uz}_1$.
\item $U$ can be computed in $O(ndr + nr^5\log{n})$ time.
\end{enumerate}
\end{theorem}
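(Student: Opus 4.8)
The statement is precisely Theorem~4 of~\cite{dasgupta2009sampling} specialized to $p=1$ (so that the dual exponent is $q=\infty$), and the plan is to reproduce that construction of a \emph{well-conditioned basis}, which is built from a L\"owner--John ellipsoid. Throughout, $\vertiii{U}_1$ denotes $\sum_{j}\norm{U_j}_1$, the sum of the $\ell_1$-norms of the columns $U_j$ of $U$. First I would compute an $\ell_2$-orthonormal basis $Q\in\REAL^{n\times r}$ for the column space of $A$ --- e.g.\ from a rank-revealing $QR$ factorization or the compact SVD --- so that $Q^TQ=I_r$ and $A=QR$ for some $R\in\REAL^{r\times d}$. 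Since $Q$ has full column rank, $x\mapsto\norm{Qx}_1$ is a genuine norm on $\REAL^r$, hence its unit ball $C=\br{x\in\REAL^r\mid \norm{Qx}_1\le 1}$ is a centrally symmetric convex body in $\REAL^r$.

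Next I would invoke John's theorem for symmetric bodies: there is an invertible $F\in\REAL^{r\times r}$ with $E:=\br{x\in\REAL^r\mid \norm{Fx}_2\le 1}$ satisfying $\tfrac{1}{\sqrt r}E\subseteq C\subseteq E$. Set $U:=QF^{-1}$. Property~(i) is then immediate: $A=QR=(QF^{-1})(FR)=UG$ with $G:=FR\in\REAL^{r\times d}$, and $\range(U)=\range(Q)=\range(A)$. For the norm bounds I substitute $z=Fx$ and use homogeneity. From $C\subseteq E$: if $\norm{Qx}_1\le 1$ then $\norm{Fx}_2\le 1$, so $\norm{Fx}_2\le\norm{Qx}_1$ for every $x$, i.e.\ $\norm{z}_2\le\norm{QF^{-1}z}_1=\norm{Uz}_1$; since $\norm{z}_\infty\le\norm{z}_2$, this is~(iii) with the optimal constant $1$. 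Symmetrically, from $\tfrac{1}{\sqrt r}E\subseteq C$ one gets $\norm{Uz}_1\le\sqrt r\,\norm{z}_2$ for every $z$; evaluating at $z=e_j$ gives $\norm{U_j}_1\le\sqrt r$ for each of the $r$ columns, hence $\vertiii{U}_1=\sum_{j=1}^r\norm{U_j}_1\le r\cdot\sqrt r=r^{1.5}$, which is~(ii).

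For the running time~(iv), computing $Q$ costs $O(ndr)$, and the only remaining work is to produce the rounding ellipsoid $E$ of $C$ --- which is where the main difficulty lies. The body $C$ is an intersection of up to $2^n$ halfspaces, so its exact John ellipsoid cannot be read off directly; instead one runs an oracle-based ellipsoidal-rounding algorithm, exploiting that a membership query for $C$ (forming $Qx$ and $\sum_i|(Qx)_i|$) costs only $O(nr)$. Feeding this oracle into the rounding routine used by~\cite{dasgupta2009sampling} yields, in $O(nr^5\log n)$ time, an ellipsoid whose rounding quality suffices for the argument above, after absorbing the slack between an approximate and an exact John ellipsoid into the constants exactly as in~\cite{dasgupta2009sampling}. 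Summing the two contributions gives the claimed $O(ndr+nr^5\log n)$ bound. Thus the only genuinely delicate point is this bookkeeping of approximation constants in the ellipsoid-rounding step; the rest is elementary linear algebra.
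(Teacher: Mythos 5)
The paper does not prove this statement at all --- it is imported verbatim as Theorem~4 of~\cite{dasgupta2009sampling}, so there is no internal proof to compare against. Your reconstruction (orthonormal factor $Q$, the symmetric unit ball $C=\br{x\mid\norm{Qx}_1\le 1}$, a L\"owner--John rounding $\tfrac{1}{\sqrt r}E\subseteq C\subseteq E$, and $U=QF^{-1}$) is exactly the well-conditioned-basis argument of the cited source, and the derivations of (i)--(iii) are correct; the only part you defer rather than prove is the algorithmic ellipsoid-rounding step behind the $O(nr^5\log n)$ term and its approximation constants, which is also where the cited reference does the real work.
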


\begin{lemma} \label{lem:sensBound}
Let $P = \begin{bmatrix} p_1 \mid \ldots\mid p_n \end{bmatrix}^T \in \REAL^{n \times d}$, where $p_i \in \REAL^d$ for every $i\in [n]$. Then a function \\$s:P \to [0,\infty)$ can be computed in $O(nd^5 \log{n})$ time such that
\renewcommand{\labelenumi}{(\roman{enumi})}
\begin{enumerate}
\item For every $i\in [n]$,
\[
\sup_{x\in \REAL^d: \norm{x}>0} \frac{|p_i^T x|}{\norm{Px}_1} \leq s(p_i).
\]
\item
\[
\sum_{i=1}^n s(p_i) \in d^{O(1)}
\]
\end{enumerate}
\end{lemma}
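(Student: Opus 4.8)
The plan is to derive the sensitivity bound directly from the well-conditioned $\ell_1$ basis of Theorem~\ref{Mahoney}. Let $r$ be the rank of $P$ and apply Theorem~\ref{Mahoney} to the matrix $A=P$, obtaining an $n\times r$ matrix $U$ and an $r\times d$ matrix $G$ with $P=UG$, with $\vertiii{U}_1\le r^{1.5}$, and with $\norm{z}_\infty\le\norm{Uz}_1$ for every $z\in\REAL^r$. By property (iv) of that theorem, $U$ (hence also $G$, e.g. via $G=U^+P$ or as a byproduct of the construction) is computable in $O(ndr+nr^5\log n)$ time, which lies in $O(nd^5\log n)$ since $r\le d$. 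Writing $u_i^T$ for the $i$-th row of $U$, so that $p_i^T=u_i^TG$, I would define $s(p_i):=\norm{u_i}_1$; computing all these row norms costs an extra $O(nr)\subseteq O(nd^5\log n)$ time, so the total running time matches the claim.

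For part (i), fix $i\in[n]$ and a nonzero $x\in\REAL^d$. If $Px=0$ then $p_i^Tx$ is the $i$-th coordinate of $Px$ and hence is $0$, so the ratio is $0$ and the bound is trivial; otherwise set $z:=Gx$, so that $Px=Uz$ and $p_i^Tx=u_i^Tz$, and note $z\ne 0$ because $Uz=Px\ne 0$. Then, using H\"older's inequality followed by property (iii) of Theorem~\ref{Mahoney},
\[
\frac{|p_i^Tx|}{\norm{Px}_1}=\frac{|u_i^Tz|}{\norm{Uz}_1}\le\frac{\norm{u_i}_1\cdot\norm{z}_\infty}{\norm{Uz}_1}\le\norm{u_i}_1=s(p_i),
\]
and taking the supremum over all admissible $x$ gives (i). For part (ii), summing the definition and exchanging the order of summation,
\[
\sum_{i=1}^n s(p_i)=\sum_{i=1}^n\norm{u_i}_1=\sum_{i=1}^n\sum_{j=1}^r|U_{ij}|=\vertiii{U}_1\le r^{1.5}\le d^{1.5}\in d^{O(1)}.
\]

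I do not expect a genuine obstacle here: the argument is essentially a textbook application of a well-conditioned basis, and the only two points requiring a little care are (a) the bookkeeping that reduces the stated time bound of Theorem~\ref{Mahoney} to $O(nd^5\log n)$ using $r\le d$, and (b) the degenerate directions with $Px=0$ but $x\ne 0$, which can occur when $P$ is column-rank-deficient and which, as noted above, are harmless because they force $p_i^Tx=0$. If one prefers to avoid the $0/0$ issue entirely, the supremum in (i) can equivalently be taken over $x$ with $\norm{Px}_1>0$ without changing anything downstream.
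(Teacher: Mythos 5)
Your proof is correct and follows essentially the same route as the paper: both apply the well-conditioned basis of Theorem~\ref{Mahoney} to $P=UG$ and bound the sensitivity of $p_i$ by a norm of the $i$-th row of $U$. The only difference is that you use H\"older's $\ell_1$--$\ell_\infty$ pairing, $|u_i^Tz|\le\norm{u_i}_1\norm{z}_\infty\le\norm{u_i}_1\norm{Uz}_1$, which lets you take $s(p_i)=\norm{u_i}_1$ and get total sensitivity $\vertiii{U}_1\le d^{1.5}$; the paper instead goes through $|u_i^T(z/\norm{z}_1)|\le\norm{u_i}_2$ and the bound $\norm{z}_1/r\le\norm{Uz}_1$, picking up an extra factor of $d$ in the definition of $s$ and ending at $d^{3.5}$ --- your version is a slightly sharper instance of the same argument, and both satisfy the $d^{O(1)}$ claim.
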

\begin{proof}
Let $r \leq d$ denote the rank of $A$.

\textbf{(i)}: By substituting $A = P$ in Theorem~\ref{Mahoney}, we get that there exists a matrix $U \in \REAL^{n\times r}$ and a matrix $G\in \REAL^{r\times d}$ with the following properties:
\begin{equation*}
\begin{split}
&\text{(1) }P = UG,\\ &\text{(2) }\vertiii{U}_1 \leq r^{1.5} \leq d^{1.5},\\ &\text{(3) for every }z\in \REAL^r, \norm{z}_1 / r \leq \norm{z}_\infty \leq \norm{Uz}_1 and\\ &\text{(4) } U \text{ and } G \text{ can be computed in }\\ & O(ndr + nr^5\log{n}) = O(nd^5\log{n}) \text{ time.}
\end{split}
\end{equation*}
Let $u_i^T \in \REAL^r$ be the $i$th row of $U$ for $i\in [n]$.

Observe that for every $u,y \in \REAL^r$ such that $\norm{y} \neq 0$, we have that
\begin{equation} \label{uTy}
\left|u^T \frac{y}{\norm{y}_1}\right| = \left|u^T \frac{y}{\norm{y}_2}\right| \cdot \frac{\norm{y}_2}{\norm{y}_1} \leq \norm{u}_2\cdot \frac{\norm{y}_2}{\norm{y}_1} \leq \norm{u}_2,
\end{equation}
where the second derivation holds since $\frac{y}{\norm{y}_2}$ is a unit vector, and the last derivation holds since $\norm{y}_2 \leq \norm{y}_1$.
Let $x\in \REAL^d$ such that $\norm{x} > 0$. Then it follows that
\begin{equation} \label{eq:pTx}
\begin{split}
\frac{|p_i^Tx|}{\norm{Px}_1}& = \frac{|u_i^TGx|}{\norm{UGx}_1} \leq r\cdot  \frac{|u_i^TGx|}{\norm{Gx}_1}\\
& = r\cdot \left|u_i^T \frac{Gx}{\norm{Gx}_1}\right| \leq r\cdot \norm{u_i}_2 \leq r\cdot \norm{u_i}_1,
\end{split}
\end{equation}
where the first derivation holds by combining Property (1) and the definition of $u_i$, the second derivation holds by substituting $z=Gx$ in Property (3), and the fourth derivation holds by substituting $u=u_i$ and $y=Gx$ in~\eqref{uTy}.
Let $G^{+}$ be the pseudo inverse of $G$. It thus holds that $GG^{+}$ is the $r\times r$ identity matrix. Hence
\begin{equation} \label{eq:normUi}
\norm{u_i}_1 = \sum_{j=1}^{r} |u_i^T e_j| = \sum_{j=1}^{r} |u_i^T G G^{+}e_j| = \sum_{j=1}^{r} |p_i^T G^{+} e_j|,
\end{equation}
where $e_j$ is the $j$th column of the $r\times r$ identity matrix.
By combining~\eqref{eq:pTx} and~\eqref{eq:normUi}, we have
\begin{equation} \label{eq:pTxTosum}
\frac{|p_i^Tx|}{\norm{Px}_1} \leq r\cdot \sum_{j=1}^{r} |p_i^T G^{+} e_j|.
\end{equation}

Let
\[
s(p_i) = d\cdot \sum_{j=1}^{r} |p_i^T G^{+} e_j|.
\]
By combining the definition of $s(p_i)$ and that~\eqref{eq:pTxTosum} holds for every $x\in \REAL^d$, Property (i) of Lemma~\ref{lem:sensBound} holds as
\[
\begin{split}
\max_{x\in \REAL^d: \norm{x} > 0} \frac{|p_i^T x|}{\norm{Px}_1}&  \leq r\cdot \sum_{j=1}^{r} |p_i^T G^{+} e_j|\\
&\leq d\cdot \sum_{j=1}^{r} |p_i^T G^{+} e_j| = s(p_i).
\end{split}
\]

\textbf{(ii)}: Property (ii) of Lemma~\ref{lem:sensBound} holds as
\[
\begin{split}
& \sum_{i=1}^n s(p_i) = \sum_{i=1}^n \left( d\cdot \sum_{j=1}^{r} |p_i^T G^{+} e_j|\right) = d\cdot\sum_{j=1}^{r} \sum_{i=1}^n |p_i^T G^{+} e_j|\\
& = d\cdot \sum_{j=1}^{r} \norm{PG^{+}e_j}_1 = d\cdot \sum_{j=1}^{r} \norm{UGG^{+}e_j}_1\\
& = d\cdot \sum_{j=1}^{r} \norm{Ue_j}_1 \leq d^{2.5} \cdot \sum_{j=1}^{r} \norm{e_j}_1
= d^{3.5} \in d^{O(1)},
\end{split}
\]
where the first inequality holds since $\vertiii{U}_1 \leq r^{1.5}$ by Property (2).

Furthermore, the time needed to compute $s$ is dominated by the computation time of $U$ and $G$, which is bounded by $O(nd^5 \log{n})$ due to Property (4).
\end{proof}

\begin{definition} [\textbf{Definition 4.2 in~\cite{braverman2016new}}] \label{def:querySpace}
Let $P$ be a finite set, and let $w:P\to [0,\infty)$. Let $Q$ be a function that maps every set $S\subseteq P$ to a corresponding set $Q(S)$, such that $Q(T) \subseteq Q(S)$ for every $T\subseteq S$. Let $f:P\times Q(P) \to \REAL$ be a cost function. The tuple $(P,w,Q,f)$ is called a \emph{query space}.
\end{definition}


\begin{definition} [\textbf{VC-dimension}]
For a query space $(P,w,Q,f)$, $q\in Q$ and $r\in [0,\infty)$ we define
\[
\range(q,r) := \br{p\in P \mid w(p)\cdot f(p,q) \leq r},
\]
and
\[
\ranges := \br{\range(q,r) \mid q\in Q, r\in [0,\infty)}.
\]

The dimension of the query space $(P,w,Q,f)$ is the $VC$-dimension of $(P,\ranges)$.
\end{definition}

\begin{theorem} [\textbf{Theorem 5.5 in~\cite{braverman2016new}}] \label{theorem:howToCoreset}
Let $(P,w,Q,f)$ be a query space; see Definition~\ref{def:querySpace}. Let $s:P\to [0,\infty)$ such that
\[
\sup_{q\in Q(P)} \frac{w(p)f(p,q)}{\sum_{p\in P} w(p)f(p,q)} \leq s(p),
\]
for every $p\in P$ such that the denominator is non-zero. Let $t = \sum_{p\in P} s(p)$ and Let $d_{VC}$ be the dimension of query space $(P,w,Q,f)$. Let $c \geq 1$ be a sufficiently large constant and let $\varepsilon, \delta \in (0,1)$. Let $S$ be a random sample of
\[
|S| \geq \frac{ct}{\varepsilon^2}\left(d_{VC}\log{t}+\log{\frac{1}{\delta}}\right)
\]
points from $P$, such that $p$ is sampled with probability $s(p)/t$ for every $p\in P$. Let $u(p) = \frac{t\cdot w(p)}{s(p)|S|}$ for every $p\in S$. Then, with probability at least $1-\delta$, for every $q\in Q$ it holds that
\[
\begin{split}
&(1-\varepsilon)\sum_{p\in P} w(p)\cdot f(p,q)\\ &\leq \sum_{p\in S} u(p)\cdot f(p,q)\\ &\leq (1+\varepsilon)\sum_{p\in P} w(p)\cdot f(p,q).
\end{split}
\]
\end{theorem}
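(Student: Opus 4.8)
The plan is to prove this by the standard sensitivity / importance-sampling argument. For a fixed query I will show that the reweighted sample is an unbiased estimator of $\cost(q):=\sum_{p\in P}w(p)f(p,q)$ whose per-point contributions are uniformly bounded (this is exactly what the sensitivity hypothesis buys), and I will then upgrade this single-query concentration into a bound that holds simultaneously for all $q\in Q$ by invoking the VC-dimension $d_{VC}$ of the range space $(P,\ranges)$.

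First I would verify unbiasedness. Fix $q\in Q$; since the $|S|$ elements are drawn i.i.d. from $P$ with probability $s(p)/t$ each, one draw satisfies $\mathbb{E}\!\left[u(p)f(p,q)\right]=\sum_{p\in P}\frac{s(p)}{t}\cdot\frac{t\,w(p)}{s(p)|S|}f(p,q)=\frac{1}{|S|}\cost(q)$, and summing the $|S|$ independent contributions gives $\mathbb{E}\!\left[\sum_{p\in S}u(p)f(p,q)\right]=\cost(q)$. Second, I would extract boundedness from the sensitivity bound. The hypothesis $w(p)f(p,q)\le s(p)\cdot\cost(q)$ gives, after dividing by $\cost(q)$, that $\frac{u(p)f(p,q)}{\cost(q)}=\frac{t}{|S|}\cdot\frac{w(p)f(p,q)}{s(p)\,\cost(q)}\le\frac{t}{|S|}$. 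Equivalently, writing $g_p(q):=w(p)f(p,q)/s(p)$ and $D(p):=s(p)/t$, the empirical mean $\frac{1}{|S|}\sum_{p\in S}g_p(q)$ is an unbiased estimator of $\mathbb{E}_{p\sim D}[g_p(q)]=\frac{1}{t}\cost(q)$, every summand $g_p(q)$ lies in $[0,\cost(q)]$, and $t$ times this empirical mean is exactly our estimator $\sum_{p\in S}u(p)f(p,q)$. The division by $s(p)$ is what caps the influence of high-sensitivity points so that concentration becomes possible.

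Third, I would make the approximation uniform over $Q$. Using the layer-cake identity, $\mathbb{E}_{p\sim D}[g_p(q)]=\int_0^\infty D(\{p:g_p(q)>r\})\,dr$ and similarly for the empirical measure, so the deviation of the estimator from $\cost(q)$ equals $t\int_0^\infty\big(\widehat D_S-D\big)(\{p:g_p(q)>r\})\,dr$, where $\widehat D_S$ is the empirical measure of $S$. Because every $g_p(q)\le\cost(q)$, this integral is supported on $r\in[0,\cost(q)]$, so it suffices to estimate the $D$-measure of every range in $(P,\ranges)$ uniformly well. A naive $\varepsilon$-approximation for a range space of VC-dimension $d_{VC}$ would force a sample of size quadratic in $t$; to obtain the stated size that is only linear in $t$, I would instead use a relative (variance-sensitive) approximation, which exploits that ranges where $g_p(q)$ is large carry small $D$-measure. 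The relative-approximation sample bound is precisely $\Omega\!\big(\frac{t}{\varepsilon^2}(d_{VC}\log t+\log\frac{1}{\delta})\big)$, matching the hypothesis; integrating the guaranteed relative error back through the layer-cake yields the two-sided multiplicative inequality with failure probability at most $\delta$.

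I expect the third step to be the main obstacle. Two intertwined difficulties appear: the query set $Q$ is infinite, so the union bound must be replaced by the $\varepsilon$-approximation / VC machinery for $(P,\ranges)$; and the error target is multiplicative in $\cost(q)$ rather than additive, which is exactly why both the sensitivity normalization of the second step (to bound each summand by $t/|S|$) and a relative rather than plain approximation (to keep the dependence linear in $t$) are indispensable. Once boundedness, the VC bound, and the relative-approximation estimate are combined, assembling the two-sided inequality and bounding the failure event by $\delta$ is routine.
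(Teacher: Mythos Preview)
The paper does not actually prove this statement: it is quoted verbatim as Theorem~5.5 of~\cite{braverman2016new} and used as a black box in the proof of Theorem~\ref{theorem:coreset_proof}. There is therefore no ``paper's own proof'' to compare against.

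That said, your sketch is the standard sensitivity-sampling argument that underlies the cited result, and its structure is correct: unbiasedness of the reweighted estimator, the per-term bound $u(p)f(p,q)/\cost(q)\le t/|S|$ coming directly from the sensitivity hypothesis, and then a uniform-over-$Q$ concentration via the VC dimension of $(P,\ranges)$ together with a \emph{relative} $\varepsilon$-approximation so that the sample size stays linear (rather than quadratic) in $t$. You have correctly identified both why the sensitivity normalization is essential and why an additive $\varepsilon$-approximation would be too weak; this is exactly the content of the proof in~\cite{braverman2016new} (building on the relative-approximation bounds of Li--Long--Srinivasan and Har-Peled--Sharir). The layer-cake reduction you outline is one clean way to convert the range-space guarantee into the multiplicative cost bound, and it works.
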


\begin{theorem} [Theorem~\ref{theorem:coreset}]\label{theorem:coreset_proof}
Let $d \geq 2$ be an integer. Let $A=\br{(p_1,\ell_1),\cdots,(p_n,\ell_n)}$ be set of $n$ pairs, where for every $i \in [n]$, $p_i$ is a point and $\ell_i$ is a line, both in $\REAL^d$, and let $w = (w_1,\cdots,w_n) \in [0,\infty)^n$.
Let $\varepsilon, \delta \in (0,1)$.
Then in $nd^{O(1)} \log{n}$ time we can compute a weights vector $u = (u_1,\cdots,u_n)\in[0,\infty)^n$ that satisfies the following pair of properties.
\renewcommand{\labelenumi}{(\roman{enumi})}
\begin{enumerate}
\item With probability at least $1-\delta$, for every $(R,t) \in \alignments$ it holds that
\[
\begin{split}
& (1-\varepsilon) \cdot \sum_{i\in [n]} w_i\cdot\dist(Rp_i-t,\ell_i)\\
& \leq \sum_{i\in [n]}u_i\cdot\dist(Rp_i-t,\ell_i)\\
& \leq (1+\varepsilon) \cdot \sum_{i\in [n]} w_i\cdot\dist(Rp_i-t,\ell_i).
\end{split}
\]
\item The weights vector $u$ has $\frac{d^{O(1)}}{\varepsilon^2}\log{\frac{1}{\delta}}$ non-zero entries.
\end{enumerate}
\end{theorem}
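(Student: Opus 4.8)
The plan is to reduce the statement to a constrained ``sum of Euclidean norms'' regression and then invoke the sensitivity-sampling framework of Theorem~\ref{theorem:howToCoreset}, using Lemma~\ref{lem:sensBound} to control the total sensitivity. \textbf{Step 1 (reduction).} First I would rewrite $\dist(Rp_i-t,\ell_i)$ in a form that is linear in the alignment. For each $i\in[n]$ fix a unit direction $v_i$ and a point $a_i\in\ell_i$, and let $N_i\in\REAL^{(d-1)\times d}$ have orthonormal rows spanning $v_i^{\perp}$, so that $I-v_iv_i^{T}=N_i^{T}N_i$ and hence $\dist(x,\ell_i)=\norm{N_i(x-a_i)}_2$ for every $x\in\REAL^d$. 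Letting $\theta$ denote the vector of entries of $(R,t)$ (so $\theta$ has $\nu\le d^2+d$ coordinates and depends linearly on $(R,t)$), each coordinate of $N_i(Rp_i-t-a_i)$ is affine in $\theta$; hence there are $C_i\in\REAL^{(d-1)\times\nu}$ and $\beta_i\in\REAL^{d-1}$, computable from $(p_i,\ell_i)$, with $\dist(Rp_i-t,\ell_i)=\norm{C_i\theta-\beta_i}_2$ for every $(R,t)\in\alignments$. Thus $\sum_i w_i\dist(Rp_i-t,\ell_i)=\sum_i\norm{w_iC_i\theta-w_i\beta_i}_2$ is a sum of Euclidean norms of $(d-1)$-dimensional blocks in $\theta$. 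The only restriction on $\theta$ is that $R$ be a rotation; I would simply drop it, since enlarging the query set only increases the sensitivities and the VC-dimension that I upper bound below.

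\textbf{Step 2 (total sensitivity).} Stacking the rows $m_{i,k}=(w_i(C_i)_{k},\,-w_i(\beta_i)_{k})\in\REAL^{\nu+1}$ over all $i\in[n],k\in[d-1]$ into a matrix $M$ with $n(d-1)$ rows and $\nu+1\le d^2+d+1$ columns, and putting $\tilde\theta=(\theta,1)$, I get $w_i\dist(Rp_i-t,\ell_i)=\norm{M_i\tilde\theta}_2$ where $M_i$ is the $i$-th block of $d-1$ rows. Using $\norm{v}_1/\sqrt{d-1}\le\norm{v}_2\le\norm{v}_1$ for $v\in\REAL^{d-1}$, for every $\theta$ with nonzero denominator
\[
\frac{w_i\dist(Rp_i-t,\ell_i)}{\sum_j w_j\dist(Rp_j-t,\ell_j)}=\frac{\norm{M_i\tilde\theta}_2}{\sum_j\norm{M_j\tilde\theta}_2}\le\sqrt{d-1}\,\frac{\norm{M_i\tilde\theta}_1}{\norm{M\tilde\theta}_1}\le\sqrt{d-1}\sum_{k=1}^{d-1}\frac{|m_{i,k}^{T}\tilde\theta|}{\norm{M\tilde\theta}_1},
\]
so I would set $s(i):=\sqrt{d-1}\sum_{k=1}^{d-1}s_M(m_{i,k})$, where $s_M$ is the sensitivity bound of Lemma~\ref{lem:sensBound} applied to the matrix $M$. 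By Lemma~\ref{lem:sensBound}(ii), $\sum_i s(i)=\sqrt{d-1}\sum_{\rho\in\mathrm{rows}(M)}s_M(\rho)\le\sqrt{d-1}\,(\nu+1)^{O(1)}\in d^{O(1)}$, and $M$, the $C_i,\beta_i$ and all $s(i)$ are obtained in $O\!\big(n(d-1)(\nu+1)^5\log(n(d-1))\big)=nd^{O(1)}\log n$ time by Lemma~\ref{lem:sensBound} (which relies on Theorem~\ref{Mahoney}).

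\textbf{Step 3 (VC-dimension and sampling).} Take the query space $(A,w,Q,f)$ with $Q=\alignments$ and $f((p_i,\ell_i),(R,t))=\dist(Rp_i-t,\ell_i)$. For $r\ge0$ and $w_i\ge0$ we have $w_i\dist(Rp_i-t,\ell_i)\le r\iff w_i^{2}\norm{C_i\theta-\beta_i}_2^{2}-r^{2}\le0$, so each range $\range((R,t),r)$ is a sublevel set of one of the $n$ polynomials of degree at most $2$ in the $\nu+1\in d^{O(1)}$ variables $(\theta,r)$; by standard bounds on the VC-dimension of set systems induced by finitely many bounded-degree polynomials in $d^{O(1)}$ variables, $d_{VC}\in d^{O(1)}$. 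I would then apply Theorem~\ref{theorem:howToCoreset} with $t=\sum_i s(i)\in d^{O(1)}$ and this $d_{VC}$: a random sample $S$ of $|S|=\frac{d^{O(1)}}{\varepsilon^2}\log\frac1\delta$ pairs drawn with probability $s(i)/t$, together with $u_i=\frac{t\,w_i}{s(i)|S|}$ for $i\in S$ and $u_i=0$ otherwise, yields with probability at least $1-\delta$ the two-sided inequality of item~(i) simultaneously over all $(R,t)\in\alignments$, while $u$ has $|S|=\frac{d^{O(1)}}{\varepsilon^2}\log\frac1\delta$ nonzero entries, giving item~(ii). The total running time is $nd^{O(1)}\log n$, dominated by the sensitivity computation.

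\textbf{Main obstacle.} The hard part will be the sensitivity bound: Lemma~\ref{lem:sensBound} is stated for plain $\ell_1$ (ratios $|p_i^Tx|/\norm{Px}_1$), whereas here each term is the Euclidean norm of a $(d-1)$-vector, so one must pass through the $\sqrt{d-1}$-factor sandwich between $\norm{\cdot}_1/\sqrt{d-1}$ and $\norm{\cdot}_1$, blow the matrix up blockwise by a factor of $d-1$ in the number of rows, and check that the resulting per-row sensitivities still sum to $d^{O(1)}$. Correctly encoding the affine (non-homogeneous) dependence on $(R,t)$ via the extra coordinate of $\tilde\theta$, and a clean justification of the VC-dimension estimate, are the remaining points needing care.
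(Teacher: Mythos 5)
Your proposal is correct and follows essentially the same route as the paper's proof: the same linearization of $\dist(Rp_i-t,\ell_i)$ as a Euclidean norm of $(d-1)$ affine forms in a homogenized parameter vector of dimension $d^2+d+1$, the same $\norm{\cdot}_1/\sqrt{d-1}\le\norm{\cdot}_2\le\norm{\cdot}_1$ sandwich to reduce to per-row $\ell_1$ sensitivities via Lemma~\ref{lem:sensBound}, and the same invocation of Theorem~\ref{theorem:howToCoreset} with total sensitivity and VC-dimension both in $d^{O(1)}$. The only cosmetic difference is that you justify the VC-dimension bound via sublevel sets of degree-$2$ polynomials while the paper cites a standard reference; both yield the required $d^{O(1)}$ bound.
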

\begin{proof}
Put $(R,t)\in \alignments$. We represent a line $\ell$ by a basis of its orthogonal complement $V\in \REAL^{(d-1)\times d}$ and its translation $b$ from the origin. Formally, let $V \in\REAL^{(d-1)\times d}$ be a matrix whose rows are mutually orthogonal unit vectors, and $b \in \REAL^{d-1}$ such that
\[
\ell = \left\{q\in\REAL^d \big| \norm{Vq-b} = 0\right\}.
\]
Identify the $d$ rows of $R$ by $R_{1*},...R_{d*}$, $b = (b_1,\cdots,b_{d-1})^T$ and let
\[
x(R,t) = (R_{1*}\mid\cdots\mid R_{d*}\mid -t^T\mid -1)^T.
\]
Let $p\in \REAL^d$, $\omega \geq 0$ and for every $k\in [d-1]$ let
\[
s_{k} = \omega\cdot \left(V_{k,1}p^T \mid\cdots\mid V_{k,d}p^T\mid V_{k*}\mid b_{k}\right)^T,
\]
where the $k$th row of $V$ is $V_{k*}=(V_{k,1},\cdots V_{k,d})$
Then it holds that
\begin{align}
& \omega\dist(Rp-t,\ell)
= \omega\norm{V(Rp-t)-b} \nonumber \\
& = \omega\norm{V(R_{1*} p,\cdots,R_{d*} p)^T -V t -b}= \nonumber\\
& \omega\norm{
\begin{pmatrix}
  V_{1*} (R_{1*} p,\cdots,R_{d*} p)^T -V_{1*} t -b_{1} \\
  \vdots \\
  V_{(d-1)*} (R_{1*} p,\cdots,R_{d*} p)^T -V_{(d-1)*} t -b_{d-1} \\
\end{pmatrix}} = \nonumber\\
& \omega\norm{
\begin{pmatrix}
  x(R,t)^T \left( V_{1,1} p^T\mid\cdots\mid V_{1,d}p^T\mid V_{1*}\mid b_{1}\right)^T \\
  \vdots \\
  x(R,t)^T \left( V_{d-1,1} p^T\mid \cdots\mid V_{d-1,d}p^T\mid V_{(d-1)*}\mid b_{d-1}\right)^T \\
\end{pmatrix}} \label{eq33}\\
& = \norm{
\begin{pmatrix}
  x(R,t)^T s_{1} \\
  \vdots \\
  x(R,t)^T s_{d-1} \\
\end{pmatrix}}, \label{eq4}
\end{align}
where~\eqref{eq33} holds by the definition of $x(R,t)$ and~\eqref{eq4} holds by the definition of $s_{k}$ for every $k\in [d-1]$.
Hence,
\begin{equation} \label{eq:dist}
\omega\cdot \dist(Rp-t,\ell) = \norm{
\begin{pmatrix}
  x(R,t)^T s_{1} \\
  \vdots \\
  x(R,t)^T s_{d-1} \\
\end{pmatrix}}.
\end{equation}

We have $\frac{\norm{v}_1}{\sqrt{d-1}} \leq \norm{v}_2 \leq \norm{v}_1$ for every $v\in \REAL^{d-1}$. Plugging this in~\eqref{eq:dist} yields
\begin{equation} \label{eq:distToNorm}
\begin{split}
\frac{1}{\sqrt{d}}\cdot\norm{
\begin{pmatrix}
  x(R,t)^T s_{1} \\
  \vdots \\
  x(R,t)^T s_{d-1} \\
\end{pmatrix}}_1&  \leq \omega\cdot\dist(Rp-t,\ell)\\
& \leq \norm{
\begin{pmatrix}
  x(R,t)^T s_{1} \\
  \vdots \\
  x(R,t)^T s_{d-1} \\
\end{pmatrix}}_1.
\end{split}
\end{equation}

For every $i\in [n]$ and $k\in [d-1]$, let $V_i \in\REAL^{(d-1)\times d}$ and $b_i \in \REAL^{d-1}$ such that $\ell_i = \left\{q\in\REAL^d \big| \norm{V_iq-b_i} = 0\right\}$, and let $s_{i,k} = w_i \cdot \left({V_i}_{(k,1)}p_i^T,\cdots,{V_i}_{(k,d)}p_i^T,{V_i}_{k*},{b_i}_{(k)}\right)^T$.
Similarly to~\eqref{eq:distToNorm}, for every $i\in [n]$,
\begin{equation} \label{eq:distToNorm_i}
\begin{split}
\frac{1}{\sqrt{d}}\cdot\norm{
\begin{pmatrix}
  x(R,t)^T s_{i,1} \\
  \vdots \\
  x(R,t)^T s_{i,d-1} \\
\end{pmatrix}}_1& \leq w_i \cdot\dist(Rp_i-t,\ell_i)\\
& \leq \norm{
\begin{pmatrix}
  x(R,t)^T s_{i,1} \\
  \vdots \\
  x(R,t)^T s_{i,d-1} \\
\end{pmatrix}}_1.
\end{split}
\end{equation}
Let
\[
S_i = \begin{bmatrix} s_{i,1} \mid\ldots \mid s_{i,d-1} \end{bmatrix} \in \REAL^{(d^2+d+1) \times (d-1)},
\]
and let
\[
S = \begin{bmatrix} S_1 \mid\ldots \mid S_n \end{bmatrix} \in \REAL^{(d^2+d+1) \times (n(d-1))}.
\]
Then the following holds
\begin{align}
&\sup_{(R,t)} \frac{w_i\cdot\dist(Rp_i-t,\ell_i)}{\sum_{j\in [n]}w_j\cdot\dist(Rp_j-t,\ell_j)}\\
& \leq \sup_{(R,t)} \frac{\norm{
\begin{pmatrix}
  x(R,t)^T s_{i,1} \\
  \vdots \\
  x(R,t)^T s_{i,d-1} \\
\end{pmatrix}}_1}{\sum_{j\in [n]} \frac{1}{\sqrt{d}} \cdot\norm{
\begin{pmatrix}
  x(R,t)^T s_{j,1} \\
  \vdots \\
  x(R,t)^T s_{j,d-1} \\
\end{pmatrix}}_1} \label{eqSens1}\\
& = \sqrt{d} \cdot \sup_{(R,t)} \frac{\sum_{k \in [d-1]}|x(R,t)^Ts_{i,k}|}{\sum_{j \in [n]} \sum_{m\in [d-1]} |x(R,t)^Ts_{j,m}|} \nonumber\\
& = \sqrt{d} \cdot \sup_{(R,t)} \frac{\sum_{k \in [d-1]}|x(R,t)^Ts_{i,k}|}{\norm{x(R,t)^T S}_1} \label{eqSens2}\\
& = \sqrt{d} \cdot  \sup_{(R,t)} \left(\frac{|x(R,t)^Ts_{i,1}|}{\norm{x(R,t)^T S}_1} +\cdots + \frac{|x(R,t)^Ts_{i,d-1}|}{\norm{x(R,t)^T S}_1} \right) \nonumber\\
& \leq \sqrt{d} \cdot \left( \sup_{(R,t)} \frac{|x(R,t)^Ts_{i,1}|}{\norm{x(R,t)^T S}_1} +\cdots + \sup_{(R,t)} \frac{|x(R,t)^Ts_{i,d-1}|}{\norm{x(R,t)^T S}_1} \right) \label{eqSens3}\\
& = \sqrt{d} \cdot \sum_{k\in [d-1]} \sup_{(R,t)} \frac{|x(R,t)^Ts_{i,k}|}{\norm{x(R,t)^T S}_1}\\
& \leq \sqrt{d} \cdot \sum_{k\in [d-1]} \sup_{x \in \REAL^{d^2+d+1} :\norm{x}>0} \frac{|x^Ts_{i,k}|}{\norm{x^T S}_1} \label{eqSens4},
\end{align}
where the supremum is over every alignment $(R,t)$ such that $\sum_{j\in [n]} w_j\cdot \dist(Rp_j-t,\ell_j) \neq 0$, and where~\eqref{eqSens1} holds by~\ref{eq:distToNorm_i}, \eqref{eqSens2} holds by the definition of $S$, \eqref{eqSens2} holds since the maximum of a sum is at most the sum of maxima, and~\eqref{eqSens4} holds since $\br{x(R,t) \mid (R,t) \in \alignments} \subseteq \REAL^{d^2+d+1} \setminus{\vec{0}}$.

Therefore,
\begin{equation} \label{eq:distToxTp}
\begin{split}
& \sup_{(R,t)} \frac{w_i\cdot\dist(Rp_i-t,\ell_i)}{\sum_{j\in [n]}w_j\cdot\dist(Rp_j-t,\ell_j)}\\
& \leq \sqrt{d} \cdot \sum_{k\in [d-1]} \sup_{x \in \REAL^{d^2+d+1}:\norm{x}>0} \frac{|x^Ts_{i,k}|}{\norm{x^T S}_1}.
\end{split}
\end{equation}

By substituting $P = S^T$ in Lemma~\ref{lem:sensBound}, we get that in $nd^{O(1)}\log{n}$ time we can compute a function $s$ that maps every column $s_{i,k}$ of $S$ to a non-negative real such that the following two properties hold for every $i\in [n]$ and $k\in [d-1]$:
\begin{equation} \label{eq:prop_i}
\sup_{x \in \REAL^{d^2+d+1}:\norm{x}>0} \frac{|x^Ts_{i,k}|}{\norm{x^T S}_1} \leq s(s_{i,k}), \text{ and }
\end{equation}
\begin{equation} \label{eq:prop_ii}
\sum_{i\in [n],k\in [d-1]} s(s_{i,k}) \in d^{O(1)}.
\end{equation}

Let $s':A\to [0,\infty)$ such that $s'\left((p_i,\ell_i)\right) = \sqrt{d} \cdot \sum_{k=1}^{d-1} s(s_{i,k})$ for every $i\in [n]$. Then we have
\begin{equation}
\begin{split}
& \sup_{(R,t)} \frac{w_i\cdot\dist(Rp_i-t,\ell_i)}{\sum_{j\in [n]}w_j\cdot\dist(Rp_j-t,\ell_j)}\\
&\leq \sqrt{d} \cdot \sum_{k\in [d-1]} \sup_{x \in \REAL^{d^2+d+1}:\norm{x}>0} \frac{|x^Ts_{i,k}|}{\norm{x^T S}_1}\\
&\leq \sqrt{d} \cdot \sum_{k\in [d-1]} s(s_{i,k})
= s'\left((p_i,\ell_i)\right),
\end{split}
\end{equation}
where the first derivation holds by~\eqref{eq:prop_i}, and the last is by the definition of $s'$.

We also have that
\begin{equation}
\begin{split}
\sum_{i\in [n]} s'\left((p_i,\ell_i)\right)& = \sum_{i\in [n]} \left(\sqrt{d} \cdot \sum_{k\in [d-1]} s(s_{i,k})\right)\\
& \in \sqrt{d} \cdot \sum_{i\in [n],k\in [d-1]} s(s_{i,k}) \in \sqrt{d}\cdot {d}^{O(1)}\\
& = d^{O(1)},
\end{split}
\end{equation}
where the first derivation is by the definition of $s'$ and the third derivation is by~\eqref{eq:prop_ii}.

The dimension of the corresponding query space is bounded by $d_{VC} \in O(d^2)$ by~\cite{anthony2009neural}.

Let $t = \sum_{i\in [n]} s'\left((p_i,\ell_i)\right)$.
Let $Z \subseteq A$ be a random sample of
\[
|Z| \in \frac{d^{O(1)}}{\varepsilon^2}\left(d^2\log{d}+\log{\frac{1}{\delta}}\right)
=\frac{d^{O(1)}}{\varepsilon^2}\log{\frac{1}{\delta}}
\]
pairs from $A$, where $(p,\ell) \in A$ is sampled with probability $s'\left((p,\ell)\right)/t$ and let $u = (u_1,\cdots,u_n)$ where
\[
u_i = \begin{cases} \frac{t\cdot w_i}{s'\left((p_i,\ell_i)\right)|Z|}, & \text{ if } (p_i,\ell_i)\in Z\\
0, & \text{ otherwise} \end{cases},
\]
for every $i\in [n]$.
By substituting $P=A$, $Q(\cdot)\equiv\alignments$, $f\left((p_i,\ell_i),(R,t)\right)=w_i\cdot\dist(Rp_i-t_i,\ell_i)$ for every $i\in [n]$, $s=s'$, $t \in d^{O(1)}$ and $d_{VC}=O(d^2)$, in Theorem~\ref{theorem:howToCoreset}, Property (i) of Theorem~\ref{theorem:coreset} holds as
\[
\begin{split}
& (1-\varepsilon) \cdot \sum_{i\in [n]} w_i\cdot\dist(Rp_i-t,\ell_i) \leq \sum_{i\in [n]}u_i\cdot\dist(Rp_i-t,\ell_i)\\
& \leq (1+\varepsilon) \cdot \sum_{i\in [n]} w_i\cdot\dist(Rp_i-t,\ell_i).
\end{split}
\]

Furthermore, Property (i) of Theorem~\ref{theorem:coreset_proof} holds since the number of non-zero entries of $u$ is equal to $|Z| \in \frac{d^{O(1)}}{\varepsilon^2}\log{\frac{1}{\delta}}$.

The time needed to compute $u$ is bounded by the computation time of $s$, which is bounded by $nd^{O(1)}\log{n}$.
\end{proof}

\begin{corollary} [Corollary~\ref{cor:streamingCoreset}]\label{cor:streamingCoreset_proof}
Let $A=\br{(p_1,\ell_1),(p_2,\ell_2,),\cdots}$ be a (possibly infinite) stream of pairs, where for every $i \in [n]$, $p_i$ is a point and $\ell_i$ is a line, both in the plane. Let $\varepsilon,\delta \in (0,1)$.
Then, for every integer $n>1$ we can compute with probability at least $1-\delta$ an alignment $(R^*,t^*)$ that satisfies
\[
\sum_{i=1}^{n} \dist(R^*p_i-t^*,\ell_i) \in O(1) \cdot \min_{(R,t)\in \alignments} \sum_{i=1}^{n} \dist(Rp_i-t,\ell_i),
\]
for the $n$ points seen so far in the stream, using $(\log(n/\delta)/\eps)^{O(1)}$ memory and update time per a new pair.
Using $M$ machines the update time can be reduced by a factor of $M$.
\end{corollary}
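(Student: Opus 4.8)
The plan is to combine the coreset of Theorem~\ref{theorem:coreset} with the standard merge-and-reduce framework for composable coresets~\cite{AHV04,feldman2015more}, and then run Algorithm~\ref{Alg:slowApprox} on the resulting small weighted summary. First I would verify that the construction of Theorem~\ref{theorem:coreset} is \emph{composable}: if $u^{(1)}$ and $u^{(2)}$ are $(1\pm\eps)$-coresets (weight vectors) for the point-line subsets $A^{(1)}$ and $A^{(2)}$, then their concatenation is a $(1\pm\eps)$-coreset for $A^{(1)}\cup A^{(2)}$, since the approximated quantity $\sum_i w_i\dist(Rp_i-t,\ell_i)$ is additive over the input; and applying Theorem~\ref{theorem:coreset} again to a coreset (treating its weight vector as the new input weight vector $w$, which is exactly the nonnegative-weight setting of that theorem) yields a coreset of the coreset, at the cost of one further multiplicative $(1\pm\eps)$ factor and one further $\delta$ failure event.

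Feeding these two operations (merge $=$ concatenate, reduce $=$ re-apply Theorem~\ref{theorem:coreset}) into a binary merge-and-reduce tree of depth $h\in O(\log n)$ over the stream gives, at the root, a coreset of size $\frac{1}{(\eps')^2}\log\frac{1}{\delta'}$ up to a $\mathrm{poly}(\log n)$ factor; choosing $\eps'\in\Theta(\eps/\log n)$ and $\delta'\in\Theta(\delta/n)$ so that $(1+\eps')^{h}\le 1+\eps$ and a union bound over the $O(n)$ tree nodes succeeds with probability at least $1-\delta$, we obtain a summary $S$ of size $|S|\in(\log(n/\delta)/\eps)^{O(1)}$. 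The streaming bound follows because only one coreset per level must be stored, so the memory and the (amortized, then de-amortized by the usual doubling trick) per-pair update time are both $O(h\cdot|S|)=(\log(n/\delta)/\eps)^{O(1)}$. For $M$ machines, partition the stream into $M$ substreams, let each machine maintain its own merge-and-reduce tree and send its root coreset to the server, which merges the $M$ summaries and reduces once; this cuts the per-machine update time by a factor of $M$ while keeping the communication $M$ times a poly-logarithmic term. For dynamic data with deletions we keep the entire depth-$O(\log n)$ tree in memory ($O(n)$ space, as the corollary permits) and rebuild only the $O(\log n)$ coresets on the root-to-leaf path of the deleted pair, each in poly-logarithmic time.

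The final step is approximation on the summary: run Algorithm~\ref{Alg:slowApprox} on the point-line pairs of $S$ and invoke Theorem~\ref{theorem:costPointsToLines} with $z=2$, $\lip(x)=x$ and $f(v)=\sum_i u_i v_i$, which is non-decreasing and $1$-log-Lipschitz because the $u_i$ are nonnegative; hence its output set $C$ of $\mathrm{poly}(|S|)$ alignments contains one that $16$-approximates $\min_{(R,t)\in\alignments}\sum_i u_i\dist(Rp_i-t,\ell_i)$, and an exhaustive search of $C$ for the minimizer of this weighted cost returns such an $(R^*,t^*)$ in $\mathrm{poly}(|S|)=(\log(n/\delta)/\eps)^{O(1)}$ time. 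Chaining the two coreset inequalities (with all input weights equal to $1$) through this $16$-approximation gives
\[
\sum_{i=1}^n\dist(R^*p_i-t^*,\ell_i)\le\frac{16(1+\eps)}{1-\eps}\cdot\min_{(R,t)\in\alignments}\sum_{i=1}^n\dist(Rp_i-t,\ell_i)\in O(1)\cdot\OPT,
\]
as required. The main obstacle I expect is the bookkeeping in the composability step: proving cleanly that re-weighting a coreset and re-reducing it stays within the hypotheses of Theorem~\ref{theorem:coreset}, and controlling the accumulated multiplicative error and failure probability across the $O(\log n)$ levels so that $|S|$ stays poly-logarithmic; once that is in place, the merge-and-reduce plumbing and the call to Algorithm~\ref{Alg:slowApprox} are routine applications of known machinery.
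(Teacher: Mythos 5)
Your proposal is correct and follows exactly the route the paper takes: the paper's own proof is a two-sentence appeal to the composability of the coreset of Theorem~\ref{theorem:coreset} and the standard merge-and-reduce tree, followed by running the approximation algorithm on the resulting summary. You have simply filled in the details (parameter rescaling $\eps'\in\Theta(\eps/\log n)$, $\delta'\in\Theta(\delta/n)$, the weighted invocation of Algorithm~\ref{Alg:slowApprox} via Observation~\ref{obs:distToCost}, and the final chaining of the two coreset inequalities through the $16$-approximation), all of which are sound.
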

\begin{proof}
The coreset from Theorem~\ref{theorem:pointsToLinesNoMatching_proof} is composable by its definition (see Section~\ref{gen}).
The claim then follows directly for the traditional merge-and-reduce tree technique as explained in many coreset papers, e.g.~\cite{AHV04,FSS13,bent,barger2016k,feldman2015more}.
\end{proof}

\subsection{Experimental Results}
\label{ch:ER}

The main contribution of this paper is the theorems and theoretical results in previous sections. Nevertheless, since the algorithms are relatively simple we were able to implement them in an open source library~\cite{opencode} in order to demonstrate their correctness and robustness. In this section we suggest preliminary experimental results on synthetic and real data, and some comparison to existing software.

\paragraph{Hardware.} All the following tests were conducted using MATLAB R2016b on a Lenovo Y700 laptop with an Intel i7-6700HQ CPU and 16GB RAM.

\subsubsection{Synthetic Data}
\paragraph{The input data-set} was generated by first computing a set $L = \{\ell_1,\cdots,\ell_n\}$ of $n$ random lines in the plane. then, a set $P_0$ of $n$ random points was generated, each in the range $[0,100]^2$. A set $P_1$ was then computed by projecting the $i$th point of the points in $P_0$ onto its corresponding line $\ell_i$ in $L$. The set $P_1$ was then translated and rotated, respectively, by a random rotation matrix $R\in\REAL^{2\times 2}$ and a translation vector $t\in[0,10]^2$ to obtain a set $P_2$. Then, a random noise $N_i\in[0,1]^2$ was added for every point $p_i\in P_2$, and  additional noise $\hat{N}_i\in[0,r]^2,r=200$ was added to a subset $S\subseteq P_2$ of $|S| = k\cdot |P_2|$ points (outliers), where $k\in[0,1]$ is the percentage of outliers. The resulting set was denoted by $P=\br{p_1,\cdots,p_n}$.

The randomness in the previous paragraph is defined as follows. Let  $N(\mu,\sigma)$ denote a Gaussian distribution with mean $\mu$ and standard deviation $\sigma$. Similarly, let $U(r)$ denote uniform distribution over $[0,r]$. For every $i\in[n]$ and $j\in[2]$, the line $\ell_i=\br{x\in\REAL^2 \mid a_i^Tx=b_i}$ was constructed such that $a_{i,j} \in N(1/2,1/2), \norm{a_i}=1$ and $b_i \in U(10)$. For every $i\in[2],j\in[2]$, we choose $t_j\in U(10)$, $N_{i,j} \in N(1/2,1/2)$, and $\hat{N}_{i,j}\in N(r/2,r/2)$.
The rotation matrix was chosen such that $R_{1,1}\in U(1)$ and $R^TR=I$.

\paragraph{Experiment. }The goal of each algorithm in the experiment was to compute $\tilde{R}$ and $\tilde{t}$, approximated rotation $R$ and translation $t$, respectively, using only the set $L$ of lines and the corresponding noisy set $P$ of points. Two tests were conducted, one with a constant number of input points $n$, with increasing percentage $k$ of outliers, and another test with a constant value for $k$, while increasing the number $n$ of input points. For the first test we compare the threshold M-estimator of the sum of fitting distances
\begin{equation} \label{minHuber}
\cost(P,L,\tilde{R},\tilde{t}) = \sum_{i=1}^n \min \{\dist(\tilde{R}p_i-\tilde{t},\ell_i),th\}.
\end{equation}
For the second test we compare the running time (in second) of the suggested methods.

\paragraph{Algorithms. }We apply each of the following algorithms that gets the pair of sets $P$ and $L$ as input.\\
\textbf{Our \fastalgname} samples a set of $\sqrt[3]{n}$ points $P'$ from $P$. Let $L'\subseteq L$ be lines in $L$ corresponding to the points in $P'$. It then runs Algorithm~\ref{Alg:slowApprox} with the set of corresponding point-line pairs from $P'$ and $L'$. The algorithm returns a set  $C$ of alignments. We then chose the alignment $(\tilde{R},\tilde{t})\in C$ that minimizes Eq.~\eqref{minHuber} using $th=10$. \label{ER:ApproxAlg}\\
\textbf{Algorithm \LMSalg}returns an alignment $(\tilde{R},\tilde{t})$ that minimizes the sum of \emph{squared} fitting distances (Least Mean Squared), i.e., $(\tilde{R},\tilde{t}) \in \argmin_{(R,t)}\sum_{i=1}^n \dist^2(Rp_i-t,\ell_i)$, where the minimum is over every valid alignment $(R,t)$. This is done by solving the set of polynomials induced by the sum of squared fitting distances function using the Lagrange Multipliers method. \label{ER:OPTAlg}\\
\textbf{Adaptive \ransacalg + Algorithm \LMSalg}is an iterative method that uses $m$ iterations, proportional to the (unknown) percentage of outliers. See~\cite{raguram2008comparative}. At the $j^{th}$ iteration it samples 3 pairs of corresponding points and lines from $P$ and $L$, respectively, it then computes their optimal alignment $(R_j,t_j)$ using Algorithm \LMSalg, and then updates the percentage of inliers found based on the current alignment. A point $p_i$ is defined as an inlier in the $j^{th}$ iteration if $\dist(R_jp_i-t_j,\ell_i)<th$, where $th=10$. It then picks the alignment $(\tilde{R},\tilde{t})$ that has the maximum number of inliers over every alignment in $\{(R_1,t_1),\cdots,(R_m,t_m)\}$. \label{ER:RANSACOPTAlg}\\
\textbf{Adaptive \ransacalg + our \fastalgname}similar to the previous Adaptive \ransacalg Algorithm, but instead of using Algorithm \LMSalg, it uses our \fastalgname to compute the alignment of the triplet of pairs at each iteration. \label{ER:RANSACApproxAlg}

Each test was conducted $10$ times. The average sum of fitting distances, along with the standard deviation are shown in Fig.~\ref{fig:Error_N50}, Fig.~\ref{fig:Error_N100} and Fig.~\ref{fig:Error_N300}. The average running times, along with the standard deviation, are shown in Fig.~\ref{fig:time_OL0}, Fig.~\ref{fig:time_OL01} and Fig.~\ref{fig:time_OL05}.

\begin{figure*}[t!]
    \centering
    \begin{subfigure}[t]{0.32\textwidth}
        \centering
        \includegraphics[width=\textwidth]{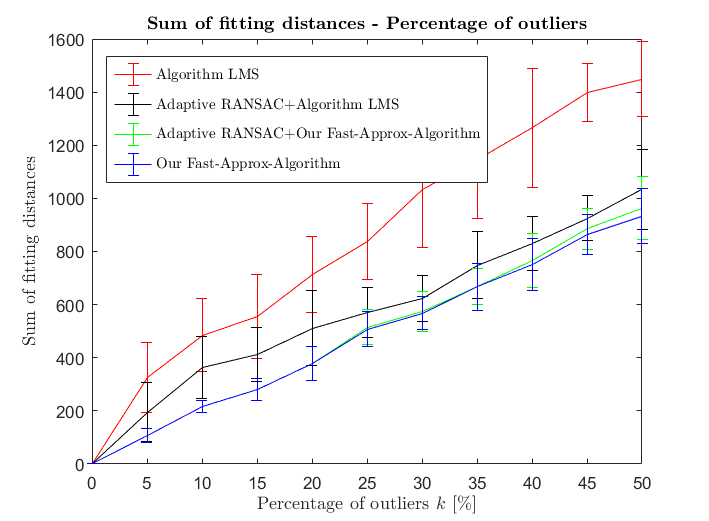}
        \caption{$n=50$.}
        \label{fig:Error_N50}
    \end{subfigure}%
    \begin{subfigure}[t]{0.32\textwidth}
        \centering
        \includegraphics[width=\textwidth]{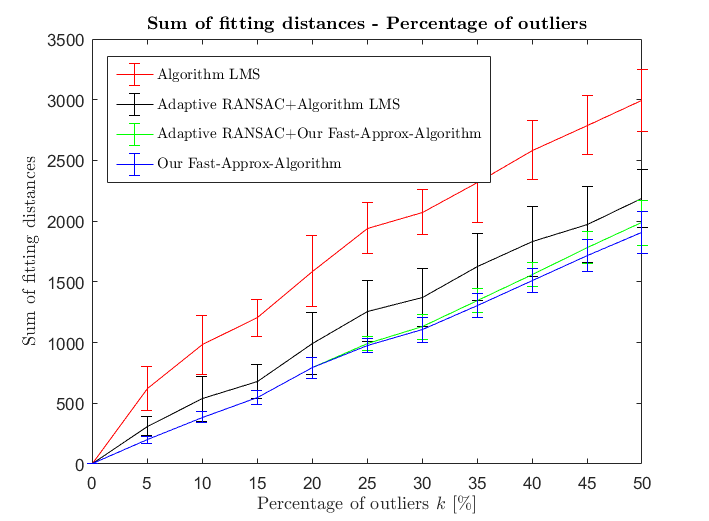}
        \caption{$n=100$.}
        \label{fig:Error_N100}
    \end{subfigure}
    \begin{subfigure}[t]{0.32\textwidth}
        \centering
        \includegraphics[width=\textwidth]{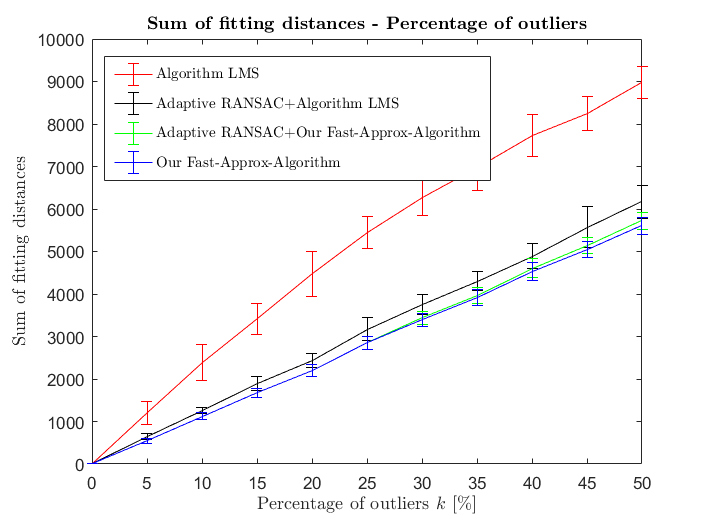}
        \caption{$n=300$.}
        \label{fig:Error_N300}
    \end{subfigure}
    \caption{Error Comparison. The $x$-axes represents the percentage $k\in [0,100]$ of outliers. The $y$-axes represents the average sum of fitting distances over $10$ tests. The number of input points is denoted by $n$. The test was conducted several times with different values for $n$.}
   	\label{fig:ErrorFig}
\end{figure*}

\begin{figure*}[t!]
    \centering
    \begin{subfigure}[t]{0.32\textwidth}
        \centering
        \includegraphics[width=\textwidth]{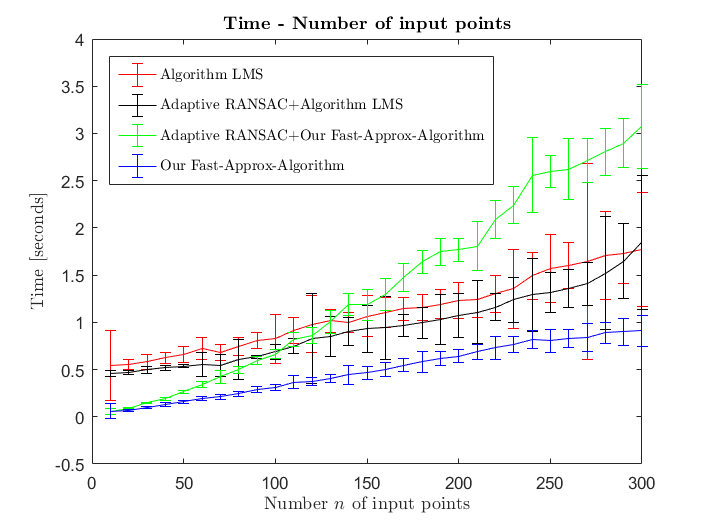}
        \caption{$k=0\%$.}
        \label{fig:time_OL0}
    \end{subfigure}%
    \begin{subfigure}[t]{0.32\textwidth}
        \centering
        \includegraphics[width=\textwidth]{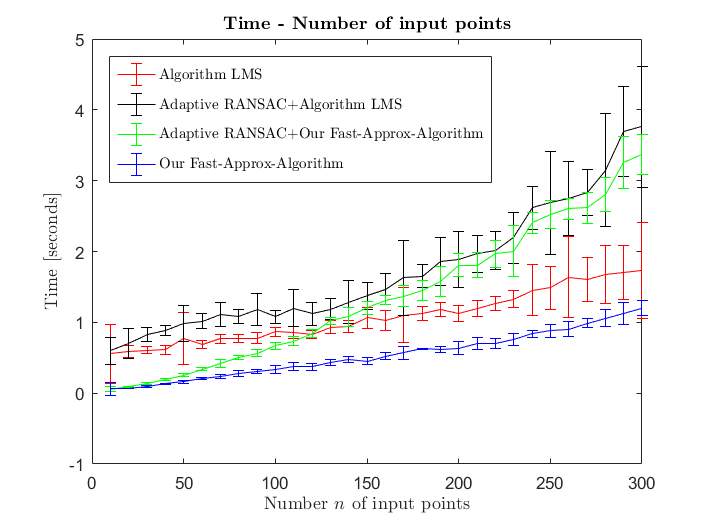}
        \caption{$k=10\%$.}
        \label{fig:time_OL01}
    \end{subfigure}
    \begin{subfigure}[t]{0.32\textwidth}
        \centering
        \includegraphics[width=\textwidth]{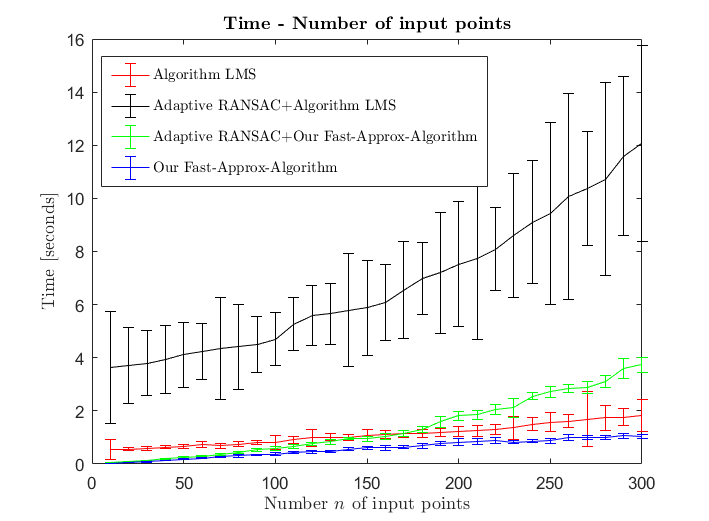}
        \caption{$k=50\%$.}
        \label{fig:time_OL05}
    \end{subfigure}
    \caption{Time Comparison. The $x$-axes represents the number $n$ of input points. The $y$-axes represents the average time, over $10$ tests, required to compute an alignment $(\tilde{R}.\tilde{t})$. The percentage of outliers is denoted by $k$. The test was conducted several times with different values for $k$.}
	\label{fig:TimeFig}
\end{figure*}

\subsubsection{Real Data}
We have conducted a test with real-world data to emphasize the potential use of our algorithm in real-world applications; See video in~\cite{DemoVideo} or in the supplementary material.


\paragraph{Experiment: Potential application for Augmented Reality.} A small camera was mounted on a pair glasses. The glasses were worn by a person (me), while observing the scene in front of him, as shown in Fig~\ref{fig:UrsaMajor}. The goal was to insert 2D virtual objects into the video of the observed scene, while keeping them aligned with the original objects in the scene.

\paragraph{The experiment.} At the first frame of the video we detect a set $P$ of ''interesting points" (features) using a SURF feature detector~\cite{bay2006surf}, and draw virtual objects on top of the image. We track the set $P$ throughout the video using the KLT algorithm~\cite{lucas1981iterative}. Let $Q$ denote the observed set of points in a specific frame. In every new frame we use Hough Transform~\cite{duda1972use} to detect a set $L$ of lines, and match them naively to the set $P$: every point $p\in P$ is matched to its closest line $\ell \in L$ among the yet unmatched lines in $L$. We then apply the following algorithms. In practice, we noticed that the approach of detecting a set of \textbf{lines} in the currently observed image rather than detecting a set of interest \textbf{points}, is more robust to noise, since a line in the image is much more stable than a point / corner.

\noindent\textbf{Algorithm \LMSalg}begins similar to Algorithm \LMSalg from the previous test. We then apply the output alignment of this algorithm to the initial virtual object, in order to estimate its location in the current frame.

\noindent\textbf{Adaptive \ransacalg -Homography }gets the paired sets $P$ and $Q$, and computes a Homography mapping~\cite{marchand2016pose} represented as a matrix $H\in\REAL^{3\times 3}$. This is an iterative method that uses $m$ iterations, proportional to the (unknown) percentage of outliers. At the $j^{th}$ iteration it samples $4$ pairs of corresponding points from $P$ and $Q$, respectively, it then computes a Homography mapping $H_j$ using those $4$ pairs, and then updates the percentage of inliers found based on the current alignment. The $i$th pair $(p_i,q_i)$ for every $i\in [n]$ is defined as an inlier in the $j^{th}$ iteration if $\norm{p_i'-q_i}<1$, where $p_i'$ is the result of applying $H_j$ to the point $p_i$. It then picks the Homography $\tilde{H}$ that has the maximum number of inliers over every Homography in $\{H_1,\cdots,H_m\}$.
We estimate the location of the virtual object in the current frame by applying $\tilde{H}$ to the initial virtual object.
We estimate the location of the virtual object in the current frame using $H$.
\textbf{Our \fastalgname}begins similar to our \fastalgname from the previous test. We then apply the output alignment of this algorithm to the initial virtual object, in order to estimate it's location in the current frame.


\section{Discussion}
As shown in Fig~\ref{fig:ErrorFig}, the sum of fitting distances (error) and the error standard deviation of our \fastalgname algorithm is consistently lower than the error and standard deviation of the state of the art methods. The error and the standard deviation of the state of the art methods go respectively up to $3$ times and $5$ times higher than those of our algorithm. The significance of the error and standard deviation gaps between the methods is emphasized in the attached video clip where the fitting error of the competitive algorithms looks very noisy, while the shape in our algorithms seems stable.

As shown in Fig~\ref{fig:TimeFig}, also the running time and its standard deviation for our \fastalgname algorithm is consistently and significantly lower than the state of the art methods.
In most cases, the running time and its standard deviation for the state of the art methods go respectively up to $5$ times and $4$ times higher compared to those of our algorithm. In other cases the gap is even larger.
Due to that, we had to compute the results of the competitive algorithms off-line, as they do not support the real-time update.

Furthermore, as demonstrated in the graphs, the sum of fitting distances, the running time and the standard deviation of the \ransacalg+\fastalgname was significantly reduces compared to the \ransacalg+Algorithm \LMSalg. This implies that our algorithm can also be used to enhance other state of the art methods.
 Notice that the error ratio between our algorithm and Algorithm \LMSalg can be made arbitrary large by increasing the value of $r$ (the range of the noise added to the outliers).
This holds since Algorithm \LMSalg is sensitive to outliers, while our algorithm is robust to outliers, due to the use of the M-estimator.

\end{document}